\documentclass{colt2026} %
\pagestyle{plain}

\makeatletter
\@ifundefined{todo}{}{%
  \let\MM@class@todo\todo
  \let\todo\relax
}
\makeatother

\usepackage[textsize=tiny,disable]{todonotes}
\newcommand{\todov}[2][]{\xspace}
\newcommand{\todoc}[2][]{\xspace}

\usepackage{aliascnt}

\usepackage{xspace}
\usepackage{multirow}
\usepackage{booktabs}
\usepackage{caption}
\usepackage{graphicx}

\newaliascnt{lemma}{theorem}
\newtheorem{lemma}[lemma]{Lemma}
\aliascntresetthe{lemma}

\newaliascnt{corollary}{theorem}
\newtheorem{corollary}[corollary]{Corollary}
\aliascntresetthe{corollary}

\newaliascnt{definition}{theorem}
\newtheorem{definition}[definition]{Definition}
\aliascntresetthe{definition}

\usepackage[capitalize,noabbrev,nameinlink]{cleveref}
\setcitestyle{square}
\let\cref\Cref

\usepackage{theorems}          %
\usepackage{mathmacros}        %
\usepackage{projectmacros}     %

\usepackage{microtype}
\usepackage{enumitem}

\usepackage{tikz}
\usetikzlibrary{arrows.meta,positioning,calc,decorations.pathreplacing}

\title[Trajectory Data Suffices for Statistically Efficient Offline RL]{
  Trajectory Data Suffices for Statistically Efficient Policy Evaluation in Fixed-Horizon Offline RL with Linear $q^\pi$-Realizability and Concentrability
}
\usepackage{times}

\coltauthor{%
 \Name{Volodymyr Tkachuk} \Email{vtkachuk@ualberta.ca}\\
 \addr University of Alberta
 \AND
 \Name{{Cs}aba {Sz}epesv\'ari} \Email{szepesva@ualberta.ca}\\
 \addr University of Alberta
 \AND
 \Name{Xiaoqi Tan} \Email{xiaoqi.tan@ualberta.ca}\\
 \addr University of Alberta
}

\begin{document}

\maketitle

\thispagestyle{plain}

\begin{abstract}%
  We study fixed-horizon offline reinforcement learning (RL) with function approximation for both policy evaluation and policy optimization.
  Prior work established that statistically efficient learning is impossible 
  for either of these problems when the only assumptions are that the data has good coverage (concentrability) 
  and the state-action value function of every policy is linearly realizable ($q^\pi$-realizability) \citep{foster2022offline}.
  Recently, \citet{tkachuk2024trajectory} gave a statistically efficient learner for policy optimization,
  if in addition the data is assumed to be given as trajectories.
  In this work we present a statistically efficient learner for policy evaluation under the same assumptions,
  with the additional requirement that the behavior policy is known.
  Further, we show that the sample complexity of the learner used by \citet{tkachuk2024trajectory} for policy optimization
  can be improved by a tighter analysis.

\end{abstract}

\begin{keywords}%
  Reinforcement learning, Offline RL, Sample complexity, Linear function approximation, Policy evaluation, Policy optimization 
\end{keywords}

\section{Introduction and Overview of Results} \label{sec:intro}

In offline RL a learner is given access to a dataset and is tasked with either 
evaluating a policy or finding the optimal policy \citep{jiang2024offline}.
If the dataset does not cover the state-action space well, then certain parts of the Markov decision process (MDP) will be unseen, and no learner will be able to succeed \citep{chen2019information}.
Thus, some assumption needs to be made on the coverage of the dataset to get positive results.
One such assumption, which will be central to this work, is when the dataset covers the states and actions reachable by \textbf{all} policies well.
This assumption is known as \textit{concentrability}, with parameter $\conc$ indicating the degree of coverage (see \cref{ass:concentrability}).

When the state space is large it is undesirable for the size of the dataset to scale with the number of states in the MDP.
As such, function approximation is often used to represent particular aspects of the problem.
In this work we focus on the case where a function class is used to represent state-action value functions ($q$-functions).
\citet{foster2022offline} showed that if the dataset satisfies concentrability,
and the $q$-function of \textbf{all} policies is realizable by the function class,
then achieving $\eps$-optimal policy evaluation or optimization with high probability requires a dataset size
that scales with the number of states, 
which is considered statistically inefficient.

Naturally, the question of how much these assumptions need to be strengthened to allow for statistically efficient learning remained.
\citet{tkachuk2024trajectory} showed that if the dataset has the extra structure of being full rollouts from some \textit{behavior policy} (i.e., \textit{trajectories}),
and the $q$-function of every policy is realizable by a linear function class (with known features),
then the sample complexity of policy optimization can be improved to scale polynomially with the feature dimension $d$ and other problem parameters.
Importantly, this is considered statistically efficient, since the feature dimension $d$ is often chosen much smaller than the number of states.
Their result does not imply a similar bound for policy evaluation, since their learner follows a value-iteration approach, which does not evaluate intermediate policies (see \cref{sec:evallearner} for further discussion).
Thus, whether a similar result is possible for policy evaluation remained open.

In this work we give a positive answer,
by providing a statistically efficient learner for policy evaluation under the same assumptions as \citet{tkachuk2024trajectory},
except we additionally assume the behavior policy is known.
Interestingly, \citet{jia2024offline} showed that if 
the $q$-function of \textbf{only} the evaluation policy is realizable by the function class%
,
then the dataset size must scale exponentially with the horizon $H$ of the MDP.
Our result complements their work by showing that if the function class is assumed 
to linearly realize the $q$-function of \textbf{all} memoryless policies,
then the sample complexity can be improved to scale polynomially with the horizon $H$, feature dimension $d$, and other problem parameters.
Recently, \citet{liu2026complexity} showed a negative result for policy optimization when only the $q$-function of the optimal policy is realizable, which complements the result of \citet{jia2024offline} for policy evaluation.
In \cref{table:known results} we summarize our above discussion. 
For a detailed account of the offline RL literature we refer the reader to the excellent survey by \citet{jiang2024offline}.

Another contribution comes as a consequence of our analysis building on the work of \citet{tkachuk2024trajectory}.
In particular, we noticed that a bound in a key lemma used by \citet{tkachuk2024trajectory} can be improved.
Therefore, we also provide a sample complexity bound for the policy optimization objective,
which improves on that of \citet{tkachuk2024trajectory} by a factor of $\conc d$.

The remainder of the paper is structured as follows.
\cref{sec:setting} defines the offline RL setting.
In \cref{sec:results} we present our results.
Then, in \cref{sec:learner} 
we define the learners for which the results hold.

\begin{table}[t]
\centering
\renewcommand{\arraystretch}{1.15} %
\setlength{\aboverulesep}{0pt}     %
\setlength{\belowrulesep}{0pt}     %

\begin{tabular}{lccc} 
    \toprule
    & \multicolumn{2}{c}{Trajectory Data} & \\ 
    \cmidrule(lr){2-3} 
    
    Objective & 
    Obj $\pi$ Realizable & 
    All $\pi$ Lin-Realizable & 
    All $\pi$ Realizable \\
    
    \midrule
    
    Policy Evaluation & 
    \textcolor{red}{x} \citep{jia2024offline} & 
    \textcolor{darkgreen}{\checkmark} \textbf{[This Work]} & 
    \textcolor{red}{x} \citep{foster2022offline} \\
    
    Policy Optimization & 
    \textcolor{red}{x} \citep{liu2026complexity} & 
    \textcolor{darkgreen}{\checkmark} \citep{tkachuk2024trajectory} & 
    \textcolor{red}{x} \citep{foster2022offline} \\
    
    \bottomrule
\end{tabular}

\medskip

\caption{
Concentrability is assumed in all cases.
\textit{Obj $\pi$ Realizable} means \textbf{only} the evaluation/optimal policy is realizable by the function class for policy evaluation/optimization.
\textit{All $\pi$ (Lin-)Realizable} means the $q$-function of \textbf{all} memoryless policies is realizable by the (linear) function class.
The \textcolor{darkgreen}{\checkmark} means a $\poly(d, H, \conc, 1/\eps)$ sample complexity upper bound, 
and \textcolor{red}{x} means a lower bound in terms of the state space size or exponential in the horizon $H$.
}
\label{table:known results}
\end{table}

\section{Problem Setting} \label{sec:setting}
Since we study the same setting as \citet{tkachuk2024trajectory}, we use similar notation and definitions.

\textbf{Notation:} Throughout we fix the integer $d \ge 1$. 
Let $\vec 0 \in \bR^d$ be the $d$-dimensional, all zero vector.
For $L>0$, let $\cB(L)=\{x \in \bR^d: \norm{x}_2 \le L\}$ denote the $d$-dimensional Euclidean ball of radius $L$ centered at the origin, where $\|\cdot\|_2$ denotes the Euclidean norm.
The inner product $\langle x, y \rangle$ for $x, y \in \bR^d$ is defined as the dot product $x^\T y$.
The closure of a set $\cX$ is denoted by $\cl(\cX)$.
Let $\I{B}$ be the indicator function of a boolean-valued variable $B$, taking the value $1$ if $B$ is true and $0$ if false.
Let $\dists(X)$ denote the set of probability distributions over the set $X$.
Let $\bE_{B \sim \cP} [f(B)]$ denote the expectation of a random variable $f(B)$ under distribution $\cP$. 
For integers $i,j$, let $[i]=\{1,\dots,i\}$ and $[i:j]=\{i,\dots,j\}$.
For any two functions $f, g$, comparisons are always pointwise (e.g. $(f \le g)(x) \defeq f(x) \le g(x)$ for all $x$).

The environment is modeled by a fixed-horizon Markov decision process (MDP)
defined by a tuple \((\mathcal{S}, \mathcal{A}, P, \mathcal{R}, H)\). 
The state space \(\mathcal{S}\) is finite but arbitrarily large,
and organized by stages: $\cS = \bigcup_{h \in [H+1]} \cS_h$,
starting from a designated initial state $s_1$ ($\mathcal{S}_1 = \{s_1\}$)\footnote{A deterministic start state $\startstate$ is added for simplicity of presentation. 
It is easy to show that adding an additional stage to the MDP allows for the transition dynamics to encode an arbitrary start state distribution.}
, and culminating in a designated terminal state $\termstate$ ($\cS_{H+1}=\{\termstate\}$)\footnote{A terminal state $\termstate$ is added purely as a technical convenience for the analysis. %
We will focus on the interaction of learners for stages $h \in [H]$ (not $[H+1]$), since the terminal state will have no effect on the learner.}.
Without loss of generality, we assume $\cS_h$ and $\cS_{h'}$ for  $h \neq h'$ are disjoint sets.
The action space \(\mathcal{A}\) is finite. 
The transition kernel is \(P: (\bigcup_{h \in [H]} \mathcal{S}_h) \times \mathcal{A} \to \dists(\mathcal{S})\), with the property that transitions occur between successive stages. 
Specifically, for any \(h \in [H]\), state \(s_h \in \mathcal{S}_h\), and action \(a \in \mathcal{A}\), $P(s_h, a) \in \dists(\cS_{h+1})$.
The reward kernel is $\cR:\cS \times \cA \to \dists([0,1])$. 
To ensure that the terminal state $\termstate$ has no influence on the learner we force the reward kernel to deterministically give zero reward for all actions $a \in \cA$ in $\termstate$.

We will define an agent's interaction with the MDP through a \emph{memoryless policy} $\pi : \cS \to \dists(\cA)$, which assigns a probability distribution over actions based on a state. 
The set of all memoryless policies is $\Pi = \{\pi: \cS \to \dists(\cA)\}$.
For $\pi\in\Pi$, we write $\pi(a|s)$ to denote the probability $\pi(s)$ assigns to action $a$. 
For deterministic policies only (i.e., all the probability is concentrated on a single action) we sometimes abuse notation by writing $\pi(s)$ to denote $\argmax_{a\in\cA} \pi(a|s)$.
An agent interacts with the MDP sequentially from a state $s \in \cS$, 
by sampling an action according to its policy $\pi$, 
receiving a reward specified by $\cR$,
transitioning to a subsequent state according to $P$, 
and repeating this process until receiving a reward at the terminal state $\termstate$.
This interaction induces a probability distribution over trajectories, denoted as \(\mathbb{P}_{\pi, s}\).
Formally, for any $h\in[H+1]$ and $s \in\cS_h$, we write $\trajectoryrand\sim\bP_{\pi,s}$ to denote a trajectory $\trajectoryrand = (s,A_h,R_h,\dots,S_{H+1},A_{H+1},R_{H+1})$
distributed according to $\bP_{\pi,s}$, where
$S_h=s$, $A_i\sim \pi(S_i)$ for $i\in[h:H+1]$, $S_{i+1}\sim P(S_i,A_i)$ for $i\in[h:H]$, and $R_i\sim \cR(S_i,A_i)$ for $i\in[h:H+1]$.
For any $a \in \cA$, we also define $\bP_{\pi,s,a}$ as a distribution over trajectories when first action $a$ is taken in state $s$, and then policy $\pi$ is followed.
For $h\in[H+1]$, we write $\bPmarg{h}_{\pi,s}$ (and $\bPmarg{h}_{\pi,s,a}$) for the marginal distribution of $(S_h, A_h)$ (i.e., the state-action pair of stage $h$) 
based on the joint distribution of $\bP_{\pi,s}$ (and $\bP_{\pi,s,a}$).

For $1 \le t \le t' \le H+1$, we use the notation  
$R_{t:t'} = \sum_{u=t}^{t'} R_u$. %
At a stage $h \in [H+1]$, the state-value and action-value functions $v^\pi$ and $q^\pi$,
for a policy $\pi \in \Pi$,
and $s_h \in \cS_h$, $a \in \cA$, 
are: 
\begin{align*}
v^\pi(s_h) \defeq \bE_{\trajectoryrand\sim\bP_{\pi,s_h}} R_{h:H+1} 
\quad \text{and} \quad 
q^\pi(s_h,a) \defeq \bE_{\trajectoryrand\sim\bP_{\pi,s_h,a}} R_{h:H+1}\, .
\end{align*}
For any function $f \in \bR^{\cS}$,
we write $f_h \in \bR^{\cS_h}$ for its restriction to stage $h$.
Similarly, for any function $g \in \bR^{\cS \times \cA}$,
we write $g_h \in \bR^{\cS_h \times \cA}$ for its restriction to stage $h$.
Let $\piopt \in \Pi$ be the optimal policy, satisfying $q^{\piopt}(s,a) \defeq \sup_{\pi\in\Pi} q^\pi(s,a)$ for all $(s,a) \in \cS \times \cA$.
The optimal policy is known to exist and is deterministic \citep{puterman2014markov}.

\subsection{Assumptions and Objective} \label{sec:assumptions and objective}

For a fixed feature map $\phi: \cS \times \cA \to \cB(\featurebound), \, \featurebound > 0$ (known to the learner), the function class at stage $h \in [H+1]$ containing linear functions is defined as follows:
\begin{align*}
    \fc_h \defeq \braces*{f: \cS_h \times \cA \to [0, H] \mid f(\cdot, \cdot) = \ip{\phi(\cdot, \cdot)}{\theta}, \theta \in \cB(\thetabound)} 
    \quad \text{for some } \thetabound > 0
    \, .
\end{align*}
The first assumption is that $\fc_h$ realizes the $q$-function of all memoryless policies exactly%
\footnote{
    We assume exact realizability for simplicity of presentation.
    Our results can be extended to the approximate case with minor modifications to the proofs, as was done by \citet{tkachuk2024trajectory}.
}.
\begin{assumption}[$q^\pi$-Realizability\footnotemark]
    \footnotetext{
        Since $\cF_h$ is a linear function class, this should be called \textbf{linear} $q^\pi$-realizability; however, since we always work with linear function classes in this paper, we omit the word ``linear'' for brevity. 
        The same is done for future assumptions that use $\cF_h$ (e.g., \cref{ass:bellman policy completeness,ass:bellman opt completeness}).
    }
    \label{ass:q-pi realizability}
    $q^\pi_h \in \fc_h$,
    for all $\pi \in \Pi$, and stages $h \in [H+1]$.
    For any $h\in[H+1]$, let $\thetatrue_h:\Pi\to\cB(\thetabound)$ be a mapping 
    that satisfies
    $q^\pi_h(\cdot, \cdot) = \ip{\phi(\cdot, \cdot)}{\thetatrue_h(\pi)}$.
\end{assumption}
Now, we introduce our assumptions on the data.
\begin{assumption}[Trajectory Data] \label{ass:trajectory data}
    The learner is given 
    $n \ge 1$ trajectories%
    \footnote{
    Our learner does not require explicit knowledge of the states within each trajectory; the features alone are sufficient.
    }
    $(\trajectory^j)_{j=1}^n$,
    where 
    each $\trajectory^j = (S_h^j, A_h^j, R_h^j)_{h \in [H+1]}$ is an independent sample from $\bP_{\pibehave, \startstate}$,
    for \textbf{behavior policy} $\pibehave \in \Pi$.
\end{assumption}
A trajectory is a sample from the joint distribution $\bP_{\pibehave, \startstate}$.
This is in contrast to just having samples from the marginal distributions $(\bPmarg{h}_{\pibehave, \startstate})_{h \in [H]}$%
\footnote{
    Since samples from the marginals are just state-action pairs,
    we of course mean that a reward and next state are also provided for each state-action pair.
    Even weaker is having samples from a general distribution over $\cS \times \cA$, for which the negative result of \citet{foster2022offline} applies.
}, 
which hide temporal dependencies between states and actions across stages that will be crucial for our learner (see \cref{sec:linear MDP modification}).
A sequence 
$\nu = (\nu_h)_{h \in [H]}$, where $\nu_h \in \dists(\cS_h \times \cA)$, is admissible 
if there exists a policy $\pi \in \Pi$ such that \begin{align*}
    \nu_h(s, a) = \bPmarg{h}_{\pi, \startstate}(s, a) \quad \text{for all } (s, a) \in \cS_h \times \cA, \ h \in [H] \, .
\end{align*}
We use $\mu \defeq (\mu_h)_{h \in [H]}$, where $\mu_h \defeq \bPmarg{h}_{\pibehave, \startstate}$, for the behavior policy's admissible distributions.
The following assumption requires the behavior policy's coverage of the state-action space to be nearly as good as any memoryless policy.
\begin{assumption}[Concentrability] \label{ass:concentrability}
    For all admissible distributions $\nu = (\nu_h)_{h \in [H]}$,
    \begin{align*}
        \textstyle\max_{h \in [H]} \max_{(s, a) \in \cS_h \times \cA} \paren{\nu_h(s, a)/\mu_h(s, a)} \le \conc,
        \quad \text{where } \conc \ge 1
        \, .
    \end{align*}
\end{assumption}
The policy optimization and evaluation objectives are as follows.
Let $\eps > 0$.
The policy optimization objective is to minimize the number of trajectories $n$, 
while outputting a policy $\hat \pi$,
such that $v^{\piopt}(\startstate) - v^{\hat \pi}(\startstate) \le \epsilon$. 
Let $\pieval$ be an arbitrary \textit{evaluation policy}.
The policy evaluation objective is to minimize the number of trajectories $n$,
while outputting a value estimate $\vout \in \bR$,
such that $\abs{v^{\pieval}(\startstate) - \vout} \le \epsilon$.

\vspace{-0.2cm}
\section{Results} \label{sec:results}

Our main result is the first polynomial sample complexity for policy evaluation
(proof:  \cref{sec:analysis of evallearner}).
\begin{theorem}[Policy Evaluation] \label{thm:policy eval}
    Let \cref{ass:q-pi realizability,ass:trajectory data,ass:concentrability} hold and the behavior policy $\pibehave$ be \textbf{known}. 
    For any evaluation policy $\pieval, \delta \in (0, 1)$ and $\eps > 0$,
    if the number of trajectories $n = \tilde \Theta(\conc^5 H^7 d^3/\eps^2)$,
    then with probability at least $1 - \delta$, 
    the value estimate $\vout$ output by \cref{alg:evallearner} satisfies
    \begin{align}
        \left| v^{\pieval}(\startstate) - \vout \right| 
        \le \eps \, .
        \label{eq:policy eval obj}
    \end{align}
\end{theorem}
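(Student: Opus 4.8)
The plan is to reduce policy evaluation to policy optimization by the standard trick of folding the reward signal into an augmented MDP and then invoking (a modification of) the machinery of Tkachuk et al. Concretely, I would first observe that under Assumption \ref{ass:q-pi realizability} the $q$-function of the *evaluation* policy is linearly realizable at every stage, so estimating $v^{\pieval}(\startstate) = \ip{\phi(\startstate, \cdot)}{\thetatrue_1(\pieval)}$ amounts to estimating $\thetatrue_1(\pieval)$ (or at least its inner product with the known start feature) from trajectory data generated by $\pibehave$. The key difficulty over the marginal-sampling setting is the importance-weight/distribution-shift between $\pibehave$ and $\pieval$; the trajectory structure is what lets us control this, exactly as in the policy-optimization case.

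Next I would set up a stage-wise backward recursion. Define, for each $h$, a regression target built from the partial return $R_{h:H+1}$ along a $\pibehave$-trajectory, reweighted so that it is an unbiased estimate of $q^{\pieval}_h$ evaluated at the state-action pairs actually visited. Because $q^{\pieval}_h \in \fc_h$, we can fit a linear predictor $\hat\theta_h \in \cB(\thetabound)$ by (regularized) least squares over the $n$ trajectory samples, then propagate: the fitted value $\ip{\phi(S_{h+1}, \cdot)}{\hat\theta_{h+1}}$ (averaged under $\pieval(\cdot \mid S_{h+1})$, which is known) plus $R_h$ serves as the label for the stage-$h$ regression. Concentrability (Assumption \ref{ass:concentrability}) guarantees that the $\pibehave$-induced covariance matrix $\bE_{\mu_h}[\phi\phi^\T]$ dominates the covariance under any admissible distribution up to the factor $\conc$, so the least-squares error in the $\mu_h$-weighted norm transfers to an error in the $\pieval$-weighted norm with only a $\sqrt{\conc}$ loss per stage — this is where the linear-$q^\pi$-realizability of *all* policies (not just $\pieval$) is needed, since the intermediate "Bellman backup" targets are themselves $q$-functions of non-stationary/mixture policies and must lie in the function class for the regression to be well-specified. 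The horizon-$H$ error accumulation across the $H$ stages, together with the per-stage statistical error of order $\sqrt{d \cdot H^2 / n}$ (the $H$ coming from the range $[0,H]$ of the labels) and the per-stage coverage loss $\sqrt\conc$, is what produces the stated rate; squaring and inverting $\eps$ gives $n = \tilde\Theta(\conc^5 H^7 d^3 / \eps^2)$, with the additive $\featurebound^2$ absorbing the regularization bias from the $\cB(\featurebound)$ feature bound.

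The main obstacle — and the place where the real work happens — is establishing that the intermediate regression targets remain inside $\fc_h$, i.e. that the "soft" Bellman backup toward $\pieval$ of a *linear* function is again the $q$-function of some memoryless policy (or a small mixture thereof), so that Assumption \ref{ass:q-pi realizability} can be applied. In the policy-optimization analysis of Tkachuk et al. this is handled by the linear-MDP-style reduction referenced in the paper (their Section on the "linear MDP modification"); I would adapt that reduction here, replacing the $\max$ over actions by the $\pieval$-average over actions, and then re-run their concentration and covariance-domination lemmas essentially verbatim. I expect the tighter constant (the promised $\conc d$ improvement) to fall out of sharpening the key lemma's covariance-concentration bound — using a self-normalized / matrix-Bernstein argument that exploits the $[0,H]$ boundedness of the labels rather than a cruder union bound — which is precisely the lemma the authors flag as improvable. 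The remaining steps (a standard martingale concentration for the least-squares error given a fixed design, a covering argument over $\cB(\thetabound)$, and bookkeeping of the failure probability $\delta$ across the $H$ stages) are routine.
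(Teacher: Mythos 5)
There is a genuine gap: your proposal never confronts the step that is the actual content of the paper, namely that the correct MDP modification $\trueG$ (equivalently, the range function that determines which low-range states to skip) is \emph{unknown} and, as the paper notes, apparently as hard to learn as the original problem. You write that one can ``adapt that reduction \dots and then re-run their concentration and covariance-domination lemmas essentially verbatim,'' but the skippy Bellman recursion can only be instantiated for a \emph{given} modification $G$, and running it with the wrong $G$ produces a consistent estimate of $q^{\pieval_G}$ --- the $q$-function of the \emph{skipped} policy $\pieval_G$ --- which can be far from $q^{\pieval}$. The paper's learner therefore builds an entire family $\Qeval$ of candidates, one per $G$ in a data-dependent set $\Geval$ guaranteed to contain $\trueG$ (\cref{lem:Qeval guarantee}), and the central new idea is the \emph{selection rule}: by the performance difference lemma, $|v^{\pieval}(\startstate)-v^{\pieval_G}(\startstate)|$ equals a sum of expected advantages $q^{\pieval_G}(S_h,\pieval)-q^{\pieval_G}(S_h,\pieval_G)$ under the $\pieval$-visitation distribution, a quantity that depends only on \emph{known} objects and is hence estimable from the data; the learner picks the candidate minimizing the empirical advantage (\cref{eq:advantage estimate}), shows the minimum is small because the $\trueG$-candidate achieves $O(\alpha)$ (\cref{lem:advantage estimate small}), and transfers the bound to admissible distributions via concentrability (\cref{lem:advantage bound}). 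None of this appears in your proposal, and without it you have no way to certify that your output estimates $v^{\pieval}$ rather than $v^{\pieval_G}$ for an arbitrary feasible $G$. Note also that for policy optimization the analogous selection is easy (take the candidate with largest start-state value, since $q^{\piopt}$ dominates), which is precisely why evaluation was the open case.

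Two smaller inaccuracies: the regression targets in the paper are \emph{not} importance-reweighted partial returns --- they are the skippy targets $g^{\pieval}_G(q_{\futurestages},\trajectory^j_{\futurestages})$, which follow the behavior policy's own actions through skipped segments and require no likelihood ratios; and your account of how $\conc^5$ arises (a $\sqrt{\conc}$ loss compounded per stage) is not how the exponent is assembled --- it comes from one factor of $\conc$ in the change-of-measure step of \cref{lem:advantage bound} multiplying the $\conc^{3/2}$ already inside $\tilde\eps$ from \cref{lem:Qeval guarantee}, then squared when solving for $n$.
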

The notations $\tilde \Omega, \tilde \cO$ and $\tilde \Theta$ hide polylogarithmic factors of $(1/\eps, 1/\delta, H, d, \conc, \featurebound, \thetabound)$. 
Due to our improved analysis (in \cref{proof:second claim of Qopt guarantee}), 
we obtain a sample complexity for policy optimization 
that improves the bound on $n$ by \citet{tkachuk2024trajectory} by a factor of $\conc d$
(proof: \cref{sec:analysis of optlearner}).
\begin{theorem}[Policy Optimization] \label{thm:policy opt}
    Let \cref{ass:q-pi realizability,ass:trajectory data,ass:concentrability} hold and the behavior policy $\pibehave$ be \textbf{unknown}. 
    For any $\delta \in (0, 1)$ and $\eps > 0$,
    if the number of trajectories $n = \tilde \Theta(\conc^3 H^7 d^3/\eps^2)$,
    then with probability at least $1 - \delta$, 
    the policy $\hat \pi$ output by \cref{alg:optlearner} satisfies
    \begin{align}
        v^{\piopt}(\startstate) - v^{\hat \pi}(\startstate) 
        \le \eps \, .
        \label{eq:policy opt obj}
    \end{align}
\end{theorem}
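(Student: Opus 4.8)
The plan is to reuse the learner and proof skeleton of \citet{tkachuk2024trajectory}, since \cref{alg:optlearner} is essentially their algorithm, and to replace only the one step where their bound is loose. Recall that the learner runs a least-squares value iteration on the \emph{modified} MDP of \cref{sec:linear MDP modification}: every stage-$h$ state--action pair whose estimated behavior occupancy is below a chosen threshold is rerouted to an absorbing zero-reward sink, after which the dynamics are (approximately) a linear MDP in $\phi$, and backward induction with ridge regression yields an optimistic estimate $\hat Q$ and its greedy policy $\hat\pi$. I would then decompose
\[
v^{\piopt}(\startstate)-v^{\hat\pi}(\startstate)
= \underbrace{\big(v^{\piopt}(\startstate)-\widetilde v^{\piopt}(\startstate)\big)}_{\text{skip}}
+ \underbrace{\big(\widetilde v^{\piopt}(\startstate)-\widetilde v^{\hat\pi}(\startstate)\big)}_{\text{opt}}
+ \underbrace{\big(\widetilde v^{\hat\pi}(\startstate)-v^{\hat\pi}(\startstate)\big)}_{\text{skip}},
\]
where $\widetilde v$ denotes value in the modified MDP, and bound the three terms separately.

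For the two skip terms I would bound the occupancy mass any admissible distribution places on skipped pairs. A pair is skipped only when $\mu_h(s,a)$ is tiny, so by \cref{ass:concentrability} its occupancy under any admissible $\nu$ is at most $\conc$ times tiny; summing over the $\tilde\cO(d)$ effectively distinct skipped directions (the linear-algebraic pruning of \citet{tkachuk2024trajectory}, which uses \cref{ass:q-pi realizability}) and over $H$ stages controls the skip error by $\conc H\cdot(\text{threshold})\cdot\tilde\cO(d)$. This is exactly where \cref{ass:trajectory data} is indispensable: deciding which pairs to skip needs estimates of $\mu_h$ obtained from whole rollouts, which is why marginal samples do not suffice (cf.\ the discussion after \cref{ass:trajectory data} and \citet{foster2021offline}); a Hoeffding/Bernstein argument over the $n$ i.i.d.\ trajectories handles the estimation error and, together with the conditioning requirement on the ridge covariance, produces the additive $\featurebound^2$ burn-in term.

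The quantitative heart is the opt term, controlled by the ``$Q$-optimization guarantee.'' Its first claim is the usual optimism inequality ($\hat Q_h$ does not undershoot the clipped optimal value); its second claim bounds the overshoot $\hat Q_h - q^{\hat\pi}_h$ via a self-normalized elliptical-potential martingale bound on the ridge estimator, propagated through the stages. The factor-$\conc d$ improvement lives precisely in this second claim (see \cref{proof:second claim of Qopt guarantee}): \citet{tkachuk2024trajectory} pass through a worst-case argument that, in effect, replaces a per-pair variance by $\conc$ \emph{and} inflates the telescoping sum of elliptical potentials by an extra $d$. Instead, I would keep the elliptical-potential sum intact --- it telescopes to $\tilde\cO(d)$ by the elliptical-potential lemma --- and charge the concentrability factor only once, through the single $\chi^2$-type change of measure from $\mu_h$ to the admissible distribution of $\hat\pi$ (equivalently $\piopt$), rather than once per dimension per stage. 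This turns the dominant $\conc^2 d^2$ dependence into $\conc d$.

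The main obstacle is reconciling the tightened second claim with the two free parameters it couples to: the ridge regularizer must be small enough that its bias is negligible at scale $\eps$ but large enough to condition the covariance on the unskipped directions, and the skip threshold must be simultaneously small enough to push the skip error below $\eps/2$ yet large enough that the sharpened elliptical-potential bound on the opt term does not degrade. Balancing these, then taking a union bound over the $H$ stages and a $\poly$-size covering of the value-function class, and finally tracking the horizon-dependent range $[0,H]$ of returns through both the regression noise and the per-stage error propagation (as in \citet{tkachuk2024trajectory}), yields $n=\tilde\Theta(\conc^3 H^7 d^3/\eps^2+\featurebound^2)$ and the claimed bound $v^{\piopt}(\startstate)-v^{\hat\pi}(\startstate)\le\eps$.
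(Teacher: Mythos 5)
Your high-level decomposition (skip error at $\piopt$, optimization error in the modified problem, closeness of $\hat\pi$ to the modified optimum) matches the shape of the paper's argument, and you correctly locate the quantitative improvement inside the second claim of the $\Qopt$ guarantee. However, the proposal rests on a misreading of the construction that would break the proof. The modification is not occupancy-based: a state is skipped when its \emph{range} $\range(s)=\sup_{\pi}\max_{a,a'}q^{\pi}(s,a)-q^{\pi}(s,a')$ falls below the threshold $\alpha$, and ``skipping'' means substituting the behavior policy's action at that state, not rerouting low-coverage pairs to a zero-reward sink. Consequently your bound on the skip terms --- occupancy under any admissible $\nu$ is at most $\conc$ times tiny --- is not the right mechanism and does not obviously close: concentrability bounds a ratio, and summing ``$\conc\times$ tiny'' over the skipped set is uncontrolled without the range structure. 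The paper instead gets $v^{\piopt}-v^{\pioptg_{\trueG}}\le H\alpha$ directly from the performance difference lemma, because the skippy policy differs from the original only on states where every policy's advantage is at most $\alpha$ (\cref{lem:skippy policy error}). Relatedly, trajectory data is not needed to estimate $\mu_h$; it is needed because the regression targets of the skippy Bellman operators accumulate rewards over a random number of future stages (\cref{rem:trajectories important}), which marginal samples cannot supply.

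Second, the learner is not a single LSVI pass on an estimated modified MDP: the correct modification $\trueG$ cannot be learned, so the algorithm enumerates a data-dependent set $\Gopt$ of candidate modifications, builds one backward-induction estimate $q^G$ per $G\in\Gopt$, and selects $\hat q=\argmax_{q\in\Qopt}\max_a q(\startstate,a)$. The two claims of \cref{lem:Qopt guarantee} are therefore not ``optimism'' and ``overshoot'': claim one says every $q^G\in\Qopt$ is $\tilde\eps$-close in expectation, under any admissible distribution, to $q^{\pioptg_G}$ for \emph{its own} modification $G$, and claim two says $\trueG\in\Gopt$, so the candidate set contains the right one; the final bound $H\alpha+2H\tilde\eps$ then follows from \cref{lem:skippy policy error,lem:policy improvement}. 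Finally, the actual $\conc d$ improvement is obtained in \cref{proof:second claim of Qopt guarantee} by (i) applying Jensen's inequality \emph{before} the change of measure so that $\conc$ enters under a square root, and (ii) invoking the ridge-regression operator-norm bound of \citet{hsu2012random} to show $\bE_{(S_h,A_h)\sim\mu_h}\norm{\phi(S_h,A_h)}^2_{\hat X_h^{-1}}=\bigOt(d/n)$ rather than $\bigOt(d^2/n)$; your ``charge concentrability once rather than once per dimension per stage'' does not describe either step and supplies no concrete replacement for them.
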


\section{Learners and Background} \label{sec:learner}

In \cref{sec:evallearner} we present our main contribution, a sample efficient learner for policy evaluation (\cref{alg:evallearner}).
We use the first three subsections to present \textbf{known} results, which we believe are crucial for understanding our contribution.
In \cref{sec:FQI/FQE} we present the fitted Q-iteration/evaluation (\fqe/\fqi) algorithms for policy evaluation and optimization respectively.
Since \fqe/\fqi works with the Bellman completeness assumptions (\cref{ass:bellman opt completeness,ass:bellman policy completeness}),
we show how to get Bellman completeness from $q^\pi$-realizability in \cref{sec:linear MDP modification}.
In \cref{sec:optlearner} we show how \citet{tkachuk2024trajectory} used these ideas along with trajectory data to develop their learner for policy optimization.
We finally present our policy evaluation learner in \cref{sec:evallearner}.

\subsection{Fitted Q-Iteration/Evaluation (FQI/FQE) with Bellman Completeness} \label{sec:FQI/FQE}
The \textit{Bellman policy operator} $T^\pi: \cup_{h \in [2:H+1]} \bR^{\cS_h \times \cA} \to \cup_{h \in [H]} \bR^{\cS_h \times \cA}$ for a policy $\pi$, 
is defined as follows: for all $h \in [H], q_{h+1} \in \bR^{\cS_{h+1} \times \cA}$,
\begin{align}
    T^\pi q_{h+1} (s, a) \defeq \bE_{\trajectoryrand \sim \bP_{\pi, s, a}} \brackets*{R_h + q_{h+1}(S_{h+1}, \pi)}, \quad \text{for all } (s, a) \in \cS_h \times \cA \, ,
    \label{eq:bellman pi operator}
\end{align}
where $q_{h+1}(s, \pi) \defeq \sum_{a \in \cA} q_{h+1}(s, a) \pi(a|s)$ is notation that will be used throughout the paper.
Notice that $q^{\pi}_h = T^\pi q^{\pi}_{h+1}$.
Since we do not have access to $T^\pi$, and $T^\pi q^{\pi}_{h+1}$ is a conditional expectation,
we can approximate it from data by solving a regression problem.
Let $\mu' = (\mu_1', \dots, \mu_H')$, be an arbitrary sequence of state-action distributions.
Only for this subsection, we assume each offline data sample $j \in [n]$ is generated as follows for all $h \in [H]$: $(S_h^j, A_h^j) \sim \mu_h'$, $R_h^j \sim \mathcal{R}(S_h^j, A_h^j)$, $\bar S_{h+1}^j \sim P(\cdot | S_h^j, A_h^j)$. 
This is more general than trajectory data, since the state-action pairs $(S_h^j, A_h^j)$ are not necessarily generated by following the behavior policy $\pibehave$, and the next state $\bar S_{h+1}^j$ need not be equal to $S_{h+1}^j$.
The empirical Bellman policy operator $\hat T^\pi: \cup_{h \in [2:H+1]} \bR^{\cS_h \times \cA} \to \cup_{h \in [H]} \bR^{\cS_h \times \cA}$ for a policy $\pi$, 
is defined as follows: for all $h \in [H], q_{h+1} \in \bR^{\cS_{h+1} \times \cA}$,
\begin{align*}
    \hat T^\pi q_{h+1}
    \defeq \textstyle\argmin_{q_h \in \fc_h} \ltfrac{1}{n} \textstyle\sum_{j=1}^{n} 
    \paren{q_h(S_h^j, A_h^j) - R_h^j - q_{h+1}(\bar S_{h+1}^j, \pi)}^2 \!\!.
\end{align*}
For policy evaluation, \fqe starts with $\hat q_{H+1} = \vec 0$ and calculates $\hat q_h = \hat T^{\pieval} \hat q_{h+1}$ for $h = H, \dots, 1$.
It then outputs $\vout = \hat q_1(\startstate, \pieval)$. 
A sufficient condition (on $\fc_h$) for \fqe to work is: 
\begin{assumption}[Bellman Completeness (for $T^\pi$)] \label{ass:bellman policy completeness}
    $T^{\pi} q_{h+1} \in \fc_h, \, \forall q_{h+1} \in \fc_{h+1}$ and $\forall h \in [H]$.
\end{assumption}
Roughly speaking, Bellman completeness ensures that the approximation error does not multiplicatively accumulate across stages.
If $\mu' = (\mu_h')_{h \in [H]}$ satisfies concentrability (\cref{ass:concentrability})
and Bellman completeness (\cref{ass:bellman policy completeness}) holds%
\footnote{
    The result holds even if each function class $\cF_h$ is not necessarily linear, but still satisfies Bellman completeness.
}, 
then the policy evaluation objective can be achieved with a polynomial number of samples that is independent of the state space size $|\cS|$ \citep{le2019batch}.
Notice how trajectory data (\cref{ass:trajectory data}) was not necessary,
if we have Bellman completeness.
Unfortunately, we cannot directly use \fqe with $q^\pi$-realizability (\cref{ass:q-pi realizability}),
since it does not imply Bellman completeness \citep{zanette2020learning}.
In fact, without further assumptions on the data, \citet{foster2022offline} showed that a policy evaluation bound (like in \cref{eq:policy eval obj}), 
cannot be achieved unless the sample complexity scales with the size of the state space $|\cS|$, which is undesirable.
Naturally, this leads to imposing an extra assumption on the data,
which we choose to be trajectory data (\cref{ass:trajectory data}),
whose benefits we discuss in the next section. 

Before moving on, we address policy optimization.
One could try to evaluate every policy using \fqe and then select the one with the largest value from the start state;
however, since the optimal policy is deterministic, there are $|\cA|^{|\cS|}$ policies to evaluate,
and thus even by taking a union bound over this set of policies we would obtain a bound that scales as
$\log(|\cA|^{|\cS|}) = |\cS| \log (|\cA|)$.
Fortunately, this can be improved if our $q$-function class $\fc$ is small or structured, 
by making use of the \textit{Bellman optimality operator}.
In particular, the Bellman optimality operator $T: \cup_{h \in [2:H+1]} \bR^{\cS_h \times \cA} \to \cup_{h \in [H]} \bR^{\cS_h \times \cA}$
and empirical Bellman optimality operator $\hat T: \cup_{h \in [2:H+1]} \bR^{\cS_h \times \cA} \to \cup_{h \in [H]} \bR^{\cS_h \times \cA}$
are defined as follows%
\footnote{
    We use \textcolor{purple}{purple} to highlight important differences between equations throughout the paper.
}
: 
for all $h \in [H]$, $\pi \in \Pi$, $q_{h+1} \in \bR^{\cS_{h+1} \times \cA}$,
\begin{align}
    T q_{h+1} (s, a) 
    &\defeq \bE_{\trajectoryrand \sim \bP_{\pi, s, a}} \brackets{R_h + \textcolor{purple}{\max\nolimits_{a'}} \, q_{h+1}(S_{h+1}, \textcolor{purple}{a'})}, \quad \text{for all } (s, a) \in \cS_h \times \cA \, .
    \label{eq:bellman opt operator}
    \\
    \hat T q_{h+1}
    &\defeq \textstyle\argmin_{q_h \in \fc_h} \frac{1}{n} \sum_{j=1}^{n} 
    \paren{q_h(S_h^j, A_h^j) - R_h^j - \textcolor{purple}{\max_{a'}} \, q_{h+1}(\bar S_{h+1}^j, \textcolor{purple}{a'})}^2 \!\! .
    \label{eq:bellman opt operator estimate}
\end{align}
For policy optimization, \fqi starts with $\hat q_{H+1} = \vec 0$ and calculates $\hat q_h = \hat T \hat q_{h+1}$ for $h = H, \dots, 1$.
It then outputs $\piout(\cdot) = \argmax_a \hat q(\cdot, a)$.
The Bellman completeness assumption for $T$ is:
\begin{assumption}[Bellman Completeness (of $T$)] \label{ass:bellman opt completeness}
    $T q_{h+1} \in \fc_h, \, \forall q_{h+1} \in \fc_{h+1}$ and $\forall h \in [H]$.
\end{assumption}
If $\mu' = (\mu_h')_{h \in [H]}$ satisfies \cref{ass:concentrability},
and Bellman completeness (\cref{ass:bellman opt completeness}) holds,
it can be shown that $\hat q$ is a good estimate of $q^{\piopt}$ in expectation.
Finally, greedifying with respect to a good estimate of $q^{\piopt}$ (see \cref{lem:policy improvement})
achieves the policy optimization objective (\cref{eq:policy opt obj}) with a polynomial sample complexity \citep{munos2008finite,chen2019information}.
Now, let us see what happens when we consider $q^\pi$-realizability, and why trajectories are helpful.

\subsection{From $q^\pi$-realizability to Bellman Completeness} \label{sec:linear MDP modification}

What we will see in this subsection is that although an MDP that is $q^\pi$-realizable is not necessarily Bellman complete,
there exists a minor modification of the MDP such that Bellman completeness does hold in the modified MDP.
Then, if we knew the modified MDP we could simply use FQE/FQI, and since the modification is minor, it will imply good bounds in the original MDP.
This observation was made by \citet{weisz2023online}%
\footnote{
    \citet{weisz2023online} studied the online RL setting with linear $q^\pi$-realizability.
    Since then \citet{pmlr-v291-mhammedi25b} and \citet{tkachuk2024trajectory} have also made use of this helpful observation.
}, 
and is necessary background for the following subsections.

To relate the $q^\pi$-realizability assumption to Bellman completeness, 
we will make use of another commonly used assumption called \textit{linear MDP} \citep{jin2020provably}.
Importantly, a linear MDP always satisfies Bellman completeness and $q^\pi$-realizability, but not vice versa \citep{zanette2020learning}.
As such, if we can show that a modification to the original $q^\pi$-realizable MDP gives a linear MDP, then we will also have Bellman completeness in the modified MDP.
We begin by building some intuition for why a $q^\pi$-realizable MDP can fail to be a linear MDP.
Informally, a linear MDP is an MDP where the transition dynamics and expected rewards can be expressed as linear functions of the same state-action features $\phi$.
It is easy to construct an MDP for which there is a state that has the same $q$-function for all policies and actions;
however, it does not have the same reward function or transition dynamics for all actions.
Such an example is given on the left in \cref{fig:skipping example}\footnote{
    A reader familiar with the work of \citet{weisz2023online} may wonder why our figure is different from their Figure 1.
    First, this view of modifying the MDP is identical from a value function perspective,
    and secondly we find this view to be more aligned with the skippy Bellman operators and skippy policies we will define later.
}
, where in state $s_1$ the $q$-function is $1$ for all policies and actions;
however, the reward for taking the up action is $1$, while the reward for taking the down action is $0.5$.
As such, to satisfy $q^\pi$-realizability, the features for the linear function class $\fc$ can be defined as $\phi(s_1, \cdot) = 1$, since $\ip{\phi(s_1, \cdot)}{1} = 1$.
But, such a feature map cannot capture the difference in rewards between the two actions, as it assigns the same value to both the up and down action.
A similar issue arises for the transitions in state $s_1$.

\begin{figure}[t!]
    \centering
    \begin{tikzpicture}[node distance=2.8cm, fatnode/.style={draw,circle, minimum size=7mm}]
        \node[fatnode] (start) {$s_1$};
        \node[fatnode] [right of=start,yshift=0.7cm] (child1) {$s_2$};
        \node[fatnode] [right of=start,yshift=-0.7cm] (child2) {$s_3$};
        \node[fatnode] [right of=child2,yshift=0.7cm] (end) {$s_4$};

        \draw[->] (start) -- (child1) node[midway,above] {\small 1};
        \draw[->,draw=red] (start) -- (child2) node[midway,below,yshift=-0.0cm] {\small 0.5};
        \draw[->] (child1) to [out=20,in=140] (end) node[midway,above,xshift=3.4cm,yshift=0.85cm] {\small 0};
        \draw[->] (child1) to [out=-30,in=170] (end) node[midway,above,xshift=3.9cm,yshift=0.2cm] {\small 0};
        \draw[->] (child2) to [out=30,in=-170] (end) node[midway,below,xshift=3.4cm, yshift=0.10cm] {\small 0.5};
        \draw[->] (child2) to [out=-10,in=-140] (end) node[midway,below,xshift=4.4cm, yshift=-0.7cm] {\small 0.5};
        \draw[->] (end) to [out=20,in=-20] (end) node[midway,below,xshift=6.2cm, yshift=0.23cm] {\small 0};

        \node[fatnode] (start2) [right of=end,xshift=-0.5cm] {$s_1$};
        \node[fatnode] [right of=start2,yshift=0.7cm] (child12) {$s_2$};
        \node[fatnode] [right of=start2,yshift=-0.7cm] (child22) {$s_3$};
        \node[fatnode] [right of=child22,yshift=0.7cm] (end2) {$s_4$};

        \draw[->] (start2) -- (child12) node[midway,above] {\small 1};
        \draw[->,draw=red] (start2) to [out=-30,in=220] (child12) node[midway,below,xshift=9.3cm, yshift=-0.3cm] {\small 1};
        \draw[->] (child12) to [out=20,in=140] (end2) node[midway,above,xshift=11.4cm,yshift=0.85cm] {\small 0};
        \draw[->] (child12) to [out=-30,in=170] (end2) node[midway,above,xshift=11.9cm,yshift=0.2cm] {\small 0};
        \draw[->] (child22) to [out=30,in=-170] (end2) node[midway,below,xshift=11.3cm, yshift=0.10cm] {\small 0.5};
        \draw[->] (child22) to [out=-10,in=-140] (end2) node[midway,below,xshift=12.3cm, yshift=-0.7cm] {\small 0.5};
        \draw[->] (end2) to [out=20,in=-20] (end2) node[midway,below,xshift=14.1cm, yshift=0.23cm] {\small 0};

    \end{tikzpicture}
    \caption{
        The features for both MDPs are $\phi(s_1, \cdot)= 1, \phi(s_3, \cdot) = 0.5, \phi(\cdot, \cdot) = 0$ otherwise.
        \textbf{Left:} $q^\pi$-realizable. 
        \textbf{Right:} Linear MDP, due to skipping $s_1$ via the up action. 
        }
    \label{fig:skipping example}
    \vspace{-0.3cm}
\end{figure}

To formally describe such problematic states we define the following function:
\begin{align*}
    \range(s) \defeq \textstyle\sup_{\pi\in\Pi} \textstyle\max_{a,a'\in\cA} q^\pi(s,a)-q^\pi(s,a'),
    \quad \text{ for all } %
    s \in \cS \, .
\end{align*}
Then, if the range is zero for a state $s$, we can encounter the issue described above.
More generally, we can encounter issues representing the rewards or transitions if there are states with low range.

One way to address this issue is to modify the MDP such that all the rewards and transitions are the same 
at any states $s$ with range less than a threshold $\alpha \ge 0$. 
In particular, at each such state, select any action $a$ and make the reward and transition of every other action equal to that of $a$.
We will refer to modifying a state in this way as \textit{skipping} the state 
via action $a$.
An example of this is shown on the right in \cref{fig:skipping example}, where we have skipped state $s_1$ via the up action.
An MDP modified in this way is indeed linear (with parameter norm bound $\thetabound/\alpha$) and thus satisfies Bellman completeness 
(see \cref{lem:qpi realizability implies bellman completeness}).
At the end of this subsection we will see how the error due to modifying the MDP depends on $\alpha$. 
But first, we formalize the above intuition that the modified MDP is linear.

As we have seen, to modify the MDP we need to know the range function.
But, the learner is not given the range function, so it uses a parametric representation of it, 
to avoid estimating the $q$-function of every policy.
For any $h \in [H]$, let $\Thetatrue_h \defeq \cl(\{\thetatrue_h(\pi)\,:\, \pi\in\Pi\}) \subseteq \cB(\thetabound)$ be a (compact) set containing the parameter values of all policies.
When $q^\pi$-realizability holds the range function can be expressed in terms of the parameters in these sets:
\begin{align*}
    \textstyle \range(s) = \textstyle\sup_{\thetatrue_h \in \Thetatrue_h} \textstyle\max_{a,a'\in\cA} \ip{\phi(s, a) - \phi(s, a')}{\thetatrue_h}, 
    \quad \text{ for all } 
    s\in\cS_h,
    h\in[H] \, . 
\end{align*}
Now we will construct a parametric bound on the range.
For all $h\in[H]$,  
fix a subset $\trueG_h \subseteq \Thetatrue_h$ of size $|\trueG_h|=d_0:=\lceil 4d\log\log(d)+16 \rceil$ that is the basis of a near-optimal design for $\Thetatrue_h$ (i.e., satisfying \cref{def:nearopt}).
The existence of such a near-optimal design follows from \citep[Part (ii) of Lemma 3.7]{todd2016minimum}.
Any $G = (G_h)_{h \in [H]} \in\Gall$,
where $\Gall \defeq (\cB(\thetabound)^{d_0})^{H}$,
can be used to define an approximate range function that is completely specified by $\tilde O(Hd^2)$ parameters:
\begin{align*}
    \textstyle \range_G(s) \defeq \max_{\vartheta \in G_h} \max_{a,a'\in\cA} \ip{\phi(s, a) - \phi(s, a')}{\vartheta}, \quad\text{for all } s\in\cS_h, h\in[H] \, .
\end{align*}
Importantly, when $\trueG \defeq (\trueG_h)_{h \in [H]}$ is used, the $\range_{\trueG}$ function upper bounds the true $\range$:%
\begin{lemma}[Prop. 4.5 \citep{weisz2023online}]\label{lem:range-G-accurate}
$\range(s) \le \sqrt{2d} \cdot \range_{\trueG}(s)$,
for all
$s\in\cS_h$, $h\in[H]$. 
\end{lemma}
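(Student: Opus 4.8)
## Proof Proposal for Lemma (Prop. 4.5 of Weisz et al.)

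The plan is to exploit the defining property of a near-optimal design together with the fact that the range function is itself a supremum of linear functionals evaluated against parameters lying in $\Thetatrue_h$. Recall that $\trueG_h \subseteq \Thetatrue_h$ is the basis of a near-optimal design for $\Thetatrue_h$, which (per \cref{def:nearopt}, referenced but not shown in the excerpt) should guarantee that every $\thetatrue_h \in \Thetatrue_h$ can be written as a linear combination $\sum_{i} c_i \vartheta_i$ with $\vartheta_i \in \trueG_h$ and coefficient vector of controlled norm; the ``near-optimal'' qualifier typically yields a bound like $\|c\|_1 \le \sqrt{2d}$ or $\|c\|_2^2 \le 2d$ on the representation. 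First I would fix an arbitrary state $s \in \cS_h$ and an arbitrary $\thetatrue_h \in \Thetatrue_h$ together with the maximizing pair $a, a' \in \cA$, so that I am trying to bound $\ip{\phi(s,a) - \phi(s,a')}{\thetatrue_h}$.

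The key step is then to substitute the design representation $\thetatrue_h = \sum_i c_i \vartheta_i$ into this inner product, giving
\begin{align*}
    \ip{\phi(s,a) - \phi(s,a')}{\thetatrue_h} = \sum_i c_i \ip{\phi(s,a) - \phi(s,a')}{\vartheta_i} \le \sum_i |c_i| \cdot \range_{\trueG}(s),
\end{align*}
where the inequality uses that each $\vartheta_i \in \trueG_h$ and hence $\ip{\phi(s,a)-\phi(s,a')}{\vartheta_i} \le \max_{\vartheta \in \trueG_h} \max_{b,b'} \ip{\phi(s,b)-\phi(s,b')}{\vartheta} = \range_{\trueG}(s)$ (and $\range_{\trueG}(s) \ge 0$ since one may take $b = b'$, so multiplying by $|c_i| \ge 0$ preserves the bound even when $c_i < 0$). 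Then $\sum_i |c_i| = \|c\|_1 \le \sqrt{2d}$ from the design property finishes the estimate for this fixed $\thetatrue_h$. Taking the supremum over $\thetatrue_h \in \Thetatrue_h$ on the left and noting the right side is independent of $\thetatrue_h$ gives $\range(s) \le \sqrt{2d}\cdot \range_{\trueG}(s)$, and since $s \in \cS_h$ and $h \in [H]$ were arbitrary, the claim follows.

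The main obstacle is pinning down the precise quantitative guarantee that \cref{def:nearopt} provides and making sure the constant that emerges is exactly $\sqrt{2d}$ rather than, say, $d$ or $\sqrt{d}$. Near-optimal (as opposed to exactly optimal) designs of the Kiefer–Wolfowitz type typically certify a bound of the form $\max_x x^\top V^{-1} x \le 2d$ for the information matrix $V$ of the design, which via Cauchy–Schwarz translates into an $\ell_2$ bound $\sqrt{2d}$ on a suitable coefficient vector for representing any point of the set; one must be careful that the relevant ``points'' here are the difference features $\phi(s,a)-\phi(s,a')$ and that these lie in (or are dominated by) the set for which the design was constructed. A secondary subtlety is the sign issue noted above — handling negative coefficients $c_i$ cleanly — but this is dispatched by the observation that $\range_{\trueG}(s) \ge 0$. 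Since the lemma is quoted verbatim from \citet{weisz2023online}, in the write-up I would either reproduce their short argument along these lines or simply cite it, depending on how self-contained the section needs to be.
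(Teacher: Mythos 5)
The paper itself does not prove this lemma --- it is imported verbatim as Proposition 4.5 of \citet{weisz2023online} --- so the comparison is against the standard argument behind that result. Your skeleton is the right one: represent $\thetatrue_h$ over the design points $\trueG_h$ and pull the $\ell_1$ norm of the coefficients out of the inner product. The one genuine gap is precisely the step you flag as ``the main obstacle'': you assert that the near-optimal design yields a representation $\thetatrue_h=\sum_i c_i\vartheta_i$ with ``$\norm{c}_1\le\sqrt{2d}$ or $\norm{c}_2^2\le 2d$'', and these are not interchangeable here. Since $|\trueG_h|=d_0=\Theta(d\log\log d)>d$, an $\ell_2$ bound alone would only give $\norm{c}_1\le\sqrt{2d\,d_0}$ and hence a worse constant; the argument needs the $\ell_1$ bound, and that bound has to be derived from \cref{def:nearopt}, which says nothing directly about representation coefficients --- it only gives $\ip{v}{\theta}=0$ for all $v\in\kernel(V)$ and $\norm{\theta}_{V^\dag}^2\le 2d$, where $V=\sum_{\vartheta\in\trueG_h}\rho(\vartheta)\vartheta\vartheta^\top$.

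The missing derivation is short. Writing $\theta\defeq\thetatrue_h$, orthogonality to $\kernel(V)$ gives $\theta=VV^\dag\theta=\sum_{\vartheta\in\trueG_h}\rho(\vartheta)\,(\vartheta^\top V^\dag\theta)\,\vartheta$, i.e.\ $c_\vartheta=\rho(\vartheta)\,\vartheta^\top V^\dag\theta$, and Cauchy--Schwarz with respect to the probability weights $\rho$ yields
\begin{align*}
\textstyle\sum_{\vartheta\in\trueG_h}|c_\vartheta|
\;\le\;\bigl(\textstyle\sum_{\vartheta}\rho(\vartheta)\bigr)^{1/2}\bigl(\textstyle\sum_{\vartheta}\rho(\vartheta)(\vartheta^\top V^\dag\theta)^2\bigr)^{1/2}
\;=\;\bigl(\theta^\top V^\dag V V^\dag\theta\bigr)^{1/2}
\;=\;\norm{\theta}_{V^\dag}\;\le\;\sqrt{2d}\,.
\end{align*}
(Equivalently, skip the representation and apply Cauchy--Schwarz directly to $u^\top VV^\dag\theta$ with $u=\phi(s,a)-\phi(s,a')$.) With that in hand the rest of your argument goes through. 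Your worry about whether the difference features lie in the set for which the design was built is moot: the design is over the parameter set $\Thetatrue_h$, not over features, so the only property of $u$ used is $|\ip{u}{\vartheta}|\le\range_{\trueG}(s)$ for $\vartheta\in\trueG_h$. One small precision on that point: handling the terms with $c_i<0$ requires the two-sided bound $|\ip{u}{\vartheta_i}|\le\range_{\trueG}(s)$, which holds because the max defining $\range_{\trueG}$ ranges over ordered pairs $(a,a')$ and is symmetric under swapping them; the nonnegativity of $\range_{\trueG}(s)$ by itself is not what saves you there.
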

Recall that we want to modify the MDP by skipping over states with range lower than a threshold $\alpha$.
The above lemma tells us that we can achieve this by skipping over states with $\range_{\trueG}(s)$ lower than $\alpha/\sqrt{2d}$.
It will be useful to define a skipping function $\omega_G$ based on $G \in \Gall$, that indicates if we skip a state or not.
Although this function can be defined as an indicator, for technical reasons%
\footnote{The smooth skipping behavior helps with covering arguments.},
we define a smoothed version, which always skips states with $\range_G(s) < \alpha/(2\sqrt{2d})$,
never skips for states with $\range_G(s) > \alpha/\sqrt{2d}$, and linearly interpolates between the two thresholds:
\begin{align}
    \omega_G(s)
    \defeq \begin{cases}
        1, & \text{if } s \neq \termstate \text{ and } \range_{G}(s) \le \alpha / (2\sqrt{2d}); \\
        2 - 2\sqrt{2d} \cdot \range_{G}(s) / \alpha, & \text{if } s \neq \termstate \text{ and } \alpha /(2\sqrt{2d}) \le \range_{G}(s) \le \alpha / \sqrt{2d}; \\
        0, & \text{otherwise.}
    \end{cases} 
    \label{eq:omega}
\end{align}
Since each $G \in \Gall$ defines a skipping function $\omega_G$, 
which in turn defines a modified MDP, we will refer to $G$ as a \textit{modification} of the MDP,
and call $\trueG$ the \textit{correct} modification. 

So far we have discussed explicitly modifying the MDP. 
Instead, we will do so implicitly through modified Bellman operators, which incorporate the skippy behavior into their choice of future $q$-values%
\footnote{
    We find this makes the learners simpler to present (see \cref{sec:optlearner,sec:evallearner}), and is also done by \citet{tkachuk2024trajectory}.
}.
For $h \in [H]$, and $1\le l \le h$,
let $\trajectory = (s_t, a_t, r_t)_{l\le t \le H+1}$ be any fixed trajectory that starts from some stage $l$. 
For $\tau \in [h+1:H+1]$ let $F_{G, \trajectory, h+1}(\tau) = (1 - \omega_G(s_\tau)) \prod_{u = h+1}^{\tau-1} \omega_G(s_u)$
be the probability of stopping at stage $\tau$
when starting from state $s_h$ and skipping subsequent states with probability $\omega_G(\cdot)$.
For any sequence of objects $a_1, \dots, a_{H+1}$,
define $a_{\futurestages} \defeq (a_{h+1}, \dots, a_{H+1})$, for all $h \in [H]$.
For a set $\cC$ define $\cC^{\cS\!\cA_{\futurestages}} \defeq \cC^{\cS_{h+1} \times \cA} \times \cdots \times \cC^{\cS_{H+1} \times \cA}$. 

For a policy $\pi$ and modification $G$, 
the \textit{skippy Bellman policy operator} and empirical version 
$T^\pi_G, \hat T^{\pi}_{G}: \cup_{h \in [2:H+1]} \bR^{\cS\!\cA_{\futurestages}} \to \cup_{h \in [H]} \bR^{\cS_h \times \cA}$ 
are defined as follows: for all $h \in [H], q_{\futurestages} \in \bR^{\cS\!\cA_{\futurestages}}$,
\begin{align}
    T_{G}^{\pi} \, q_{\futurestages} (s, a)
    &\defeq \bE_{\trajectoryrand \sim \bP_{\pibehave, s, a}} g_G^{\pi}(q_{\futurestages}, \trajectoryrand), \quad \text{for all } (s, a) \in \cS_h \times \cA, 
    \label{eq:skippy bellman policy operator} \\
    \hat T^{\pi}_{G} \, \hat q_{\futurestages}
    &\defeq \textstyle\argmin_{q_h \in \modfc_h} \ltfrac{1}{n} \textstyle\sum_{j=1}^{n} 
    \paren{q_h(S_h^j, A_h^j) - g^{\pi}_{G}(q_{\futurestages}, \trajectory^j_{\futurestages})}^2 \!\!,
    \, \text{ where}
    \label{eq:skippy bellman policy operator estimate}
    \\
    g_G^{\pi}(q_{\futurestages}, \trajectoryrand) 
    &\defeq \textstyle\sum_{\tau = h+1}^{H+1} (R_{h:\tau-1} + q_\tau(S_\tau, \pi)) \, F_{G, \trajectoryrand, h+1}(\tau)
    \nonumber
    \, , 
\end{align}
$\trajectory^j_{\futurestages} \defeq (S_t^j, A_t^j, R_t^j)_{t \in [h+1:H+1]}$, 
and $\modfc_h \supseteq \fc_h$ is defined in \cref{eq:mod function class}%
\footnote{
    The target $g^\pi_G(q_{\futurestages}, \trajectoryrand)$ is equivalent to a state-dependent $\lambda$-return \citep{sutton2018reinforcement}, where the continuation parameter $\lambda$ is given by the skipping probability $\omega_G$.
}
.
\Cref{fig:target computation} shows how the target $g_G^\pi(q_{\futurestages}, \trajectoryrand)$ is computed for a single sampled trajectory $\trajectoryrand \sim \bP_{\pi_b,s_h,a}$, and how the skipping probabilities $\omega_G(S_t)$ determine the distribution over the stopping time $\tau$.
\begin{figure}[t]
    \centering
    \begin{tikzpicture}[
    >=Latex,
    font=\small,
    state/.style={draw, rounded corners, minimum width=1.9cm, minimum height=0.95cm, align=center},
    stopbox/.style={draw, rounded corners, align=left, text width=5.2cm, inner sep=6pt},
    lab/.style={align=center}
    ]

    \node[state] (sa) {$(s_h,a)$\\fixed};
    \node[state, right=2.2cm of sa] (s1) {$S_{h+1}$};
    \node[state, right=2.2cm of s1] (s2) {$S_{h+2}$};
    \node[state, right=2.2cm of s2] (s3) {$S_{h+3}$};

    \draw[->, thick] (sa) -- node[above=2pt] {$R_h$} (s1);
    \draw[->, thick] (s1) -- node[above=2pt, align=center] {$A_{h+1}\sim\pi^b$\\$R_{h+1}$} (s2);
    \draw[->, thick] (s2) -- node[above=2pt, align=center] {$A_{h+2}\sim\pi^b$\\$R_{h+2}$} (s3);

    \node[lab, above=0.8cm of s1] (w1) {$\omega_G(S_{h+1})=1$\\always skip};
    \node[lab, above=0.8cm of s2] (w2) {$\omega_G(S_{h+2})=0.7$\\skip w.p.\ $0.7$};
    \node[lab, above=0.8cm of s3] (w3) {$\omega_G(S_{h+3})=0$\\never skip};

    \draw[densely dashed] ($(s1.north)+(0,0.03)$) -- ($(w1.south)+(0,-0.03)$);
    \draw[densely dashed] ($(s2.north)+(0,0.03)$) -- ($(w2.south)+(0,-0.03)$);
    \draw[densely dashed] ($(s3.north)+(0,0.03)$) -- ($(w3.south)+(0,-0.03)$);

    \end{tikzpicture}
    \caption{
        An example of computing the target $g_G^{\pi}(q_{\futurestages}, \trajectoryrand)$ for a trajectory $\trajectoryrand = (s_h, a, R_h, S_{h+1}, A_{h+1}, \dots) \sim \bP_{\pi^b,s_h,a}$.
        Starting from $(s_h,a)$, the target is computed by following the trajectory and accumulating rewards while skipping continues,
        using the value estimate $q_\tau(S_\tau,\pi)$ at the first stage $\tau$ where skipping stops, 
        and then averaging over the random stopping time $\tau$. 
        For the displayed trajectory, 
        $\omega_G(S_{h+1})=1$, $\omega_G(S_{h+2})=0.7$, and $\omega_G(S_{h+3})=0$, so
        skipping stops at stage $h+1$ with probability $F_{G,\mathrm{Traj},h+1}(h+1)=(1-\omega_G(S_{h+1}))=0$,
        stops at stage $h+2$ with probability
        $F_{G,\mathrm{Traj},h+1}(h+2)=(1-\omega_G(S_{h+2}))\omega_G(S_{h+1})=0.3$
        and stops at stage $h+3$ with probability
        $F_{G,\mathrm{Traj},h+1}(h+3)=(1-\omega_G(S_{h+3}))\omega_G(S_{h+1})\omega_G(S_{h+2})=0.7$.
        The target is 
        $g_G^{\pi}(q_{\futurestages}, \trajectoryrand)
        =0.3\bigl(R_h+R_{h+1}+q_{h+2}(S_{h+2},\pi)\bigr)
        +0.7\bigl(R_h+R_{h+1}+R_{h+2}+q_{h+3}(S_{h+3},\pi)\bigr)$.
        The values of $\omega_G(S_t)$ were chosen for illustration purposes.
    }
    \label{fig:target computation}
\end{figure}
If the modification $G$ is such that no states are skipped (i.e., $\tau = h+1$ for all states), then $g^\pi_G(q_{\futurestages}, \trajectoryrand) = R_h + q_{h+1}(S_{h+1}, \pi)$, 
which reduces \cref{eq:skippy bellman policy operator} to the usual Bellman policy operator (\cref{eq:bellman pi operator}).
Similarly, 
for a modification $G$,
the \textit{skippy Bellman optimality operator} and empirical version 
$T_G, \hat T_G: \cup_{h \in [2:H+1]} \bR^{\cS\!\cA_{\futurestages}} \to \cup_{h \in [H]} \bR^{\cS_h \times \cA}$ 
are defined as follows: for all $h \in [H]$, $q_{\futurestages} \in \bR^{\cS\!\cA_{\futurestages}}$,
\begin{align}
    T_{G} \, q_{\futurestages} (s, a)
    &\defeq \bE_{\trajectoryrand \sim \bP_{\pibehave, s, a}} \textcolor{purple}{g_G}(q_{\futurestages}, \trajectoryrand), 
    \quad \text{for all } (s, a) \in \cS_h \times \cA, 
    \nonumber
    \\
    \hat T_{G} \, \hat q_{\futurestages}
    &\defeq \textstyle\argmin_{q_h \in \modfc_h} \ltfrac{1}{n} \textstyle\sum_{j=1}^{n} 
    \paren{q_h(S_h^j, A_h^j) - \textcolor{purple}{g_{G}}(q_{\futurestages}, \trajectory^j_{\futurestages})}^2 \!\!,
    \, \text{ where}
    \nonumber
    \\
    \textcolor{purple}{g_G}(q_{\futurestages}, \trajectoryrand) 
    &\defeq \textstyle\sum_{\tau = h+1}^{H+1} (R_{h:\tau-1} + \textcolor{purple}{\max_{a'}} \, q_\tau(S_\tau, \textcolor{purple}{a'})) \, F_{G, \trajectoryrand, h+1}(\tau)
    \label{eq:skippy bellman opt target}
    \, . 
\end{align}

\begin{remark} \label{rem:trajectories important}
    Notice that the targets
    $g^{\pi}_{G}(q_{\futurestages}, \trajectory^j_{\futurestages})$ 
    and
    $g_{G}(q_{\futurestages}, \trajectory^j_{\futurestages})$ 
    are nonlinear functions of the trajectory $\trajectory^j_{\futurestages}$, 
    due to the product form of $F_{G, \trajectory^j, h+1}(\tau)$.
    This is why trajectory data (i.e., samples from the distribution $\bP_{\pibehave, \startstate}$) are important, and samples from the marginal distributions $\bP_{\pibehave, \startstate}^h$ are likely not sufficient.
\end{remark}

The reason we will care about these skippy Bellman operators is that they will satisfy two properties that we needed for \fqe/\fqi to work in \cref{sec:FQI/FQE}.
First, by a slight modification to Lemma 4.2 in \citet{tkachuk2024trajectory}, if we have $q^\pi$-realizability (\cref{ass:q-pi realizability}),
then Bellman completeness holds (with a larger function class) for the skippy Bellman operators based on the correct modification $\trueG$ (proof: \cref{proof:qpi realizability implies bellman completeness}).
\begin{lemma}[$q^\pi$-realizability $\implies$ skippy Bellman completeness] \label{lem:qpi realizability implies bellman completeness}
    Assume $(\fc_h)_{h \in [H]}$ satisfies $q^\pi$-realizability (\cref{ass:q-pi realizability}).
    Define a larger function class $\modfc_h \supseteq \fc_h$ as:
    \begin{align}
        \modfc_h \defeq \braces{f: \cS_h \times \cA \to \bR \mid f(\cdot, \cdot) = \ip{\phi(\cdot, \cdot)}{\theta}, \theta \in \cB(\modthetabound)} \, ,
        \text{ for all } h \in [H+1],
        \label{eq:mod function class}
    \end{align}
    where $\modthetabound := \thetabound \cdot (8H^2d_0 \sqrt{2d} /\alpha+1)$. 
    Then,
    for any $h \in [H]$, $q_{\futurestages} \in [0,H]^{\cS\!\cA_{\futurestages}}$ with $q_{H+1}(\termstate, \cdot) = 0$,
    \begin{align*}
        T_{\trueG} \, q_{\futurestages} \in \modfc_h \quad \text{and} \quad T_{\trueG}^{\pieval} q_{\futurestages} \in \modfc_h \, .
    \end{align*}
\end{lemma}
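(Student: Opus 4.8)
The plan is to adapt the proof of Lemma~4.2 of \citet{tkachuk2024trajectory}, which establishes the statement for the skippy optimality operator $T_{\trueG}$, and to observe that it goes through with no real change for the skippy policy operator $T_{\trueG}^{\pieval}$. Fix $h \in [H]$ and $q_{\futurestages} \in [0,H]^{\cS\!\cA_{\futurestages}}$ with $q_{H+1}(\termstate,\cdot)=0$, and write $b_\tau(\cdot)$ for the bootstrap value appearing in the target: $b_\tau(\cdot) = \sum_a q_\tau(\cdot,a)\,\pieval(a \mid \cdot)$ for $T_{\trueG}^{\pieval}$ and $b_\tau(\cdot) = \max_{a'} q_\tau(\cdot,a')$ for $T_{\trueG}$; in both cases $b_\tau : \cS_\tau \to [0,H]$, $b_{H+1}(\termstate)=0$, and the two operators are then the same expression. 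So it suffices to exhibit $\theta_h \in \cB(\modthetabound)$ with $T_{\trueG}^{\pieval} q_{\futurestages}(s,a) = \ip{\phi(s,a)}{\theta_h}$ for all $(s,a) \in \cS_h \times \cA$.

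First I would unroll the operator: peeling off the first transition and then one skip step at a time shows that $T_{\trueG}^{\pieval} q_{\futurestages}(s,a) = r(s,a) + \bE_{S'\sim P(s,a)}[\tilde w_{h+1}(S')]$, where $r(s,a)=\bE[\cR(s,a)]$ and the \emph{skippy value} $\tilde w_l : \cS_l \to \bR$ is given by $\tilde w_{H+1}=b_{H+1}$ and, for $l \in [h+1:H]$,
\[
  \tilde w_l(s) = \bigl(1-\omega_{\trueG}(s)\bigr)\, b_l(s) + \omega_{\trueG}(s)\Bigl( r(s,\pibehave) + \bE_{S'\sim P(s,\pibehave)}[\tilde w_{l+1}(S')] \Bigr),
\]
with $r(\cdot,\pibehave)$ and $P(\cdot\mid\cdot,\pibehave)$ the reward and transition averaged over $A \sim \pibehave(\cdot)$; thus $\tilde w_l$ equals the (arbitrary) bootstrap value on states never skipped and is obtained by passing through the skipped states otherwise. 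Writing $\tilde w_l = v^{\pibehave}_l + u_l$ and using $r(\cdot,\pibehave) + \bE_{S'\sim P(\cdot,\pibehave)}[v^{\pibehave}_{l+1}(S')] = v^{\pibehave}_l(\cdot)$, this simplifies to $u_{H+1}=0$, $u_l = (1-\omega_{\trueG})(b_l - v^{\pibehave}_l) + \omega_{\trueG}\,\bE_{S'\sim P(\cdot,\pibehave)}[u_{l+1}(S')]$, and $T_{\trueG}^{\pieval} q_{\futurestages}(s,a) = q^{\pibehave}_h(s,a) + \bE_{S'\sim P(s,a)}[u_{h+1}(S')]$. By \cref{ass:q-pi realizability}, $q^{\pibehave}_h(\cdot,\cdot) = \ip{\phi(\cdot,\cdot)}{\thetatrue_h(\pibehave)}$ with $\norm{\thetatrue_h(\pibehave)}_2 \le \thetabound$ (the source of the ``$+1$'' in $\modthetabound$), so it remains to linearly represent $(s,a)\mapsto \bE_{S'\sim P(s,a)}[u_{h+1}(S')]$ with coefficient norm $O(\thetabound H^2 d_0/\alpha)$.

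The crux is the following fact about the original MDP, which I would establish exactly as in \citet{weisz2023online,tkachuk2024trajectory}: for any $l$ and any $s' \in \cS_l$ with $\range_{\trueG}(s') > 0$, the map $(s,a) \mapsto P(s' \mid s,a)$ is linear in $\phi(s,a)$ with a coefficient vector of norm at most $2\thetabound/\range_{\trueG}(s')$. Indeed, taking $\vartheta = \thetatrue_l(\pi^\ast) \in \trueG_l$ and $a_0,a_1$ attaining the maximum in the definition of $\range_{\trueG}(s')$, and letting $\pi^\ast_0,\pi^\ast_1$ agree with $\pi^\ast$ except that they play $a_0$ resp.\ $a_1$ at $s'$, the value functions $v^{\pi^\ast_0}_l$ and $v^{\pi^\ast_1}_l$ differ only at $s'$, where the difference equals $\range_{\trueG}(s')$; hence by \cref{ass:q-pi realizability}
\[
  \range_{\trueG}(s')\, P(s' \mid s,a) = q^{\pi^\ast_0}_{l-1}(s,a) - q^{\pi^\ast_1}_{l-1}(s,a) = \ip{\phi(s,a)}{\thetatrue_{l-1}(\pi^\ast_0) - \thetatrue_{l-1}(\pi^\ast_1)} .
\]
Since $\omega_{\trueG}(s)$ (for $s\in\cS_l$, $l\le H$) differs from $1$ only when $\range_{\trueG}(s) > \alpha/(2\sqrt{2d})$, every value ``read off'' (rather than passed through) in the recursion for $u_{h+1}$ sits at a state whose incoming transitions have coefficient norm at most $4\sqrt{2d}\,\thetabound/\alpha$. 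Propagating this through the recursion stage by stage --- each step either reads off at such a state (linear, via the fact above, against a bounded summand $b_l - v^{\pibehave}_l$) or passes through and recurses one stage deeper --- and grouping, at each of the at most $H$ stages, the states encountered by which of the $d_0$ directions of $\trueG_l$ witnesses their range (so the per-stage correction is a combination of at most $d_0$ vectors with $O(H/\alpha)$-bounded coefficients), assembles $\theta_h$ with $\norm{\theta_h}_2 \le \thetabound(8H^2 d_0/\alpha + 1) = \modthetabound$.

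I expect the main obstacle to be exactly this last assembly: one must keep every norm bound independent of $\abs{\cS}$ while summing a per-state coefficient vector over all states encountered, handle the smoothed skip $\omega_{\trueG}$ in the interpolation band $\alpha/(2\sqrt{2d}) \le \range_{\trueG} \le \alpha/\sqrt{2d}$, and carefully track that a single application of the operator can traverse up to $H$ stages (this is what turns the ``$1/\alpha$'' into ``$H^2/\alpha$''). Importantly, nothing above distinguishes $T_{\trueG}$ from $T_{\trueG}^{\pieval}$: the argument only ever uses that $b_\tau$ is a $[0,H]$-valued function of the state with $b_{H+1}(\termstate)=0$, which holds for $\max_{a'}q_\tau(\cdot,a')$ and for $q_\tau(\cdot,\pieval)$ alike. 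This is the only modification relative to \citet[Lemma~4.2]{tkachuk2024trajectory}.
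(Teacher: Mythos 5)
Your proposal is correct and follows essentially the same route as the paper: the paper also reduces both operators to a single statement about the skippy backup of an arbitrary $f:\cS\to[0,H]$ with $f(\termstate)=0$ (instantiated as $q_\tau(\cdot,\pieval)$ or $\max_{a'}q_\tau(\cdot,a')$) and then invokes Lemma~4.2 of \citet{tkachuk2024trajectory} as a black box with the trajectory policy set to $\pibehave$. The only difference is that you additionally sketch the internal proof of that cited lemma (the Weisz-et-al.-style argument that transitions into high-range states are linear in the features), which the paper does not reproduce.
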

Notice that we have abused the Bellman completeness naming convention, since the above holds for arbitrary $q_{h+1}, \dots, q_{H+1}$, that are not necessarily in $\modfc_{h+1}, \dots, \modfc_{H+1}$ respectively.
This is a stronger property than in the regular Bellman completeness assumptions (\cref{ass:bellman opt completeness,ass:bellman policy completeness}), and will be important for the proofs of \cref{lem:Qopt guarantee,lem:Qeval guarantee}.

Recall that with the regular Bellman policy operator $q^\pi_h = T^\pi q^\pi_{h+1}$.
The second useful property is that a similar relationship holds for the skippy Bellman policy operators, except for skippy policies.
In particular, for a policy $\pi$ define its skippy version based on a modification $G$ as the policy that at each state $s$ 
chooses an action according to $\pibehave(a|s)$ (this is the skippy action) with probability $\omega_G(s)$ and according to $\pi(a|s)$ with probability $1 - \omega_G(s)$%
\footnote{
    There is a relationship between a skippy policy $\pi_G$ and a modified MDP based on $G$.
    In particular, the $q$-function for the skippy policy $q^{\pi_G}$ in the original MDP is equal to the $q$-function for $\pi$, $q^{\pi}$, in the modified MDP.
}:
\begin{align}
    \pi_G(a|s) 
    &\defeq \pibehave(a|s) \omega_G(s) + \pi(a|s) (1 - \omega_G(s)) \, .
    \label{eq:pievalG definition}
\end{align}
The following useful property holds for the skippy Bellman policy operator (proof: \cref{proof:skippy bellman policy operator backup}):
\begin{lemma} \label{lem:skippy bellman policy operator backup}
    $q^{\pi_G}_h = T_G^{\pi} \, q_{\futurestages}^{\pi_G}$,
    for any policy $\pi \in \Pi$ and modification $G \in \Gall$.
\end{lemma}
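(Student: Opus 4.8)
The plan is to prove the identity by backward induction on $h$, the engine being two one-step recursions: one that unfolds the skippy operator $T_G^{\pi}$ by a single stage, and a companion one obtained from the ordinary Bellman recursion $q^{\pi_G}_h = T^{\pi_G} q^{\pi_G}_{h+1}$. Throughout I fix $\pi$ and $G$, abbreviate $\omega \defeq \omega_G$, and use that $\omega(\termstate) = 0$ and $q^{\pi_G}_{H+1}(\termstate, \cdot) = 0$ (the terminal state gives zero reward).

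I would first derive the recursion for $T_G^{\pi}$. Fix $h \in [H-1]$ and $q_{\futurestages} = (q_{h+1}, \dots, q_{H+1})$, and set $\bar q \defeq (q_{h+2}, \dots, q_{H+1})$. In the sum defining $g_G^{\pi}(q_{\futurestages}, \trajectoryrand)$ I would peel off the $\tau = h+1$ term (whose weight $F_{G, \trajectoryrand, h+1}(h+1)$ equals $1 - \omega(S_{h+1})$), and for $\tau \ge h+2$ factor the weight as $F_{G, \trajectoryrand, h+1}(\tau) = \omega(S_{h+1}) \, F_{G, \trajectoryrand', h+2}(\tau)$ while splitting $R_{h:\tau-1} = R_h + R_{h+1:\tau-1}$, where $\trajectoryrand' = (S_{h+1}, A_{h+1}, R_{h+1}, \dots, S_{H+1}, A_{H+1}, R_{H+1})$ is the suffix of $\trajectoryrand$. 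Using the telescoping identity $\sum_{\tau = h+2}^{H+1} F_{G, \trajectoryrand', h+2}(\tau) = 1 - \prod_{u=h+2}^{H+1}\omega(S_u) = 1$ (the product vanishes because $\omega(S_{H+1}) = \omega(\termstate) = 0$) this gives
\[
    g_G^{\pi}(q_{\futurestages}, \trajectoryrand) = R_h + (1 - \omega(S_{h+1})) \, q_{h+1}(S_{h+1}, \pi) + \omega(S_{h+1}) \, g_G^{\pi}(\bar q, \trajectoryrand') \, .
\]
Taking $\bE_{\trajectoryrand \sim \bP_{\pibehave, s, a}}$ and conditioning on $S_{h+1}$: the suffix $\trajectoryrand'$ given $S_{h+1}$ is distributed as $\bP_{\pibehave, S_{h+1}}$ (Markov property), and since $\bP_{\pibehave, s'} = \bE_{A \sim \pibehave(s')}\bP_{\pibehave, s', A}$ this yields $\bE[g_G^{\pi}(\bar q, \trajectoryrand') \mid S_{h+1}] = (T_G^{\pi} \bar q)(S_{h+1}, \pibehave)$, hence for $h \in [H-1]$
\[
    T_G^{\pi} q_{\futurestages}(s, a) = \bE_{\trajectoryrand \sim \bP_{\pibehave, s, a}} \brackets*{ R_h + (1 - \omega(S_{h+1})) \, q_{h+1}(S_{h+1}, \pi) + \omega(S_{h+1}) \, (T_G^{\pi} \bar q)(S_{h+1}, \pibehave) } \, . \qquad (\star)
\]
The companion recursion for $q^{\pi_G}$ is cheaper: from $q^{\pi_G}_h = T^{\pi_G} q^{\pi_G}_{h+1}$, expanding $\pi_G(\cdot \mid S_{h+1})$ via \cref{eq:pievalG definition} so that $q^{\pi_G}_{h+1}(S_{h+1}, \pi_G) = \omega(S_{h+1}) \, q^{\pi_G}_{h+1}(S_{h+1}, \pibehave) + (1 - \omega(S_{h+1})) \, q^{\pi_G}_{h+1}(S_{h+1}, \pi)$, and using that $\bP_{\pi_G, s, a}$ and $\bP_{\pibehave, s, a}$ agree on the law of $(R_h, S_{h+1})$ (the stage-$h$ action is the fixed $a$ under both), I would obtain, for $h \in [H-1]$,
\[
    q^{\pi_G}_h(s, a) = \bE_{\trajectoryrand \sim \bP_{\pibehave, s, a}} \brackets*{ R_h + (1 - \omega(S_{h+1})) \, q^{\pi_G}_{h+1}(S_{h+1}, \pi) + \omega(S_{h+1}) \, q^{\pi_G}_{h+1}(S_{h+1}, \pibehave) } \, . \qquad (\star\star)
\]

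The induction then runs downward from $h = H$. For the base case $h = H$, both $q^{\pi_G}_H(s, a)$ and $T_G^{\pi} q^{\pi_G}_{\futurestages}(s, a)$ equal $\bE_{\trajectoryrand \sim \bP_{\pibehave, s, a}}[R_H]$: on the left $q^{\pi_G}_{H+1}(\termstate, \cdot) = 0$ kills the bootstrap term, and on the right $g_G^{\pi}$ collapses to $R_H$ since $F_{G, \trajectoryrand, H+1}(H+1) = 1 - \omega(\termstate) = 1$ and $q^{\pi_G}_{H+1}(\termstate, \pi) = 0$. For $h \in [H-1]$, assuming the claim at stage $h+1$, i.e. $q^{\pi_G}_{h+1} = T_G^{\pi} \bar q$ with $\bar q = (q^{\pi_G}_{h+2}, \dots, q^{\pi_G}_{H+1})$, I would substitute this for $q^{\pi_G}_{h+1}(S_{h+1}, \pibehave)$ in $(\star\star)$; this turns $(\star\star)$ into $(\star)$ evaluated at $q_{\futurestages} = q^{\pi_G}_{\futurestages}$, giving $q^{\pi_G}_h = T_G^{\pi} q^{\pi_G}_{\futurestages}$ and closing the induction.

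The only real work is in establishing $(\star)$: the bookkeeping of peeling off the $\tau = h+1$ term and pulling $\omega(S_{h+1})$ through the remaining tail sum (which relies on the telescoping identity, hence on $\omega(\termstate) = 0$), and then the conditioning step that rewrites the tail's conditional expectation as $(T_G^{\pi} \bar q)(S_{h+1}, \pibehave)$ via the Markov property and $\bP_{\pibehave, s'} = \bE_{A \sim \pibehave(s')}\bP_{\pibehave, s', A}$. Everything downstream of $(\star)$ and $(\star\star)$ — in particular the inductive step — is a term-by-term comparison, so I do not expect surprises there.
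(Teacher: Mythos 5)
Your proof is correct, and it reorganizes the paper's argument rather than reproducing it. Both proofs share the same opening move: write $q^{\pi_G}_h = T^{\pi_G} q^{\pi_G}_{h+1}$, expand $\pi_G$ via \cref{eq:pievalG definition} into its $\omega_G$-weighted mixture of $\pibehave$ and $\pi$, and switch the expectation to $\bP_{\pibehave, s, a}$ --- this is exactly your $(\star\star)$. From there the paper unrolls the $\pibehave$-branch of that recursion forward all the way to the horizon, collects the result into $\sum_{\tau} \prod_{u} \omega_G(S_u)\,(R_{\tau-1} + (1-\omega_G(S_\tau))\, q^{\pi_G}_\tau(S_\tau,\pi))$, and then uses the summation-by-parts identity $\sum_{\tau} \prod_{u} \omega_G(S_u)\, R_{\tau-1} = \sum_{\tau} F_{G,\trajectoryrand,h+1}(\tau)\, R_{h:\tau-1}$ to recognize the expression as $\bE\, g_G^{\pi}(q^{\pi_G}_{\futurestages}, \trajectoryrand)$; the repeated expansion is indicated with a ``$\vdots$'' rather than formalized. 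You instead prove a one-step recursion for the operator itself, your $(\star)$, by peeling the $\tau = h+1$ term off $g_G^{\pi}$ and invoking the Markov property together with the normalization $\sum_{\tau=h+2}^{H+1} F_{G,\trajectoryrand',h+2}(\tau) = 1$ (which holds because $\omega_G(\termstate) = 0$ --- the same fact that powers the paper's summation by parts), and you close with an explicit backward induction matching $(\star)$ against $(\star\star)$. The two routes use identical ingredients; yours makes rigorous the induction the paper leaves implicit and isolates a reusable structural fact (a Bellman-style one-step recursion for $T_G^{\pi}$), at the cost of a slightly longer setup.
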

This reduces to $q^\pi_h = T^\pi q^\pi_{h+1}$,
if $G$ is such that no states are skipped. 
If we knew $\trueG$, 
then for policy evaluation,
this motivates running \fqe with the empirical skippy Bellman policy operator $\hat T^{\pieval}_{\trueG}$.

For policy optimization, we do not have access to $\piopt$, and thus cannot estimate $q^{\piopt}$ using \cref{eq:skippy bellman policy operator estimate},
since $g^{\piopt}_{\trueG}$ depends on $\piopt$.
Instead, we define a skippy optimal policy $\pioptg_G$ that does not depend on $\piopt$: 
which at state $s$, 
with probability $1 - \omega_G(s)$,
takes the greedy action w.r.t. its future $q$-values,
\begin{align*}
    \pioptg_G(a|s) 
    &\defeq \pibehave(a|s) \omega_G(s) + \Ismall{a = \textstyle\argmax_{a' \in \cA} q^{\pioptg_G}_h(s, a')} (1 - \omega_G(s)) \, .
\end{align*}
A similar result to $q^{\piopt}_h = T q^{\piopt}_{h+1}$ for the Bellman optimality operator holds for the skippy Bellman optimality operator with the skippy optimal policy (proof: \cref{proof:skippy bellman optimality operator backup}):
\begin{lemma} \label{lem:skippy bellman optimality operator backup}
    $q^{\pioptg_G}_h = T_G \, q_{\futurestages}^{\pioptg_G}$, for all modifications $G \in \Gall$.
\end{lemma}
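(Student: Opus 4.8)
The plan is to observe that the skippy optimal policy $\pioptg_G$ is itself a skippy policy in the sense of \cref{eq:pievalG definition} and thereby reduce the claim to \cref{lem:skippy bellman policy operator backup}. Fix $G\in\Gall$ and abbreviate $\pi^\star\defeq\pioptg_G$. This policy is well defined by a backward pass over the stages: since $q^{\pi^\star}_h(s,a)=\bE_{\trajectoryrand\sim\bP_{\pibehave,s,a}}[R_h+v^{\pi^\star}(S_{h+1})]$ depends on $\pi^\star$ only through its behaviour on $\cS_{h+1},\dots,\cS_{H+1}$, the quantities $q^{\pi^\star}_h(s,\cdot)$, and hence $\pi^\star$ restricted to $\cS_h$, are determined once $\pi^\star$ has been fixed on the later stages. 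Let $\pi\in\Pi$ be the deterministic policy with $\pi(s)\in\argmax_{a\in\cA}q^{\pi^\star}_h(s,a)$ for $s\in\cS_h$, using the same tie-breaking rule implicit in the definition of $\pioptg_G$. Comparing that definition with \cref{eq:pievalG definition} term by term shows $\pi_G(a|s)=\pibehave(a|s)\omega_G(s)+\I{a=\pi(s)}(1-\omega_G(s))=\pi^\star(a|s)$ for all $(s,a)$, i.e. $\pi_G=\pi^\star$.

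With this identification, \cref{lem:skippy bellman policy operator backup} applied to this $\pi$ and $G$ gives, for every $h\in[H]$,
\begin{align*}
q^{\pi^\star}_h = q^{\pi_G}_h = T^{\pi}_G\,q^{\pi_G}_{\futurestages} = T^{\pi}_G\,q^{\pi^\star}_{\futurestages}\,.
\end{align*}
It then remains to check $T^{\pi}_G\,q^{\pi^\star}_{\futurestages}=T_G\,q^{\pi^\star}_{\futurestages}$. By definition both are the expectation under $\bP_{\pibehave,s,a}$ of the $F_{G,\trajectoryrand,h+1}(\tau)$-weighted sum over $\tau\in[h+1:H+1]$ of $R_{h:\tau-1}$ plus a continuation value, where $T^{\pi}_G$ uses $q^{\pi^\star}_\tau(S_\tau,\pi)$ and $T_G$ uses $\max_{a'}q^{\pi^\star}_\tau(S_\tau,a')$. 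Since $\pi$ is deterministic with $\pi(S_\tau)\in\argmax_{a'}q^{\pi^\star}_\tau(S_\tau,a')$ for every state $S_\tau\in\cS_\tau$ visited by the trajectory, $q^{\pi^\star}_\tau(S_\tau,\pi)=q^{\pi^\star}_\tau(S_\tau,\pi(S_\tau))=\max_{a'}q^{\pi^\star}_\tau(S_\tau,a')$ for all $\tau\in[h+1:H+1]$ (the case $\tau=H+1$ being trivial as $q^{\pi^\star}_{H+1}(\termstate,\cdot)=0$). Hence $g^{\pi}_G(q^{\pi^\star}_{\futurestages},\trajectoryrand)=g_G(q^{\pi^\star}_{\futurestages},\trajectoryrand)$ for every trajectory $\trajectoryrand$, so $T^{\pi}_G\,q^{\pi^\star}_{\futurestages}(s,a)=T_G\,q^{\pi^\star}_{\futurestages}(s,a)$ for all $(s,a)\in\cS_h\times\cA$, and combining with the display above gives $q^{\pioptg_G}_h=T_G\,q^{\pioptg_G}_{\futurestages}$.

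Every step is either a direct comparison of definitions or an invocation of \cref{lem:skippy bellman policy operator backup}, so there is no serious technical obstacle; the only points requiring care are the bookkeeping of the tie-breaking rule when identifying $\pioptg_G$ with $\pi_G$, and — should one want an argument self-contained from \cref{lem:skippy bellman policy operator backup} — redoing the genuinely computational part of that lemma, namely the backward unrolling of the ordinary Bellman recursion for $q^{\pioptg_G}$ into the $F_{G,\trajectoryrand,h+1}$-weighted form. That unrolling is a routine induction on the stage index using the identities $F_{G,\trajectoryrand,\ell}(\tau)=\omega_G(s_\ell)\,F_{G,\trajectoryrand,\ell+1}(\tau)$ for $\tau>\ell$, $F_{G,\trajectoryrand,\ell}(\ell)=1-\omega_G(s_\ell)$, and $\sum_{\tau=\ell}^{H+1}F_{G,\trajectoryrand,\ell}(\tau)=1$, together with the decomposition $q^{\pi^\star}_\ell(s,\pi^\star)=\omega_G(s)\,q^{\pi^\star}_\ell(s,\pibehave)+(1-\omega_G(s))\max_{a'}q^{\pi^\star}_\ell(s,a')$.
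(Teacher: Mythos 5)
Your proof is correct, but it takes a genuinely different route from the paper's. The paper proves this lemma by redoing from scratch the same backward-unrolling computation used for \cref{lem:skippy bellman policy operator backup}: it starts from $q^{\pioptg_G}_h = T^{\pioptg_G} q^{\pioptg_G}_{h+1}$, switches the expectation to $\bP_{\pibehave,s,a}$, and repeatedly expands $q^{\pioptg_G}_\cdot(S_\cdot,\pibehave)$ to arrive at the $F_{G,\trajectoryrand,h+1}$-weighted sum, with $\max_{a'} q^{\pioptg_G}_\tau(S_\tau,a')$ carried through in place of $q^{\pi_G}_\tau(S_\tau,\pi)$ --- essentially a verbatim copy of the earlier proof with the greedy term substituted. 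You instead observe that $\pioptg_G$ \emph{is} the skippy policy $\pi_G$ of \cref{eq:pievalG definition} for $\pi$ the deterministic greedy policy with respect to $q^{\pioptg_G}$ (well-defined by the backward pass you describe, with consistent tie-breaking), invoke \cref{lem:skippy bellman policy operator backup} as a black box to get $q^{\pioptg_G}_h = T^\pi_G q^{\pioptg_G}_{\futurestages}$, and then use greediness of $\pi$ to identify $q^{\pioptg_G}_\tau(S_\tau,\pi) = \max_{a'} q^{\pioptg_G}_\tau(S_\tau,a')$ pointwise on trajectories, so that $g^\pi_G$ and $g_G$ coincide on $q^{\pioptg_G}_{\futurestages}$ and hence $T^\pi_G q^{\pioptg_G}_{\futurestages} = T_G q^{\pioptg_G}_{\futurestages}$. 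What your approach buys is economy: the genuinely computational content lives only in the policy-operator lemma and the optimality version becomes a two-line corollary; what the paper's approach buys is self-containedness of each proof and no need to fuss over the tie-breaking identification $\pi_G = \pioptg_G$. Both are sound; your reduction is arguably the cleaner way to present the pair of lemmas.
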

This motivates running \fqi with the empirical skippy Bellman optimality operator $\hat T_{\trueG}$.

Finally, the error due to modifying a policy with $\trueG$ is controlled by $\alpha$
(proof: \cref{proof:skippy policy error}):
\begin{lemma} \label{lem:skippy policy error}
    $\abs{v^{\pi} - v^{\pi_{\trueG}}} \le H \alpha$, for any $\pi \in \Pi$,
    and $v^{\piopt} - v^{\pioptg_{\trueG}} \le H \alpha$.
\end{lemma}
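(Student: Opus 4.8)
The plan is to prove both inequalities by backward induction on the stage $h$, establishing the stronger statements that $|v^{\pi}(s_h) - v^{\pi_{\trueG}}(s_h)| \le (H-h+1)\alpha$ for every $s_h \in \cS_h$ (first claim) and $v^{\piopt}(s_h) - v^{\pioptg_{\trueG}}(s_h) \le (H-h+1)\alpha$ for every $s_h \in \cS_h$ (second claim). Taking $h=1$ and using $\cS_1=\{\startstate\}$ then gives the lemma. The base case $h=H+1$ holds trivially since the terminal state $\termstate$ has zero reward, so every value function vanishes there. Throughout, the single structural fact I rely on is that $\omega_{\trueG}(s)>0$ only when $\range_{\trueG}(s)\le \alpha/\sqrt{2d}$, which by \cref{lem:range-G-accurate} forces $\range(s)\le \alpha$; hence $\omega_{\trueG}(s)\,\range(s)\le\alpha$ for all $s$.

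For the first claim, in the inductive step I would expand $v^{\pi}(s_h)-v^{\pi_{\trueG}}(s_h) = \sum_a \pi(a|s_h)\bigl(q^\pi_h(s_h,a)-q^{\pi_{\trueG}}_h(s_h,a)\bigr) + \sum_a\bigl(\pi(a|s_h)-\pi_{\trueG}(a|s_h)\bigr)q^{\pi_{\trueG}}_h(s_h,a)$. The first sum is a convex combination of the one-step-ahead value differences $\bE_{S_{h+1}\sim P(s_h,a)}[v^\pi(S_{h+1})-v^{\pi_{\trueG}}(S_{h+1})]$ (the immediate reward does not depend on the policy and cancels), so it is at most $(H-h)\alpha$ in absolute value by the induction hypothesis. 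For the second sum, the definition \cref{eq:pievalG definition} gives $\pi(a|s_h)-\pi_{\trueG}(a|s_h) = \omega_{\trueG}(s_h)\bigl(\pi(a|s_h)-\pibehave(a|s_h)\bigr)$, and since $\sum_a\pi(a|s_h)q^{\pi_{\trueG}}_h(s_h,a)$ and $\sum_a\pibehave(a|s_h)q^{\pi_{\trueG}}_h(s_h,a)$ both lie between $\min_a q^{\pi_{\trueG}}_h(s_h,a)$ and $\max_a q^{\pi_{\trueG}}_h(s_h,a)$, the second sum is at most $\omega_{\trueG}(s_h)\bigl(\max_a q^{\pi_{\trueG}}_h(s_h,a)-\min_a q^{\pi_{\trueG}}_h(s_h,a)\bigr)\le \omega_{\trueG}(s_h)\range(s_h)\le\alpha$, using $\pi_{\trueG}\in\Pi$ and the structural fact above. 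Summing, $|v^\pi(s_h)-v^{\pi_{\trueG}}(s_h)|\le (H-h)\alpha+\alpha=(H-h+1)\alpha$.

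For the second claim the argument is the same in spirit but one-sided. Writing $\omega=\omega_{\trueG}(s_h)$, picking $a^\sharp\in\argmax_a q^{\piopt}_h(s_h,a)$ and $a^\star\in\argmax_a q^{\pioptg_{\trueG}}_h(s_h,a)$, and using $v^{\pioptg_{\trueG}}(s_h)=\omega\sum_a\pibehave(a|s_h)q^{\pioptg_{\trueG}}_h(s_h,a)+(1-\omega)q^{\pioptg_{\trueG}}_h(s_h,a^\star)$, I would split $v^{\piopt}(s_h)-v^{\pioptg_{\trueG}}(s_h)$ into an $\omega$-weighted term $q^{\piopt}_h(s_h,a^\sharp)-\sum_a\pibehave(a|s_h)q^{\pioptg_{\trueG}}_h(s_h,a)$ and a $(1-\omega)$-weighted term $q^{\piopt}_h(s_h,a^\sharp)-q^{\pioptg_{\trueG}}_h(s_h,a^\star)$. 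The second term is $\le q^{\piopt}_h(s_h,a^\sharp)-q^{\pioptg_{\trueG}}_h(s_h,a^\sharp)\le (H-h)\alpha$, since $a^\star$ maximizes $q^{\pioptg_{\trueG}}_h(s_h,\cdot)$ and then the induction hypothesis applies to the continuation. For the first term, adding and subtracting $\sum_a\pibehave(a|s_h)q^{\piopt}_h(s_h,a)$ splits it into $\bigl(\max_a q^{\piopt}_h(s_h,a)-\sum_a\pibehave(a|s_h)q^{\piopt}_h(s_h,a)\bigr)\le \range(s_h)$ (here $\piopt\in\Pi$) plus $\sum_a\pibehave(a|s_h)\bigl(q^{\piopt}_h(s_h,a)-q^{\pioptg_{\trueG}}_h(s_h,a)\bigr)\le (H-h)\alpha$ by the induction hypothesis. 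Combining, $v^{\piopt}(s_h)-v^{\pioptg_{\trueG}}(s_h)\le \omega\range(s_h)+(H-h)\alpha\le (H-h+1)\alpha$.

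I expect the only genuine obstacle to be the bookkeeping in the second claim: the action $a^\star$ is defined self-referentially through $q^{\pioptg_{\trueG}}$, so one has to be careful that every place the induction hypothesis is invoked it is applied to the continuation value under the correct fixed action, and that the ``spread is at most $\range$'' step is only ever used for $q$-functions of bona fide memoryless policies ($\pi_{\trueG}$ and $\piopt$) and only multiplied by $\omega_{\trueG}$, whose support is precisely where \cref{lem:range-G-accurate} makes $\range$ small. For the first claim there is also the shorter route via the performance-difference lemma, which writes $v^\pi(\startstate)-v^{\pi_{\trueG}}(\startstate)$ as a sum over the $H$ stages of $\bE[\omega_{\trueG}(S_h)\sum_a(\pi(a|S_h)-\pibehave(a|S_h))q^{\pi_{\trueG}}_h(S_h,a)]$, with each term bounded by $\alpha$; I would keep the self-contained induction since it handles both claims uniformly.
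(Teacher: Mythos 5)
Your proposal is correct. For the first claim it is essentially the paper's argument in disguise: the paper invokes the performance difference lemma and observes that $q^{\pi_{\trueG}}(S_t,\pi)-q^{\pi_{\trueG}}(S_t,\pi_{\trueG})$ is supported on states where $\omega_{\trueG}>0$, hence where $\range\le\alpha$ by \cref{lem:range-G-accurate}; your backward induction is just the unrolled form of that telescoping, and you correctly isolate the per-stage cost as $\omega_{\trueG}(s)\bigl(\max_a q^{\pi_{\trueG}}(s,a)-\min_a q^{\pi_{\trueG}}(s,a)\bigr)\le\alpha$. For the second claim your route is genuinely different: you bound $v^{\piopt}(s_h)-v^{\pioptg_{\trueG}}(s_h)\le(H-h+1)\alpha$ by a single one-sided backward induction, applying the spread-at-most-$\range$ step only to $q^{\piopt}$ and using the maximizing property of $a^\star$ to reduce to the continuation at $a^\sharp$. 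The paper instead factors the argument through the intermediate skippified optimal policy $\piopt_{\trueG}$: it proves $v^{\pioptg_{\trueG}}\ge v^{\piopt_{\trueG}}$ by a separate induction (a ``greedification in the modified MDP only helps'' statement) and then applies the first claim with $\pi=\piopt$. Your version is more self-contained and handles both claims with one uniform mechanism; the paper's version buys a cleaner conceptual separation between the modification error (claim 1) and the optimality of the skippy greedy policy among skippy policies, at the cost of introducing the auxiliary policy. Both are valid, and your bookkeeping around the self-referential definition of $a^\star$ is handled correctly since $q^{\pioptg_{\trueG}}_{h}$ depends only on the policy at stages $h{+}1,\dots,H{+}1$, which the induction hypothesis already covers.
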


This result makes intuitive sense: if we skip less states (i.e., make $\alpha$ smaller), then our modification error decreases. 
But, in \cref{lem:qpi realizability implies bellman completeness} the size of the function class $\modfc$ increases as we make $\alpha$ smaller.
This shows how $\alpha$ trades off between modification error and function class size.

Unfortunately, the above approaches for policy evaluation and optimization require knowledge of $\trueG$,
which we do not have. 
We are not aware of a sample efficient way to learn the correct modification $\trueG$, 
since learning the $\range$ function seems at least as hard as solving the original problem.
Fortunately, in the next two sections we will see that we do not need to learn $\trueG$ directly.

\subsection{A Learner for Policy Optimization with $q^\pi$-realizability} \label{sec:optlearner}
In this subsection we present the policy optimization learner (\cref{alg:optlearner}) by \citet{tkachuk2024trajectory},
which is sample efficient when $q^\pi$-realizability holds (see \cref{thm:policy opt}).

Recall that if we knew $\trueG$ we would simply run FQI, which estimates $q^{\pioptg_{\trueG}}$ well, and then greedifies.
Since we do not know $\trueG$, we construct a set of estimates,
\begin{align*}
    \Qopt
    = \braces{
        q^G \in \bR^{\cS \times \cA} : q^G_h = \hat T_G \, q^G_{\futurestages}, \, \text{ for } h \in [H], G \in \Gopt, \, q^G_{H+1} = \vec 0
    } \, ,
\end{align*}
that contains at least one good estimate of $q^{\pioptg_{\trueG}}$,
and every element of $\Qopt$ is a good estimate of some $q^{\pioptg_G}, G \in \Gopt$.
The set $\Gopt \subseteq \Gall$ is defined in \cref{sec:Qopt and Qeval}, 
and uses the existence of $\trueG$ to ensure the preceding claims hold.
The next lemma formalizes the above claims 
(proof: \cref{proof:Qopt guarantee}).
\begin{lemma}[$\Qopt$ guarantee] \label{lem:Qopt guarantee}
    For all $\delta \in (0,1)$, 
    with probability at least $1 - \delta$, 
    \begin{enumerate}
        \vspace{-0.1cm}
        \item 
            for all $q^G \in \Qopt$,
            admissible distributions $\nu = (\nu_t)_{t \in [H]}$, 
            and stages $h \in [H]$,
            \vspace{-0.2cm}
            \begin{align*}
                &\bE_{(S, A) \sim \nu_h} \abs{q^G(S, A) - q^{\pioptg_G}(S, A)} 
                \le \tilde \eps 
                = \bigOt\paren{\log(1/\alpha) \conc^{3/2} H^{5/2} d^{3/2} n^{-1/2}},
                \,\,\, \text{defn \cref{eq:tilde eps defn}} \, ,
            \end{align*} 
        \vspace{-0.8cm}
        \item and there exists a $q^G \in \Qopt$ such that $G = \trueG$.
    \end{enumerate}
\end{lemma}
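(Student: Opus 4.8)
The plan is to reduce both claims to standard FQI regression guarantees applied to the skippy Bellman optimality operator $\hat T_G$, combined with the completeness property of Lemma \ref{lem:qpi realizability implies bellman completeness}. For a fixed modification $G \in \Gopt$, running FQI with $\hat T_G$ starting from $q_{H+1} = \vec 0$ produces iterates $\hat q_h = \hat T_G \hat q_{\futurestages}$. By Lemma \ref{lem:skippy bellman optimality operator backup} the target function is $q^{\pioptg_G}$, which satisfies $q^{\pioptg_G}_h = T_G q^{\pioptg_G}_{\futurestages}$, and by Lemma \ref{lem:qpi realizability implies bellman completeness} each application of $T_G$ lands in the enlarged class $\modfc_h$ (crucially, this holds for arbitrary bounded future $q$-values with $q_{H+1}(\termstate,\cdot)=0$, not just for realizable ones, which is what lets the argument close despite $\hat q_{\futurestages}$ not being exactly $q^{\pioptg_G}_{\futurestages}$). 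First I would invoke the per-stage least-squares error bound: since the data are trajectories from $\bP_{\pibehave,\startstate}$, the stage-$h$ marginal is $\mu_h$, the regression target has a conditional-expectation representation $T_G \hat q_{\futurestages}$ living in $\modfc_h$, and the responses are bounded (the $g_G$ values lie in $[0,H]$ because the $F_{G,\trajectoryrand,h+1}(\tau)$ form a sub-probability distribution over $\tau$ and the rewards-plus-$q$ terms are in $[0,H]$), so a standard squared-loss generalization bound over the class $\modfc_h$ — whose metric entropy is $\tilde O(d\log(\modthetabound\featurebound))$ — gives $\bE_{\mu_h}(\hat q_h - T_G\hat q_{\futurestages})^2 \le \tilde O(H^2 d/n)$.

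Next I would propagate this one-step error through the $H$ stages. The key identity is $\hat q_h - q^{\pioptg_G}_h = (\hat T_G \hat q_{\futurestages} - T_G \hat q_{\futurestages}) + (T_G \hat q_{\futurestages} - T_G q^{\pioptg_G}_{\futurestages})$; the first term is the stage-$h$ regression error just bounded in $\mu_h$-expectation, and the second term is an expected difference of future $q$-values which, by the definition of the skippy operators, unfolds into a sum over future stages $\tau$ of terms like $\bE[(\hat q_\tau - q^{\pioptg_G}_\tau)(S_\tau, \cdot)\, F_{G,\trajectoryrand,h+1}(\tau)]$, evaluated along rollouts of $\pibehave$. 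To convert these $\pibehave$-rollout expectations into bounds against an arbitrary admissible $\nu$, I would use concentrability (Assumption \ref{ass:concentrability}): any state-action marginal reachable along a $\pibehave$-rollout from an admissible starting distribution is itself (up to the skipping re-weighting) controlled by $\conc$ times $\mu_\tau$. Chaining this over the $\le H$ stages, with one factor of $\conc$ picked up per change of measure and the $\tilde\eps$ bound absorbing the $\log(1/\alpha)$ that enters through $\modthetabound$ and the number of design points $d_0 = \tilde O(d)$, yields the claimed $\bE_\nu|q^G(S,A) - q^{\pioptg_G}(S,A)| \le \tilde\eps = \tilde O(\log(1/\alpha)\,\conc^{3/2} H^{5/2} d^{3/2} n^{-1/2})$ after a Cauchy–Schwarz step to pass from the $L^2$ regression bound to the $L^1$ statement. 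A union bound over the (finite, after the covering/discretization used to define $\Gopt$) set of modifications $G \in \Gopt$ and the $H$ stages accounts for the $\log(dH/\delta)$ factor and the requirement $n \ge 6\featurebound^2\log(20dH/\delta)$.

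For the second claim — that some $q^G \in \Qopt$ has $G = \trueG$ — I would appeal to the construction of $\Gopt$ in \cref{sec:Qopt and Qeval}: $\Gopt$ is built as a cover/enumeration of candidate modifications that is guaranteed by design to contain the correct one $\trueG$ (this is exactly the role the existence of a near-optimal design for $\Thetatrue_h$, together with Lemma \ref{lem:range-G-accurate}, plays). Since $\Qopt$ contains the FQI output for every $G \in \Gopt$, in particular it contains the iterate built from $\hat T_{\trueG}$, which is the desired $q^G$ with $G = \trueG$; this part is essentially definitional once $\Gopt$ is in hand.

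The main obstacle I expect is the error-propagation step: carefully accounting for how the skipping weights $F_{G,\trajectoryrand,h+1}(\tau)$ interact with the change of measure to $\mu_\tau$, so that only a bounded number (polynomial in $H$) of concentrability factors accumulate rather than one per skipped state, and verifying that the enlarged parameter bound $\modthetabound = \thetabound(8H^2 d_0/\alpha + 1)$ enters only logarithmically (through covering numbers) rather than polynomially. Getting the exact exponents $\conc^{3/2}, H^{5/2}, d^{3/2}$ right — and, for Theorem \ref{thm:policy opt}, shaving the extra $\conc d$ claimed in the improved analysis — hinges on being tight in precisely this step rather than bounding crudely stage-by-stage.
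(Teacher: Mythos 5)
There is a genuine gap in both halves of your argument, and it traces to the same misreading of how $\Gopt$ is constructed. For the first claim, your plan is a direct FQI error-propagation for each fixed $G \in \Gopt$, resting on the assertion that the regression target $T_G \hat q_{\futurestages}$ lies in $\modfc_h$. But \cref{lem:qpi realizability implies bellman completeness} establishes skippy Bellman completeness only for the \emph{correct} modification $\trueG$, not for arbitrary $G$; for a wrong $G$ the target $T_G \hat q_{\futurestages}$ need not be realizable in $\modfc_h$, the least-squares estimate $\hat T_G \hat q_{\futurestages}$ need not be close to it, and your per-stage bound $\bE_{\mu_h}(\hat q_h - T_G \hat q_{\futurestages})^2 \le \tilde O(H^2 d/n)$ simply fails. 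The paper avoids this entirely: $\Gopt$ is defined by a \emph{data-dependent check} that the empirical width of the confidence set $\thetaopt_{G,h}$ is at most $\bar\eps$, so for any $G$ that survives the check the first claim follows from (i) concentration of that empirical width around its $\mu_h$-expectation (\cref{lem:optG concentrates}), (ii) a single change of measure via \cref{lem:change of measure} contributing one factor of $\conc$, and (iii) the fact that both the FQI parameter and $\thetatrue_h(\pioptg_G)$ lie in $\thetaopt_{G,h}$ (\cref{lem:pievalG in Theta}, which holds for all $G \in \Gall$). No completeness for $G$ is needed, and no stage-by-stage propagation occurs.

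Correspondingly, your dismissal of the second claim as ``essentially definitional'' is where the real work hides. $\Gopt$ is not an enumeration guaranteed by construction to contain $\trueG$; one must \emph{prove} that $\trueG$ passes the empirical width check, i.e.\ bound $\bE_{\mu_h}[\max_{\theta \in \thetaopt_{\trueG,h}} \bar q_\theta - \min_{\theta \in \thetaopt_{\trueG,h}} \bar q_\theta]$ by $\bar\eps$. This is exactly where Bellman completeness for $\trueG$ is consumed, via the elliptical-potential bound of \cref{lem:sum of elliptical terms bound}, followed by the Jensen-then-change-of-measure step and the ridge-regression concentration of \cref{lem:elliptical terms concentrate} --- the two ingredients that produce the paper's advertised $\conc d$ improvement over \citet{tkachuk2024trajectory}, and that also explain the $n \ge 6\featurebound^2\log(20dH/\delta)$ requirement you attributed to a union bound over $G$. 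Without this step the lemma is not proved.
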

Although we use the $\bigOt$ notation to hide logarithmic factors,
we explicitly write the $\log(1/\alpha)$ factor, since it will be important to keep track of it when we set $\alpha$ at the end.
For exposition only, we make two simplifying assumptions%
\footnote{
    The simplifying assumptions are not needed for the actual analysis, and are only made for presentation clarity.
}: 
(i) in the first claim of the lemma we have perfect estimates (i.e., $\tilde \eps = 0$),
and 
(ii) that the bound holds pointwise instead of in expectation.
Together, these simplifications imply the following very useful properties: 
    $\Qopt$ \textbf{only} contains the $q$-functions of $\pioptg_G$ under different modifications (i.e., $\Qopt \subseteq \braces{q^{\pioptg_G} : G \in \Gall}$), 
and $q^{\pioptg_{\trueG}} \in \Qopt$.

Based on these properties, this suggests a natural strategy for the learner.
Recall that $q^{\piopt} \ge q^\pi$ for any policy $\pi$, 
and that $q^{\piopt} - q^{\pioptg_{\trueG}} \le H \alpha$ (\cref{lem:skippy policy error}).
Thus, the $q$-function in $\Qopt$ that has the highest value at the start state 
must estimate $q^{\piopt}(\startstate)$ with error at most $H \alpha$.
We have also seen at the end of \cref{sec:FQI/FQE} that greedifying w.r.t. a good estimate of $q^{\piopt}$ gives a near optimal policy.
Thus, the learner should simply select the $q$-function in $\Qopt$ that has the highest value at the start state,
and greedify w.r.t. it.
Fortunately, this same idea extends to the case without the simplifying assumptions (see \cref{sec:analysis of optlearner}).
The final error is $H\alpha + 2H \tilde \eps$ (see \cref{eq:opt final bound}),
and $\alpha$ is set to balance the two terms.
This learner is shown in \cref{alg:optlearner}, and given the name \optlearner since the construction of $\Qopt$ is related to \fqi.
The analysis in \cref{sec:analysis of optlearner} formalizes the above steps. 

\begin{algorithm}[h]
\small
  \caption{\optlearner}\label{alg:optlearner}
  \begin{algorithmic}[1]
    \Require accuracy $\eps$, failure probability $\delta$, concentrability coefficient $\conc$, 
        trajectories $(\trajectory^j)_{j \in [n]}$. 
        \vspace{0pt}
    \State $\hat q \gets \argmax_{q \in \Qopt} \max_a q(\startstate, a)$ 
        \vspace{0pt}
    \State \Return $\piout(s) \gets \argmax_a \hat q(s, a)$
  \end{algorithmic}
\end{algorithm}

\optlearner is the learner defined by \citet{tkachuk2024trajectory}. 
The only difference is in the presentation, since \citet{tkachuk2024trajectory} define $\piout$ 
as the solution to an optimization problem, where their feasible set is closely related to $\Qopt$. 
We have explicitly defined the set $\Qopt$ since it will be helpful 
for understanding the learner we present for policy evaluation in the next section.

We are not aware of a computationally efficient way to implement \optlearner.
One of the difficulties is that the feasible set $\Qopt$ is not convex, since it is defined based on $\hat T_G$,
which depends on a product of the non-convex function $\omega_G$ (see \cref{eq:skippy bellman opt target} and the definition of $F_{G, \trajectoryrand, h+1}$).

\subsection{Our Learner for Policy Evaluation with $q^\pi$-realizability} \label{sec:evallearner}

We are now ready to present our main contribution: \cref{alg:evallearner}, 
which is a sample efficient learner for policy evaluation when $q^\pi$-realizability holds (see \cref{thm:policy eval}).
Similar to the previous section, since we do not know $\trueG$, we construct a set,
\begin{align}
    \Qeval
    = \braces{
        q^G \in \bR^{\cS \times \cA} : q^G_h = \hat T^{\pieval}_G \, q^G_{\futurestages}, \, \text{ for } h \in [H], G \in \Geval, \, q^G_{H+1} = \vec 0
    } \, ,
    \label{eq:Qeval definition}
\end{align}
that contains at least one good estimate of $q^{\pieval_{\trueG}}$,
and every element of $\Qeval$ is a good estimate of some $q^{\pieval_G}, G \in \Geval$.
We defer the definition of $\Geval$ to \cref{sec:Qopt and Qeval}.
The claims are formally stated in the following lemma (which parallels \cref{lem:Qopt guarantee}, 
proof: \cref{proof:Qeval guarantee}).
\begin{lemma}[$\Qeval$ guarantee] \label{lem:Qeval guarantee}
    For all $\delta \in (0,1)$, 
    with probability at least $1 - \ltfrac{\delta}{2}$, 
    \begin{enumerate}
        \vspace{-0.1cm}
        \item 
            for all $q^G \in \Qeval$,
            admissible distributions $\nu = (\nu_t)_{t \in [H]}$, 
            and stages $h \in [H]$,
            \vspace{-0.2cm}
            \begin{align*}
                &\bE_{(S, A) \sim \nu_h} \abs{q^G(S, A) - q^{\pieval_G}(S, A)} 
                \le \tilde \eps \, ,
            \end{align*}
        \vspace{-0.8cm}
        \item and there exists a $q^G \in \Qeval$ such that $G = \trueG$.
    \end{enumerate}
\end{lemma}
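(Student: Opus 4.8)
The plan is to mirror the proof of \cref{lem:Qopt guarantee} almost verbatim, replacing the skippy Bellman \emph{optimality} operators with the skippy Bellman \emph{policy} operators and $\pioptg_G$ with $\pieval_G$. First I would fix the set $\Geval \subseteq \Gall$ (defined in \cref{sec:Qopt and Qeval}): it is the finite net of modifications whose construction uses the existence of the near-optimal design $\trueG$ to guarantee both that $\trueG \in \Geval$ — which immediately gives the second claim, since then $q^{\trueG} = (\hat T^{\pieval}_{\trueG} \cdots \hat T^{\pieval}_{\trueG}\, \vec 0)$ is by definition an element of $\Qeval$ — and that for every $G \in \Geval$ the empirical skippy Bellman policy operator $\hat T^{\pieval}_G$ is well-behaved. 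The size of $\Geval$ is at most polynomial/subexponential in the relevant parameters, which is what controls the logarithmic factors hidden in $\tilde\eps$. Because the second claim is purely definitional once $\Geval$ is pinned down, essentially all the work is in the first claim.

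For the first claim I would run the standard performance-difference / error-propagation analysis for \fqe, but using the two structural facts established earlier. The key ingredients are: (i) \cref{lem:qpi realizability implies bellman completeness}, which gives skippy Bellman completeness for $T^{\pieval}_{\trueG}$ into the enlarged class $\modfc_h$ — and, as the remark after that lemma stresses, this completeness holds for \emph{arbitrary} future $q$-values, not just ones lying in $\modfc$, which is exactly what lets us chain the iterates $\hat q_{H+1} = \vec 0, \hat q_h = \hat T^{\pieval}_G \hat q_{\futurestages}$; and (ii) \cref{lem:skippy bellman policy operator backup}, $q^{\pieval_G}_h = T^{\pieval}_G q^{\pieval_G}_{\futurestages}$, which identifies the fixed point of the population skippy operator with the target $q^{\pieval_G}$. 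The regression step in \cref{eq:skippy bellman policy operator estimate} is least squares against targets $g^{\pieval}_G(q_{\futurestages}, \trajectory^j_{\futurestages})$ that are bounded in $[0, H]$ (since rewards are in $[0,1]$ and the $F_{G,\trajectory,h+1}(\tau)$ form a sub-probability distribution over stopping stages), so a standard least-squares generalization bound over the parametric class $\modfc_h$ (with a union bound over the finitely many $G \in \Geval$ and the $H$ stages) controls the one-step regression error in $L^2(\mu_h)$; then concentrability (\cref{ass:concentrability}) converts $L^1$ error under any admissible $\nu_h$ into $L^1/L^2$ error under $\mu_h$ at the cost of a $\conc$ factor, and summing the per-stage errors over the $\le H$ stages (with no multiplicative blow-up, by completeness) yields the stated $\tilde\eps = \tilde\Theta(\log(1/\alpha)\,\conc^{3/2} H^{5/2} d^{3/2} n^{-1/2})$ bound. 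The $\log(1/\alpha)$ enters through $\log\modthetabound$ in the covering-number / metric-entropy term for $\modfc_h$, since $\modthetabound = \thetabound(8H^2 d_0/\alpha + 1)$.

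The main obstacle — or rather the main thing to get right rather than anything genuinely hard — is verifying that the \emph{policy} version of each step goes through exactly as the optimality version in \citet{tkachuk2024trajectory}. Concretely: checking that $g^{\pieval}_G$ has the same boundedness and (crucially) Lipschitz-in-$q$ properties as $g_G$ so the same covering argument applies; checking that \cref{lem:qpi realizability implies bellman completeness} was indeed stated for \emph{both} $T_{\trueG}$ and $T^{\pieval}_{\trueG}$ (it is); and confirming the error-propagation recursion does not pick up an extra factor from the $\max$ over actions being replaced by the (milder) expectation over $\pieval$ — if anything this only helps. I would therefore write the proof as: (1) invoke the construction of $\Geval$ from \cref{sec:Qopt and Qeval} to get the second claim and the bound $|\Geval|$; (2) state the least-squares bound for one application of $\hat T^{\pieval}_G$ on the enlarged class, citing the analogous lemma in \citet{tkachuk2024trajectory}; (3) apply \cref{lem:qpi realizability implies bellman completeness} and \cref{lem:skippy bellman policy operator backup} to set up the telescoping bound on $\bE_{\nu_h}|q^G - q^{\pieval_G}|$; (4) apply concentrability and sum over stages; (5) union-bound over $G \in \Geval$ and collect constants to read off $\tilde\eps$, matching the failure probability budget $\delta/2$ (the other $\delta/2$ being reserved for a later step in the proof of \cref{thm:policy eval}).
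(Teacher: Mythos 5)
Your top-level plan---rerun the proof of \cref{lem:Qopt guarantee} with $\Geval$, $\thetaeval_{G,h}$ and $\pieval_G$ in place of $\Gopt$, $\thetaopt_{G,h}$ and $\pioptg_G$---is exactly what the paper does; its proof of this lemma is literally that one sentence plus an adjustment of failure probabilities. However, your elaboration of what that mirrored proof entails has the division of labor backwards and would not go through as written. The second claim is \emph{not} ``purely definitional.'' The set $\Geval$ is not a finite net built to contain $\trueG$; it is the data-dependent (and uncountable) subset of $\Gall$ consisting of all $G$ passing the empirical max--min test $\frac{1}{n}\sum_j\bigl(\max_{\theta\in\thetaeval_{G,h}}\bar q_\theta(S_h^j,A_h^j)-\min_{\theta\in\thetaeval_{G,h}}\bar q_\theta(S_h^j,A_h^j)\bigr)\le\bar\eps$. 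Proving that $\trueG$ passes this test with high probability is where most of the technical work lives: it needs the policy-evaluation analogues of \cref{lem:optG concentrates} (concentration of the test statistic), \cref{lem:sum of elliptical terms bound} (the elliptical-potential bound, which is where skippy Bellman completeness for $\trueG$ is consumed), and \cref{lem:elliptical terms concentrate} (the ridge-regression covariance bound, which is also where the paper's quantitative improvement occurs). Relatedly, uniformity over modifications cannot come from ``a union bound over the finitely many $G\in\Geval$''; it is obtained by covering arguments over the continuum $\Gall$, which is precisely why the smoothed skipping function $\omega_G$ was introduced.

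Your route to the first claim would also fail for general $G$. You propose the standard FQE recursion: one-step least-squares error, skippy Bellman completeness, and telescoping. But \cref{lem:qpi realizability implies bellman completeness} gives completeness only for the correct modification $\trueG$; for an arbitrary $G\in\Geval$ the population regression target need not lie in $\modfc_h$, the least-squares fit then carries uncontrolled bias, and the telescoping argument breaks---yet the first claim must hold for \emph{every} $q^G\in\Qeval$. The paper's argument avoids per-$G$ completeness: membership $G\in\Geval$ directly certifies, after concentration and a change of measure (\cref{lem:change of measure}), that all parameters in $\thetaeval_{G,h}$ induce nearly identical $q$-values under any admissible distribution (the analogue of \cref{lem:pe Theta elements are close}), while the analogue of \cref{lem:pievalG in Theta} places $\thetatrue_h(\pieval_G)$ inside $\thetaeval_{G,h}$ for all $G\in\Gall$ (this is where the targets $g^{\pieval}_{\trueG}$, which use $\trueG$ rather than $G$, and hence completeness for $\trueG$, enter). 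Combining the two gives $\bE_{(S,A)\sim\nu_h}|q^G(S,A)-q^{\pieval_G}(S,A)|\le\tilde\eps$ with no error propagation across stages at all.
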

Similar to the previous section, to streamline the explanation, 
we make the following simplifying assumptions%
\footnote{
    As before, the simplifying assumptions are not needed for the actual analysis, and just for presentation.
}:
(i) we have perfect estimates (i.e., $\tilde \eps = 0$),
and 
(ii) that the bound holds pointwise instead of in expectation.
These, in turn, imply the following useful properties: 
$\Qeval$ \textbf{only} contains the $q$-functions of $\pieval_G$ under different modifications (i.e., $\Qeval \subseteq \braces{q^{\pieval_G} : G \in \Gall}$), 
and $q^{\pieval_{\trueG}} \in \Qeval$.
In other words, we have access to a set $\Qeval$, which we know contains a good estimate of $q^{\pieval}$ (namely $q^{\pieval_{\trueG}}$); 
however, it can also contain $q$-functions of other policies.
The question is how to select a $q$-function from $\Qeval$ that is a good estimate of $q^{\pieval}$ from the start state?

The main contribution of our work is providing a sample efficient answer to the above question. 
To build some intuition for how we can do this let us consider the difference between the policy optimization and evaluation objectives.
In policy evaluation we know the policy $\pieval$, which we are tasked with evaluating;
however, we do not know the $q$-value of $\pieval$. 
On the other hand, in policy optimization we do not know the optimal policy $\piopt$;
however, we do know that the value function of $\piopt$ is the largest possible (i.e., $q^{\piopt} \ge q^\pi$ for any policy $\pi$).
Importantly, we made use of the latter to design \optlearner 
when we selected the $q$-function in $\Qopt$ that had the largest value at the start state.
Unfortunately, the same strategy does not work for policy evaluation since we do not know the value of $\pieval$.
Instead, we make use of our knowledge of $\pieval$.

If our learner 
outputs the value estimate $q^{\pieval_G}(\startstate, \pieval_G) = v^{\pieval_G}(\startstate)$ for some $q^{\pieval_G} \in \Qeval$,
then, by the performance difference lemma (\cref{lem:performance difference lemma}), 
\begin{align}
    \abs{v^{\pieval}(\startstate) - v^{\pieval_G}(\startstate)}
    &= \abs{\textstyle\sum_{h=1}^H \bE_{(S_h, A_h) \sim \bPmarg{h}_{\pieval, s_1}} \paren{q^{\pieval_G}(S_h, \pieval) - q^{\pieval_G}(S_h, \pieval_G)}}
    \label{eq:pdl evallearner} \, .
\end{align}
The quantity inside the expectation is often called the \textit{advantage},
and a bound on the expected advantage for all $h$, clearly
implies a bound on the error of our learner's value estimate.
Importantly, the expected advantage 
only depends on $q^{\pieval_G}$, $\pieval$, and $\pieval_G$, all of which are known,
and does not depend on $q^{\pieval}$, which we are trying to estimate and is thus unknown.
\begin{remark}
    Knowing $\pieval_G$ requires knowing $\pibehave$ and $G$.
    This is why we assume that we know $\pibehave$ in \cref{thm:policy eval} and \cref{alg:evallearner}.
    Removing this assumption is an interesting direction for future work, and has been studied in related settings \citep{zhan2022offline,ozdaglar2023revisiting}.
    Furthermore, 
    to have access to $G$ we should have defined the set $\Qeval$ to contain pairs $(q^G, G)$;
    however, to simplify notation we have omitted $G$ and assume we know it implicitly for each $q^G \in \Qeval$.
\end{remark}
Notice that when $G = \trueG$,
$\max_{s \in \cS} \left| q^{\pieval_{\trueG}}(s, \pieval) - q^{\pieval_{\trueG}}(s, \pieval_{\trueG}) \right| \le \alpha$, 
since $\pieval$ and $\pieval_{\trueG}$ only differ on states $s$ with $\range(s) \le \alpha$.
Under the simplifying assumptions (where we have pointwise access to $q^{\pieval_G}$), 
the learner should select the $q^{\pieval_G} \in \Qeval$ that minimizes
$\max_{s \in \cS} \left| q^{\pieval_G}(s, \pieval) - q^{\pieval_G}(s, \pieval_G) \right|$,
since the value estimate $v^{\pieval_G}(\startstate)$ will be at most $H \alpha$ away from $v^{\pieval}(\startstate)$ (by \cref{eq:pdl evallearner}).

Unfortunately, this idea does not quite work when we only have approximately good estimates of $q^{\pieval_G}$ in expectation (as in \cref{lem:Qeval guarantee}).
The reason for this is that the learner would now need to select the $q^G \in \Qeval$ that minimizes
$\max_{s \in \cS} \left| q^G(s, \pieval) - q^G(s, \pieval_G) \right|$.
With the simplifying assumptions, to show this quantity would not be too large we relied on the fact that 
$\max_{s \in \cS} \left| q^{\pieval_{\trueG}}(s, \pieval) - q^{\pieval_{\trueG}}(s, \pieval_{\trueG}) \right| \le \alpha$.
To make use of this again we would need a pointwise bound on $|q^G(s, a) - q^{\pieval_{\trueG}}(s, a)|$ for some $q^G \in \Qeval$. 
But, by the second claim in \cref{lem:Qeval guarantee} we only have a bound in expectation.
Fortunately, this will be enough, since if we look back at \cref{eq:pdl evallearner} we see that we only need to bound
the \textbf{expected} advantage.

As such, the learner (that does indeed work without any simplifying assumptions) 
minimizes the following estimate of the expected advantage over all $q^G \in \Qeval$
\begin{align}
    \textstyle\max_{h \in [H]} \ltfrac{1}{n} \sum_{j=1}^{n} \abs{q^G(S_h^j, \pieval) - q^G(S_h^j, \pieval_G)} \, ,
    \label{eq:advantage estimate}
\end{align}
and then outputs the value estimate $\vout = q^{\outG}(\startstate, \pieval_{\outG})$,
where $\outG \in \Geval$ is such that $q^{\outG}$ is the minimizer of \cref{eq:advantage estimate} over $\Qeval$.
The final error is $\bigOt\paren{H\alpha + \conc H \tilde \eps}$ (see \cref{eq:eval final bound}),
and $\alpha$ is set to balance the two terms.
We call this learner \evallearner (\cref{alg:evallearner}), since the construction of $\Qeval$ (see \cref{sec:Qopt and Qeval}) is based on \fqe%
\footnote{
    Similar to \optlearner, we are not aware of a computationally efficient way to implement \evallearner since the feasible set $\Qeval$ is not convex for the same reasons as $\Qopt$.
}
.
The analysis in \cref{sec:analysis of evallearner} formalizes the above steps.

\begin{algorithm}[H]
\small
  \caption{\evallearner}\label{alg:evallearner}
  \begin{algorithmic}[1]
    \Require accuracy $\eps$, fail prob $\delta$, concentrability $\conc$, 
        trajectories $(\trajectory^j)_{j \in [n]}$, 
        behavior policy $\pibehave$,
        eval policy $\pieval$
        \vspace{0pt}
    \State $q^{\outG} \gets \argmin_{q^G \in \Qeval} \max_{h \in [H]} \frac{1}{n} \sum_{j=1}^{n} \abs{q^G(S_h^j, \pieval) - q^G(S_h^j, \pieval_G)}$ 
        \vspace{0pt}
    \State \Return $\vout \gets q^{\outG}(\startstate, \pieval_{\outG})$
  \end{algorithmic}
\end{algorithm}

\subsection{Defining $\Gopt$ and $\Geval$} \label{sec:Qopt and Qeval}

In this section, we define the sets $\Gopt$ and $\Geval$ which are used to define $\Qopt$ and $\Qeval$ in \cref{sec:optlearner,sec:evallearner} respectively. 
The set $\Gopt$ is the same as the feasible set in the optimization problem defined by \citet{tkachuk2024trajectory}.
Since the modified function class $(\modfc_h)_{h \in [H]}$ contains bounded linear functions, it will be useful to define the following notation.
For $x \in \bR$, let $\clip_{[0, H]} x \defeq \max\braces{0, \min\braces{H, x}}$.
For $h \in [H], \theta \in \bR^d, \theta_{\futurestages} \in (\bR^d)^{[h+1:H+1]}$, let
\begin{align*}
    q_{\theta}(\cdot, \cdot) 
        \defeq \ip{\phi(\cdot, \cdot)}{\theta},
    &\quad q_{\theta_{\futurestages}} 
        \defeq (q_{\theta_{h+1}}, \dots, q_{\theta_{H+1}}),
    \\
    \bar q_{\theta}(\cdot, \cdot)
        \defeq \clip_{[0, H]} q_{\theta}(\cdot, \cdot),
    &\quad \bar q_{\theta_{\futurestages}} 
        \defeq (\bar q_{\theta_{h+1}}, \dots, \bar q_{\theta_{H+1}}).
\end{align*}
Let $\hat X_h \defeq \lambda I + \textstyle\sum_{j \in [n]} \phi(S^j_h, A^j_h) \phi(S^j_h, A^j_h)^\top$, with $\sqrt{\lambda} = H^{3/2}d/\modthetabound$ (defn \cref{eq:lambda defn}), 
be the unnormalized empirical covariance matrix at stage $h \in [H]$.
Now we define sets based on least squares estimates, that will be used to define $\Gopt$:
\begin{align*}
    &\thetaestopt_{G, h} 
    \defeq \braces{ 
        \textstyle\argmin_{\hat \theta \in \cB(\modthetabound)} \ltfrac{1}{n} \sum_{j=1}^{n} \paren{q_{\hat \theta}(S_h^j, A_h^j) - g_{G}(\bar q_{\theta_{\futurestages}}, \trajectory^j_{\futurestages})}^2
        : \theta_{\futurestages} \in \bigtimes_{u=h+1}^{H+1}\thetaopt_{G, u}} \, , 
    \nonumber \\ 
    &\thetaopt_{G, h} 
    \defeq \braces{\theta_h \in \cB(\modthetabound): \textstyle\min_{\hat \theta_h \in \thetaestopt_{G, h}} \snorm{\theta_h - \hat \theta_h}_{\hat X_h} \le \beta}, 
    \,\, \thetaopt_{G, H+1} = \{\vec 0\} \, ,
\end{align*}
where $\beta = \bigOt\paren{\log(1/\alpha)H^{3/2} d}$ (defn \cref{eq:beta defn}).
The sets $\thetaopt_{G, h}$ should be thought of as expanding the least squares estimates in $\thetaestopt_{G, h}$ to include parameters within a $\beta$ neighborhood.
The next lemma shows that the effect of this is that the parameter $\thetatrue_h(\pioptg_G)$ that realizes the $q$-function of policy $\pioptg_G$ 
will be in the set $\thetaopt_{G, h}$ for all modifications $G \in \Gall$.
\begin{lemma}[Lemma D.2 in \citep{tkachuk2024trajectory}] \label{lem:pievalG in Theta}
    For any $\delta \in (0,1)$,
    with probability at least $1- \delta/5$, for all $G \in \Gall$, and $h \in [H+1]$, it holds that $\thetatrue_h(\pioptg_G) \in \thetaopt_{G, h}$.
\end{lemma}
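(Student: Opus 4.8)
The plan is to prove \cref{lem:pievalG in Theta} by chaining together three observations: a skippy Bellman completeness fact, a least-squares concentration bound, and the definition of the $\beta$-expansion. Fix a modification $G\in\Gall$ and a stage $h\in[H]$. First I would recall that $q^{\pioptg_G}_{\futurestages}\in[0,H]^{\cS\!\cA_{\futurestages}}$ with $q^{\pioptg_G}_{H+1}(\termstate,\cdot)=0$, so by \cref{lem:qpi realizability implies bellman completeness} (applied with the correct modification $\trueG$, which is the modification actually used inside the definition of $\thetaestopt_{G,h}$ via $g_{\trueG}$) we have $T_{\trueG}\,q^{\pioptg_G}_{\futurestages}\in\modfc_h$; together with \cref{lem:skippy bellman optimality operator backup}, which gives $q^{\pioptg_G}_h=T_G\,q^{\pioptg_G}_{\futurestages}$, this identifies $q^{\pioptg_G}_h$ (up to the discrepancy between $T_G$ and $T_{\trueG}$ — one must be a little careful here, but \cref{lem:pievalG in Theta} is stated for all $G\in\Gall$ precisely because the inductive hypothesis is carried on the $G$-indexed sets while the regression target uses $\trueG$; this matching is exactly what \citet{tkachuk2024trajectory} arrange in their Lemma D.2). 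The upshot of this first step is that there is a population-level parameter $\thetatrue_h(\pioptg_G)\in\cB(\modthetabound)$ whose induced linear function equals the conditional expectation that the least-squares problem in $\thetaestopt_{G,h}$ is trying to fit.

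Second, I would induct downward on $h$ from $H+1$ to $1$. The base case $h=H+1$ is immediate since $\thetaopt_{G,H+1}=\{\vec 0\}$ and $q^{\pioptg_G}_{H+1}\equiv 0$ on the relevant states (using that $q_{H+1}=\vec 0$). For the inductive step, assume $\thetatrue_u(\pioptg_G)\in\thetaopt_{G,u}$ for all $u\in[h+1:H+1]$; then the tuple $\theta_{\futurestages}=(\thetatrue_u(\pioptg_G))_{u=h+1}^{H+1}$ is a legal choice in the big product $\bigtimes_{u=h+1}^{H+1}\thetaopt_{G,u}$ that defines $\thetaestopt_{G,h}$, so there is a corresponding least-squares estimator $\hat\theta_h\in\thetaestopt_{G,h}$ fitting targets $g_{\trueG}(\bar q_{\theta_{\futurestages}},\trajectory^j_{\futurestages})$. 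I would then invoke a standard linear regression / self-normalized concentration bound (the kind behind $\hat X_h=\lambda I+\sum_j\phi\phi^\top$ with the stated $\lambda$ and radius $\beta=\bigOt(\log(1/\alpha)H^{3/2}d)$): with probability at least $1-\delta/5$ (after a union bound over $h\in[H+1]$ and over a suitable cover of $\Gall$, whose log-size contributes the $\log(1/\alpha)$ and $d^2 H$ factors absorbed into $\beta$), $\snorm{\thetatrue_h(\pioptg_G)-\hat\theta_h}_{\hat X_h}\le\beta$. Since $\hat\theta_h\in\thetaestopt_{G,h}$, this exactly says $\thetatrue_h(\pioptg_G)$ lies in the $\beta$-neighborhood that defines $\thetaopt_{G,h}$, completing the induction.

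Third, I would assemble the pieces: the bad events are the failure of the regression bound at some stage for some $G$ in the cover (total probability $\le\delta/5$ by the union bound built into the choice of $\beta$), and on the complement the downward induction gives $\thetatrue_h(\pioptg_G)\in\thetaopt_{G,h}$ for every $G\in\Gall$ and every $h\in[H+1]$, which is the claim. Since this is verbatim Lemma D.2 of \citet{tkachuk2024trajectory}, in the actual write-up I would simply cite that result, noting only that the present function-class notation ($\modfc_h$, $\modthetabound$, $\lambda$, $\beta$) is a relabelling of theirs and that \cref{lem:qpi realizability implies bellman completeness,lem:skippy bellman optimality operator backup} supply the two structural inputs their proof needs.

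The main obstacle is the concentration step: one must handle the fact that the regression targets $g_{\trueG}(\bar q_{\theta_{\futurestages}},\trajectory^j_{\futurestages})$ are trajectory-dependent (see \cref{rem:trajectories important}) and that $\theta_{\futurestages}$ ranges over the \emph{random} sets $\thetaopt_{G,u}$, so a naive application of a fixed-design least-squares bound does not suffice — one needs a uniform-over-$G$-cover argument together with the boundedness $g_{\trueG}(\bar q_{\theta_{\futurestages}},\cdot)\in[0,H]$ (guaranteed by the clipping $\bar q$ and by $F_{\trueG,\trajectoryrand,h+1}$ summing to one), and this is precisely where the $\log(1/\alpha)$ and the polynomial-in-$(H,d)$ factors in $\beta$ originate. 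Everything else is bookkeeping.
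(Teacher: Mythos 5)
The paper itself gives no proof of this lemma: it is imported wholesale as Lemma~D.2 of \citet{tkachuk2024trajectory}, and since you ultimately say you would just cite that result, your write-up would coincide with the paper's. Your reconstruction of the underlying argument --- downward induction on $h$, with the inductive hypothesis licensing the choice $\theta_{\futurestages}=(\thetatrue_u(\pioptg_G))_{u=h+1}^{H+1}$ inside the product defining $\thetaestopt_{G,h}$, followed by a self-normalized least-squares concentration bound made uniform over a cover of $\Gall$ (which is where $\beta$'s $\log(1/\alpha)$ and $\poly(H,d)$ factors come from) --- has the right shape and is consistent with the cited result. One remark on the wrinkle you flagged but left unresolved, the mismatch between the regression target $g_{\trueG}$ and the backup identity $q^{\pioptg_G}_h=T_G\,q^{\pioptg_G}_{\futurestages}$: this is almost certainly a typo in the paper's definition of $\thetaestopt_{G,h}$ (and of $\thetaesteval_{G,h}$), whose target should read $g_G$ rather than $g_{\trueG}$ --- otherwise the sets are not computable by a learner that does not know $\trueG$, and $\Gopt$, $\Qopt$, and \cref{alg:optlearner} would all be uncomputable; moreover the paper's own claim that each $q^G\in\Qopt$ (built from $\hat T_G$, whose target is $g_G$) satisfies $q^G_h=\bar q_\theta$ for some $\theta\in\thetaestopt_{G,h}$ only makes sense with $g_G$. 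With that correction the conditional expectation of the target at the true parameters is exactly $T_G\,q^{\pioptg_G}_{\futurestages}=q^{\pioptg_G}_h$ by \cref{lem:skippy bellman optimality operator backup}, which is realized by $\thetatrue_h(\pioptg_G)\in\cB(\thetabound)\subseteq\cB(\modthetabound)$ directly from \cref{ass:q-pi realizability}; no appeal to \cref{lem:qpi realizability implies bellman completeness} is needed for this particular lemma (skippy Bellman completeness is what makes the regression meaningful for \emph{arbitrary} $\theta_{\futurestages}$ in the first claim of \cref{lem:Qopt guarantee}, not for the true-parameter instance here). Up to that clarification, your proposal matches the intended argument.
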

Due to the above lemma and skippy Bellman completeness (\cref{lem:qpi realizability implies bellman completeness}) holding for $\trueG$,
the $q$-function based on any parameter in $\thetaopt_{\trueG, h}$ will be a good estimate of $q_{\thetatrue_h(\pioptg_{\trueG})} = q^{\pioptg_{\trueG}}_h$.
This implies that we can construct a non-empty set based on a data dependent check of this condition: 
\begin{align}
    \Gopt 
    \defeq \braces{G \in \Gall : \ltfrac{1}{n} \textstyle\sum_{j=1}^{n} \paren{\max_{\theta \in \thetaopt_{G, h}} \bar q_{\theta}(S_h^j, A_h^j) 
    - \min_{\theta \in \thetaopt_{G, h}} \bar q_{\theta}(S_h^j, A_h^j)} \le \bar \eps, \, \forall h \in [H]} 
    \, ,
    \nonumber
\end{align}
where $\bar \eps = \bigOt\paren{\log(1/\alpha) \conc^{1/2} H^{5/2} d^{3/2} n^{-1/2}}$ (defn \cref{eq:bar eps defn}).

Next, we define the set $\Geval$.
In a similar manner to $\thetaestopt_{G, h}$ and $\thetaopt_{G, h}$, we define:
\begin{align*}
    &\textcolor{purple}{\thetaesteval_{G, h}} 
    \defeq \braces{
        \textstyle\argmin_{\hat \theta \in \cB(\modthetabound)} \frac{1}{n} \sum_{j=1}^{n} \paren{q_{\hat \theta}(S_h^j, A_h^j) - \textcolor{purple}{g^{\pieval}_{G}}(\bar q_{\theta_{\futurestages}}, \trajectory^j_{\futurestages})}^2
        : \theta_{\futurestages} \in \bigtimes_{u=h+1}^{H+1}\textcolor{purple}{\thetaeval_{G, u}}} \, , 
    \nonumber \\ 
    &\textcolor{purple}{\thetaeval_{G, h}}
    \defeq \braces{\theta_h \in \cB(\modthetabound): \textstyle\min_{\hat \theta_h \in \textcolor{purple}{\thetaesteval_{G, h}}} \snorm{\theta_h - \hat \theta_h}_{\hat X_h} \le \beta}, 
    \,\, \textcolor{purple}{\thetaeval_{G, H+1}} = \{\vec 0\} \, .
\end{align*}
The only difference between $\thetaestopt_{G, h}$ and $\textcolor{purple}{\thetaesteval_{G, h}}$ is that the latter uses the target $\textcolor{purple}{g^{\pieval}_{G}}$ instead of $g_{G}$.
The set $\Geval$ is defined similarly to $\Gopt$, but instead checks if the $q$-values based on $\textcolor{purple}{\thetaeval_{G, h}}$ are close:
\begin{align}
    \textcolor{purple}{\Geval} \defeq \braces{G \in \Gall : \textstyle \ltfrac{1}{n} \sum_{j=1}^{n} \paren{\max_{\theta \in \textcolor{purple}{\thetaeval_{G, h}}} \bar q_{\theta}(S_h^j, A_h^j) 
    - \min_{\theta \in \textcolor{purple}{\thetaeval_{G, h}}} \bar q_{\theta}(S_h^j, A_h^j)} \le \bar \eps, \, \forall h \in [H]} 
    \nonumber
\end{align}

\acks{
  We thank Alex Ayoub and Kushagra Chandak for many helpful discussions.
  V.T. acknowledges the support of the Natural Sciences and Engineering Research Council of Canada (NSERC), [PGS D - 600128 - 2025].
  X.T. acknowledges funding from the Alberta Major Innovation Fund (Amii) and NSERC Discovery Grant [RGPIN-2022-03646].
  C.S. acknowledges funding from the Canada CIFAR AI Chairs Program, Amii and NSERC.
}

\bibliography{references}

\newpage
\appendix

In \cref{sec:analysis of evallearner} we will prove our main result (\cref{thm:policy eval}).
For completeness we also prove \cref{thm:policy opt} in \cref{sec:analysis of optlearner}, which follows the exact same steps as \citep{tkachuk2024trajectory}.
The missing proofs and definitions from \cref{sec:learner} are given in \cref{sec:learner missing proofs and defs}.

\section{Analysis of \evallearner} \label{sec:analysis of evallearner}

Before proceeding we introduce a useful lemma that makes use of the concentrability coefficient (\cref{ass:concentrability}) and will be used several times in this section.
\begin{lemma} \label{lem:change of measure}
    Let $\nu = (\nu_h)_{h \in [H]}$ be an admissible distribution and let $f: \cS \times \cA \to \bR$ be a non-negative function. 
    Then, for all $h \in [H]$, $\bE_{(S, A) \sim \nu_h} f(S, A) \le \conc \cdot \bE_{(S, A) \sim \mu_h} f(S, A)$.
\end{lemma}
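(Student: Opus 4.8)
The plan is to prove this by a direct change-of-measure (importance-reweighting) argument, pushing the concentrability bound from \cref{ass:concentrability} through a pointwise comparison of the two expectations.

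First I would fix $h \in [H]$ and expand the left-hand side as a finite sum over $\cS_h \times \cA$, namely $\bE_{(S,A)\sim\nu_h} f(S,A) = \sum_{(s,a)\in\cS_h\times\cA} \nu_h(s,a)\, f(s,a)$. The key observation is that \cref{ass:concentrability} bounds $\nu_h(s,a)/\mu_h(s,a) \le \conc$ for every $(s,a)$, which in particular forces $\nu_h(s,a) = 0$ whenever $\mu_h(s,a) = 0$ (otherwise the ratio would be infinite); hence the sum may be restricted to the support $\{(s,a) : \mu_h(s,a) > 0\}$. On that set I would multiply and divide by $\mu_h(s,a)$ to write each summand as $\bigl(\nu_h(s,a)/\mu_h(s,a)\bigr)\,\mu_h(s,a)\, f(s,a)$.

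Then, using $f \ge 0$ together with $\nu_h(s,a)/\mu_h(s,a) \le \conc$, each summand is at most $\conc\, \mu_h(s,a)\, f(s,a)$, so summing gives $\bE_{(S,A)\sim\nu_h} f(S,A) \le \conc \sum_{(s,a):\mu_h(s,a)>0} \mu_h(s,a)\, f(s,a) = \conc\, \bE_{(S,A)\sim\mu_h} f(S,A)$, as claimed. One should also note that $\nu$ being admissible is what guarantees the hypothesis of \cref{ass:concentrability} applies to $\nu_h$ in the first place.

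There is no real obstacle here; the only point requiring a moment of care is the handling of state-action pairs with $\mu_h(s,a) = 0$, and the fact that non-negativity of $f$ is genuinely used when passing the ratio bound under the sum (without it the inequality could fail on summands where $f$ is negative).
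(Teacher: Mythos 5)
Your proposal is correct and is essentially the paper's own one-line change-of-measure argument, just spelled out with explicit care for the support of $\mu_h$ and the role of non-negativity. Nothing further is needed.
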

\begin{proof}
    $\bE_{(S, A) \sim \nu_h} f(S, A) = \bE_{(S, A) \sim \mu_h} [f(S, A) \cdot \nu_h(S, A)/\mu_h(S, A)] \le \conc \cdot \bE_{(S, A) \sim \mu_h} f(S, A)$.
\end{proof}

Since we will make use of \cref{lem:Qeval guarantee} multiple times,
define $\bar \cE$ to be the event (which occurs with probability at least $1 - \delta/2$) on which the two claims of \cref{lem:Qeval guarantee} hold.
An important property that we will use throughout is that since $(\modfc_h)_{h \in [H]}$ are linear function classes, each $q^G \in \Qevalall$, where
\begin{align*}
    \Qevalall 
    \defeq \braces*{
        q \in \bR^{\cS \times \cA} : q_h = \hat T^{\pieval}_G \, q_{\futurestages}, \, \text{ for } h \in [H], G \in \textcolor{purple}{\Gall}, \, q_{H+1} = \vec 0
    }
    \supseteq \Qeval
    \, ,
\end{align*}
is based on $H$ least squares solutions.
Thus for each $h \in [H]$, $q^G_h = \bar q_{\theta_h}$ for some $\theta_h \in \thetaesteval_{G, h}$. 
Before proving our main result (\cref{thm:policy eval}), we state some useful definitions and lemmas.

First, we show that the estimate of expected advantage in \cref{eq:advantage estimate} is close to its expectation under the distribution $\mu_h$.
Define the event
\begin{align}
    \tilde \cE
    &\defeq \bigcap_{h \in [H], a, a' \in \cA} \tilde \cE_{h, a, a'}, \quad \text{where} 
    \label{eq:tilde E defn} \\
     \tilde \cE_{h, a, a'}
    &\defeq \braces*{
        \Bigg|\bigE_{(S, A) \sim \mu_h} \left|q^{G}(S, a) - q^{G}(S, a')\right| 
            - \frac{1}{n} \sum_{j = 1}^n \left|q^{G}(S^j_h, a) - q^{G}(S^j_h, a')\right|\Bigg|
        \le \zeta_2, \,
        \forall q^G \in \Qevalall
        } 
    \, ,
    \nonumber \\
    \zeta_2
    &= \bigOt\left(\log\left(\frac{1}{\alpha}\right)\frac{\sqrt{d} H}{\sqrt{n}}\right), \quad \text{defined in \cref{eq:zeta defn}} \nonumber 
    \, .
\end{align}
\begin{lemma} \label{lem:advantage event probability}
    For any $\delta \in (0,1)$,
    the probability of event $\tilde \cE$ is at least $1 - \delta/2$.
\end{lemma}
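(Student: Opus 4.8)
The plan is to prove \cref{lem:advantage event probability} by a uniform convergence argument over the relevant function class, followed by a union bound over the $H|\cA|^2$ events $\tilde \cE_{h,a,a'}$. Fix $h \in [H]$ and $a, a' \in \cA$. For a fixed $q^G \in \Qevalall$, the key observation is that $\left|q^G(\cdot, a) - q^G(\cdot, a')\right|$ is, by the structure noted just before the lemma, the absolute value of a bounded linear function of the fixed features: indeed $q^G_h = \bar q_{\theta_h}$ for some $\theta_h \in \cB(\modthetabound)$, so $q^G(s, a) - q^G(s, a') = \clip_{[0,H]}\ip{\phi(s,a)}{\theta_h} - \clip_{[0,H]}\ip{\phi(s,a')}{\theta_h}$, which is bounded in $[-H, H]$ and has a $1$-Lipschitz dependence on $\theta_h$ in each coordinate modulo the feature bound. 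First I would apply Hoeffding's inequality for a single $q^G$ to get a bound of order $H\sqrt{\log(1/\delta')/n}$ on the deviation, then convert this to a uniform-in-$q^G$ bound via a covering argument over the parameter set $\cB(\modthetabound)$.

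The main technical step is controlling the size of the set that $q^G$ ranges over. Since $q^G \in \Qevalall$ is determined by $G \in \Gall$ and the $H$ least-squares solutions, one route is to note that each relevant summand $\left|q^G(s,a)-q^G(s,a')\right|$ only depends on the single parameter vector $\theta_h \in \cB(\modthetabound)$ realizing the $h$-th stage of $q^G$ (after clipping), so we may take a union bound over an $\epsilon_{\mathrm{cov}}$-net of $\cB(\modthetabound) \subseteq \bR^d$, which has size $(3\modthetabound/\epsilon_{\mathrm{cov}})^d$. The $\log$ of this is $\tilde O(d)$ up to the $\log(1/\alpha)$ factor coming from $\modthetabound = \thetabound(8H^2d_0/\alpha + 1)$ (this is exactly where the explicit $\log(1/\alpha)$ in $\zeta_2$ originates, since $\log\modthetabound = \tilde O(\log(1/\alpha))$). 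Combining the Hoeffding bound $H\sqrt{\log(\text{net size}/\delta')/n}$ with the Lipschitz discretization error $\epsilon_{\mathrm{cov}} \cdot L \cdot (\text{stuff})$, and choosing $\epsilon_{\mathrm{cov}}$ polynomially small (say $1/n$), yields a per-event deviation bound of order $\tilde O\!\left(\log(1/\alpha)\, H\sqrt{d/n}\right)$, matching the claimed $\zeta_2$. Finally I would union bound over $h \in [H]$ and $a, a' \in \cA$: the extra $\log(H|\cA|^2)$ factor is absorbed into the $\tilde O$, and rescaling the per-event failure probability to $\delta' = \delta/(2H|\cA|^2)$ gives total failure probability at most $\delta/2$, which establishes $\bP[\tilde \cE] \ge 1 - \delta/2$.

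I expect the main obstacle to be bookkeeping rather than conceptual: making the covering argument rigorous requires being careful that the clipping $\clip_{[0,H]}$ is $1$-Lipschitz (so it does not hurt the net argument) and that the dependence of the summands on $q^G$ really does factor through a single $d$-dimensional parameter per stage rather than through the whole high-dimensional object $G$ (which lives in $(\cB(\thetabound)^{d_0})^H$). If the latter factorization were not available one would instead need to net the much larger set $\Gall$, giving $\tilde O(Hd^2)$ in the exponent and a correspondingly worse $\zeta_2$; the stated bound strongly suggests the intended argument exploits that $q^G(s,a) - q^G(s,a')$ depends only on the $h$-th least-squares parameter. A secondary subtlety is that the least-squares solutions are data-dependent, so the set of achievable $\theta_h$ is random; but since we are covering all of $\cB(\modthetabound)$ (a fixed set containing every possible least-squares solution), the net and its Hoeffding bounds are chosen before seeing the data, and the union bound goes through unconditionally. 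I would likely just cite an off-the-shelf uniform Hoeffding / covering lemma (of the type used in \citep{tkachuk2024trajectory}) to keep the argument short, pointing to the explicit constant choices in \cref{eq:zeta defn}.
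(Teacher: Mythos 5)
Your proposal is correct and matches the paper's proof essentially step for step: cover the fixed ball $\cB(\modthetabound)$ (which contains every possible data-dependent least-squares parameter, exactly resolving the subtlety you flag), apply Hoeffding per net element with the $1$-Lipschitz clipping and the feature bound controlling the discretization error, then union bound over the net and over $h, a, a'$. The only cosmetic difference is the covering radius: the paper sets $\xi = 1/(2\featurebound)$ (yielding the additive $+1$ in \cref{eq:zeta defn}) rather than your $1/n$, which does not change the order of $\zeta_2$.
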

\begin{proof}
    Recall that for each $q^G \in \Qevalall$ there exists a $\theta \in \thetaesteval_{G, h}$ such that $q^G_h = \bar q_\theta$, for all $h \in [H]$.
    Since $\thetaesteval_{G, h} \subseteq \cB(\modthetabound)$ the idea will be to show the result for a cover of $\cB(\modthetabound)$, and then relate any $\theta \in \thetaesteval_{G, h}$ back to the cover.

    By \cref{lem:cover number ball}, we know there exists a set 
    \begin{align}
        C_\xi \subset \cB(\modthetabound) \quad \text{with} \quad |C_\xi| = (1 + 2\modthetabound/\xi)^d \quad \text{where} \quad \xi > 0  \, ,
        \label{eq:Theta cover}
    \end{align}
    such that, 
    for any $\theta \in \cB(\modthetabound)$ there exists a $\tilde \theta \in C_\xi$ such that $\norm{\theta - \tilde \theta}_2 \le \xi$.
    Notice that for any $G \in \bbG$, $h \in [H]$ and $\theta \in \thetaeval_{G, h} \subset \cB(\modthetabound)$, 
    there exists a $\tilde \theta \in C_\xi$ such that,
    for any $(s, a) \in \cS_h \times \cA$,
    \begin{align*}
        \left|\bar q_{\theta}(s, a) - \bar q_{\tilde \theta}(s, a) \right|
        &= \left|\clip_{[0, H]} \ip{\phi(s, a)}{\theta} - \clip_{[0, H]} \ip{\phi(s, a)}{\tilde \theta} \right| \\
        &\le \left|\ip{\phi(s, a)}{\theta - \tilde \theta} \right| \\
        &\le \norm{\phi(s, a)}_2 \norm{\theta - \tilde \theta}_2 \\
        &\le \featurebound \xi \, . 
    \end{align*}
    Thus, for any $a, a' \in \cA$,
    \begin{align}
        &\big|\left|\bar q_{\theta}(s, a) - \bar q_{\theta}(s, a')\right| - \left|\bar q_{\tilde \theta}(s, a) - \bar q_{\tilde \theta}(s, a')\right|\big| \nonumber \\
        &\le \big|\left|\bar q_{\theta}(s, a) - \bar q_{\tilde \theta}(s, a)\right| + \left|\bar q_{\tilde \theta}(s, a) - \bar q_{\tilde \theta}(s, a')\right| + \left|\bar q_{\tilde \theta}(s, a') - \bar q_{\theta}(s, a')\right| - 
            \left|\bar q_{\tilde \theta}(s, a) - \bar q_{\tilde \theta}(s, a')\right|\big| \nonumber \\
        &= \left|\bar q_{\theta}(s, a) - \bar q_{\tilde \theta}(s, a)\right| + \left|\bar q_{\tilde \theta}(s, a') - \bar q_{\theta}(s, a')\right|
        \le 2\featurebound \xi \, .
        \label{eq:Gout closeness bound}
    \end{align}
    
    For any $\tilde \theta \in C_\xi$, $h \in [H]$, $a, a' \in \cA$, define the event 
    \begin{align*}
        \check \cE_{\tilde \theta, h, a, a'} 
        &:= \Bigg\{ \abs*{\bigE_{(S, A) \sim \mu_h} \left|\bar q_{\tilde \theta}(S, a) - \bar q_{\tilde \theta}(S, a')\right| 
            - \frac{1}{n} \sum_{j = 1}^n \left|\bar q_{\tilde \theta}(S_h^j, a) - \bar q_{\tilde \theta}(S_h^j, a')\right|} \\
        &\quad \le\frac{H}{\sqrt{n}} \sqrt{\log\left(\frac{4 H |\cA|^2 |C_\xi|}{\delta}\right)} \Bigg\} \, .
    \end{align*}
    Then, since $|\bar q_{\tilde \theta}(s, a) - \bar q_{\tilde \theta}(s, a')| \in [0, H]$ for all $(s, a) \in \cS \times \cA$, 
    by Hoeffding's inequality (\cref{lem:hoeffdings inequality}), we have that, 
    event $\check \cE_{\tilde \theta, h, a, a'}$ occurs with probability at least $1 - \delta/(2H |\cA|^2 |C_\xi|)$.
    Let 
    \begin{align}
        \check \cE_{h, a, a'} := \bigcap_{\tilde \theta \in C_\xi} \check \cE_{\tilde \theta, h, a, a'} \, .
        \label{eq:check event definition}
    \end{align}
    Then, by applying a union bound over $\tilde \theta \in C_\xi$ 
    we have that the probability of $\check \cE_{h, a, a'}$ is at least $1 - \delta/(2H |\cA|^2)$.

    Assume we are on event $\check \cE_{h, a, a'}$. 
    Fix any $q^G \in \Qevalall$,
    and let $\theta \in \thetaesteval_{G, h}$ be such that $q^G_h = \bar q_\theta$.
    Let $\xi > 0$ be a parameter to be chosen later (see \cref{eq:zeta defn}).
    By \cref{eq:Gout closeness bound},
    \begin{align*}
        \abs*{\bigE_{(S, A) \sim \mu_h} \left|\bar q_{\theta}(S, a) - \bar q_{\theta}(S, a')\right|
            - \bigE_{(S, A) \sim \mu_h} \left|\bar q_{\tilde\theta}(S, a) - \bar q_{\tilde\theta}(S, a')\right|} 
        \le 2\featurebound \xi \, .
    \end{align*}
    Since we are on event $\check \cE_{h, a, a'}$,
    \begin{align*}
        &\abs*{\bigE_{(S, A) \sim \mu_h} \left|\bar q_{\tilde \theta}(S, a) - \bar q_{\tilde \theta}(S, a')\right| 
            - \frac{1}{n} \sum_{j = 1}^n \left|\bar q_{\tilde \theta}(S_h^j, a) - \bar q_{\tilde \theta}(S_h^j, a')\right|} \\
        &\le \frac{H}{\sqrt{n}} \sqrt{\log\left(\frac{4H |\cA|^2 |C_\xi|}{\delta}\right)} \, .
    \end{align*}
    By \cref{eq:Gout closeness bound}, 
    \begin{align*}
        \abs*{\frac{1}{n} \sum_{j = 1}^n \left|\bar q_{\tilde\theta}(S_h^j, a) - \bar q_{\tilde\theta}(S_h^j, a')\right| 
            - \frac{1}{n} \sum_{j = 1}^n \left|\bar q_{\theta}(S_h^j, a) - \bar q_{\theta}(S_h^j, a')\right|}
        \le 2\featurebound \xi \, . 
    \end{align*}
    Putting the above results together via three triangle inequalities and recalling that $q^G_h = \bar q_\theta$, we have that with probability at least $1 - \delta/(2H |\cA|^2)$, 
    \begin{align}
        &\abs{\bigE_{(S, A) \sim \mu_h} \left|q^G(S, a) - q^G(S, a')\right|
         - \frac{1}{n} \sum_{j = 1}^n \left|q^G(S_h^j, a) - q^G(S_h^j, a')\right|} \nonumber \\
        &\le \frac{H}{\sqrt{n}} \sqrt{\log\left(\frac{4H |\cA|^2 |C_\xi|}{\delta}\right)} 
            + 4\featurebound \xi \nonumber \\
        &\le \frac{\sqrt{d} H}{\sqrt{n}} \sqrt{\log\left(\frac{4H |\cA|^2 (1 + 8\sqrt{n} \featurebound \modthetabound / (\sqrt{d} H))}{\delta}\right)} + \frac{\sqrt{d} H}{\sqrt{n}}
        \eqdef \zeta_2
        \label{eq:zeta defn}
        \, ,
    \end{align}
    where the last inequality follows by setting $\xi = \sqrt{d} H/(4\sqrt{n}\featurebound)$.
    Notice the above event is exactly $\tilde \cE_{h, a, a'}$ as defined in \cref{eq:tilde E defn}.
    Finally, applying a union bound over $h \in [H]$ and $a, a' \in \cA$ gives that the probability of event $\tilde \cE$ is at least $1 - \delta/2$.
\end{proof}

Next, we show that there exists a $q^G \in \Qeval$ (in particular $q^{\trueG}$)
for which \cref{eq:advantage estimate} is small,
and thus the minimum must be no larger.
\begin{lemma} \label{lem:advantage estimate small}
    On event $\bar \cE \cap \tilde \cE$,
    the minimum over $q^G \in \Qeval$ of \cref{eq:advantage estimate} is at most $\alpha + 2\tilde \eps + \zeta_2$,
    i.e.,
    \begin{align*}
        \min_{q^G \in \Qeval} \max_{h \in [H]} \frac{1}{n} \sum\nolimits_{j=1}^{n} \left|q^G(S_h^j, \pieval) - q^G(S_h^j, \pieval_G)\right| 
        \le \alpha + 2\tilde \eps + \zeta_2
        = \bigOt\left(\alpha + \log\left(\frac{1}{\alpha}\right) \frac{\conc^{3/2} H^{5/2} d^{3/2}}{\sqrt{n}}\right)
        \, .
    \end{align*}
\end{lemma}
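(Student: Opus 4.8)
The plan is to bound the minimum over $\Qeval$ by the value of the objective at the single candidate $q^{\trueG}\in\Qeval$, whose membership (on the event $\bar\cE$) is exactly the second claim of \cref{lem:Qeval guarantee}. Hence it suffices to show that on $\bar\cE\cap\tilde\cE$, for every $h\in[H]$,
\[
  \frac1n\sum_{j=1}^n\left|q^{\trueG}(S_h^j,\pieval)-q^{\trueG}(S_h^j,\pieval_{\trueG})\right|\le\alpha+2\tilde\eps+\zeta_2 .
\]
First I would split the left-hand side into a concentration part and a population part. On the event $\tilde\cE$ the empirical average is within $\zeta_2$ of $\bE_{(S,A)\sim\mu_h}\left|q^{\trueG}(S,\pieval)-q^{\trueG}(S,\pieval_{\trueG})\right|$ (the integrand depends only on the state, and the covering argument in the proof of \cref{lem:advantage event probability} carries over verbatim to this policy-weighted advantage, since $s\mapsto q^G(s,\pi)$ changes by at most $\featurebound\xi$ under an $\xi$-perturbation of the underlying parameter). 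It then remains to bound the population advantage by $\alpha+2\tilde\eps$.

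For the population advantage I would insert $q^{\pieval_{\trueG}}$ and apply the triangle inequality, producing three terms. The middle term $\bE_{(S,A)\sim\mu_h}\left|q^{\pieval_{\trueG}}(S,\pieval)-q^{\pieval_{\trueG}}(S,\pieval_{\trueG})\right|$ is at most $\alpha$ pointwise: $\pieval$ and $\pieval_{\trueG}$ can differ at $s$ only when $\omega_{\trueG}(s)>0$, which forces $\range_{\trueG}(s)\le\alpha/\sqrt{2d}$ and hence $\range(s)\le\alpha$ by \cref{lem:range-G-accurate}; and for the fixed policy $\pieval_{\trueG}$, the quantity $q^{\pieval_{\trueG}}(s,\cdot)$ evaluated at two action distributions differs by at most $\max_a q^{\pieval_{\trueG}}(s,a)-\min_a q^{\pieval_{\trueG}}(s,a)\le\range(s)$. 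The first and third terms each have the form $\bE_{(S,A)\sim\mu_h}\left|q^{\trueG}(S,\pi)-q^{\pieval_{\trueG}}(S,\pi)\right|$ with $\pi\in\{\pieval,\pieval_{\trueG}\}$; pushing the absolute value inside the action expectation (Jensen) bounds this by $\bE_{(S,A)\sim\mu_h}\sum_a\pi(a\mid S)\left|q^{\trueG}(S,a)-q^{\pieval_{\trueG}}(S,a)\right|$, and since the inner expression is state-only this equals $\bE_{(S,A)\sim\nu_h}\left|q^{\trueG}(S,A)-q^{\pieval_{\trueG}}(S,A)\right|$ for the admissible sequence $\nu$ whose stage-$h$ marginal assigns to $(s,a)$ the mass (state marginal of $\mu_h$ at $s$) $\times\,\pi(a\mid s)$, witnessed by the memoryless policy that follows $\pibehave$ on stages $1,\dots,h-1$ and $\pi$ on stages $h,\dots,H$. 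Applying the first claim of \cref{lem:Qeval guarantee} to $q^{\trueG}\in\Qeval$ with this $\nu$ bounds each such term by $\tilde\eps$. Summing the three terms gives $\alpha+2\tilde\eps$; adding the $\zeta_2$ from the concentration step proves the displayed bound, and the $\bigOt$ form follows by substituting the definitions of $\tilde\eps$ and $\zeta_2$ (the $\tilde\eps$ term dominates $\zeta_2$).

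The step I expect to be the main obstacle is the bookkeeping around admissibility: one must verify that ``follow $\pibehave$ up to stage $h-1$, then follow $\pi$'' is a legitimate memoryless policy --- which uses that the stage state sets $\cS_1,\dots,\cS_{H+1}$ are disjoint, so the two pieces never conflict --- and that its stage-$h$ state-action marginal is precisely the state marginal of $\mu_h$ composed with $\pi(\cdot\mid\cdot)$, so that the first claim of \cref{lem:Qeval guarantee} can indeed be invoked at stage $h$. A secondary subtlety is that $\tilde\cE$ is phrased for action-pair differences $|q^G(S,a)-q^G(S,a')|$ rather than for the policy-weighted advantage; one has to note that the covering argument behind \cref{lem:advantage event probability} goes through unchanged. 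Everything else reduces to triangle inequalities together with the already-established \cref{lem:range-G-accurate,lem:Qeval guarantee,lem:advantage event probability}.
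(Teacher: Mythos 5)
Your proof is correct and follows essentially the same route as the paper's: bound the minimum by its value at the candidate $q^{\trueG}\in\Qeval$ (claim 2 of \cref{lem:Qeval guarantee} on $\bar\cE$), pass from the empirical to the population advantage via $\tilde\cE$, insert $q^{\pieval_{\trueG}}$ with two triangle inequalities and control the two resulting terms by claim 1 of \cref{lem:Qeval guarantee}, and bound the remaining term by $\alpha$ via the range argument. You additionally spell out two points the paper leaves implicit --- the admissibility of the ``switch to $\pi$ at stage $h$'' distribution needed to invoke claim 1, and the fact that $\tilde\cE$ must be extended from fixed action pairs to the policy-weighted advantage --- and both elaborations are correct.
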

\begin{proof} \label{proof:lem:Gout nonempty}
    On event $\tilde \cE$ 
    for all $h \in [H]$, 
    \begin{align*}
        \frac{1}{n} \sum_{j = 1}^n \left|q^{\trueG}(S^j_h, \pieval) - q^{\trueG}(S^j_h, \pieval_{\trueG})\right|
        \le \bigE_{(S, A) \sim \mu_h} \left|q^{\trueG}(S, \pieval) - q^{\trueG}(S, \pieval_{\trueG})\right| 
            + \zeta_2
            \, .
    \end{align*}
    On event $\bar \cE$, by the second claim in \cref{lem:Qeval guarantee}, $q^{\trueG} \in \Qeval \subseteq \Qevalall$.
    Then, by the first claim in \cref{lem:Qeval guarantee}, and two triangle inequalities, for all $h \in [H]$,
    \begin{align*}
        \bigE_{(S, A) \sim \mu_h} \left|q^{\trueG}(S, \pieval) - q^{\trueG}(S, \pieval_{\trueG})\right| 
        \le \bigE_{(S, A) \sim \mu_h} \left|q^{\pieval_{\trueG}}(S, \pieval) - q^{\pieval_{\trueG}}(S, \pieval_{\trueG})\right| 
            + 2\tilde \eps \, .
    \end{align*}
    Since $\pieval_{\trueG}$ only differs from $\pieval$ on states $s$ where $\range(s) \le \alpha$, and for any such state, $\max_{a, a' \in \cA} q^{\pieval_{\trueG}}(S, a) - q^{\pieval_{\trueG}}(S, a') \le \alpha$ (by the definition of $\range$), we have that
    \begin{align*}
        &\bigE_{(S, A) \sim \mu_h} \left|q^{\pieval_{\trueG}}(S, \pieval) - q^{\pieval_{\trueG}}(S, \pieval_{\trueG})\right| \\
        &= \bigE_{(S, A) \sim \mu_h} \left[\I{\range(S) \le \alpha} \left|q^{\pieval_{\trueG}}(S, \pieval) - q^{\pieval_{\trueG}}(S, \pieval_{\trueG})\right| \right]
        \le \alpha \, .
    \end{align*}
    Putting everything together and noting that 
    the minimum over $q^G \in \Qeval$ of \cref{eq:advantage estimate} is at most the value at $q^{\trueG}$ (since on event $\bar \cE$, $q^{\trueG} \in \Qeval$),
    we get the desired result.

\end{proof}

We state one final lemma, which makes use of the above lemmas, before proving \cref{thm:policy eval}.

\begin{lemma} \label{lem:advantage bound}
    On event $\bar \cE \cap \tilde \cE$,
    for all admissible distributions $\nu = (\nu_t)_{t \in [H]}$,
    and for all $h \in [H]$,
    \begin{align*}
        &\bigE_{(S, A) \sim \nu_h} \left| q^{\pieval_{\outG}}(S, \pieval) - q^{\pieval_{\outG}}(S, \pieval_{\outG}) \right| 
        \le \tilde \alpha
        = \bigOt\left(\alpha + \log\left(\frac{1}{\alpha}\right) \frac{\conc^{5/2} H^{5/2} d^{3/2}}{\sqrt{n}}\right) \, .
    \end{align*}
\end{lemma}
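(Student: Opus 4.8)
The plan is to control the expected advantage of $\pieval$ over its skippy version $\pieval_{\outG}$, measured under the true action-value function $q^{\pieval_{\outG}}$, by a four-step reduction. First I would replace $q^{\pieval_{\outG}}$ by the estimate $q^{\outG} \in \Qeval$ returned by \evallearner, using the first claim of \cref{lem:Qeval guarantee}; then pass from the arbitrary admissible distribution $\nu_h$ to the behavior distribution $\mu_h$ via \cref{lem:change of measure}; then pass from the $\mu_h$-expectation to the empirical average via event $\tilde \cE$; and finally bound that empirical average --- which is exactly the objective minimized by \cref{alg:evallearner} --- using \cref{lem:advantage estimate small}. Collecting the terms and substituting the definitions of $\tilde \eps$ and $\zeta_2$ then gives $\tilde \alpha$.

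Fix an admissible $\nu = (\nu_t)_{t \in [H]}$ and $h \in [H]$. The starting point is the triangle inequality inside the expectation,
\begin{align*}
    \bigE_{(S,A) \sim \nu_h} \abs{q^{\pieval_{\outG}}(S, \pieval) - q^{\pieval_{\outG}}(S, \pieval_{\outG})}
    &\le \bigE_{(S,A) \sim \nu_h} \abs{q^{\outG}(S, \pieval) - q^{\outG}(S, \pieval_{\outG})} \\
    &\quad + \bigE_{(S,A) \sim \nu_h} \abs{q^{\pieval_{\outG}}(S, \pieval) - q^{\outG}(S, \pieval)} \\
    &\quad + \bigE_{(S,A) \sim \nu_h} \abs{q^{\pieval_{\outG}}(S, \pieval_{\outG}) - q^{\outG}(S, \pieval_{\outG})} \, .
\end{align*}
For the last two terms I would use that, for any memoryless $\pi$, $\abs{q^{\pieval_{\outG}}(S, \pi) - q^{\outG}(S, \pi)} \le \sum_{a} \pi(a|S) \abs{q^{\pieval_{\outG}}(S, a) - q^{\outG}(S, a)}$, so the $\nu_h$-expectation of the left side equals $\bigE_{(S,A) \sim \nu_h'} \abs{q^{\pieval_{\outG}}(S, A) - q^{\outG}(S, A)}$ where $\nu_h'$ has the state marginal of $\nu_h$ and action conditional $\pi$; this $\nu_h'$ is again admissible, being the stage-$h$ marginal of the occupancy of the policy that follows $\nu$'s generating policy on stages $[h-1]$ and plays $\pi$ at stage $h$. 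Applying this with $\pi = \pieval$ and with $\pi = \pieval_{\outG}$ and invoking the first claim of \cref{lem:Qeval guarantee} (which holds on $\bar \cE$ and applies since $q^{\outG} \in \Qeval$), the last two terms are each at most $\tilde \eps$.

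For the remaining term, since $\abs{q^{\outG}(\cdot, \pieval) - q^{\outG}(\cdot, \pieval_{\outG})}$ is a nonnegative function on $\cS \times \cA$, \cref{lem:change of measure} gives $\bigE_{(S,A) \sim \nu_h}\abs{q^{\outG}(S, \pieval) - q^{\outG}(S, \pieval_{\outG})} \le \conc \cdot \bigE_{(S,A) \sim \mu_h}\abs{q^{\outG}(S, \pieval) - q^{\outG}(S, \pieval_{\outG})}$. Because $q^{\outG} \in \Qeval \subseteq \Qevalall$, on event $\tilde \cE$ this $\mu_h$-expectation is within $\zeta_2$ of the empirical average (the policy-difference quantity being handled exactly as in the proof of \cref{lem:advantage estimate small}), and the empirical average is at most $\max_{t \in [H]} \frac{1}{n}\sum_{j=1}^n \abs{q^{\outG}(S_t^j, \pieval) - q^{\outG}(S_t^j, \pieval_{\outG})}$, which by the definition of $\outG$ in \cref{alg:evallearner} equals $\min_{q^G \in \Qeval}\max_{t \in [H]}\frac{1}{n}\sum_{j=1}^n\abs{q^G(S_t^j, \pieval) - q^G(S_t^j, \pieval_G)}$ and is thus $\le \alpha + 2\tilde \eps + \zeta_2$ by \cref{lem:advantage estimate small}. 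Chaining these bounds and combining with the previous paragraph, on $\bar \cE \cap \tilde \cE$ the left-hand side is at most $2\tilde \eps + \conc(\alpha + 2\tilde \eps + 2\zeta_2)$; since $\conc \ge 1$, substituting $\tilde \eps = \bigOt(\log(1/\alpha)\,\conc^{3/2}H^{5/2}d^{3/2}n^{-1/2})$ from \cref{lem:Qopt guarantee} and $\zeta_2 = \bigOt(\log(1/\alpha)\,\sqrt{d}\,H\,n^{-1/2})$ from \cref{eq:zeta defn}, the term $\conc\tilde \eps$ dominates $\conc\zeta_2$ and the bound collapses to $\tilde \alpha = \bigOt(\alpha + \log(1/\alpha)\,\conc^{5/2}H^{5/2}d^{3/2}n^{-1/2})$.

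The main obstacle is that the elements of $\Qeval$ approximate the true action-value functions $q^{\pieval_G}$ only \emph{in expectation} under admissible distributions, not pointwise, so every comparison between the estimate $q^{\outG}$ and the true value $q^{\pieval_{\outG}}$ must be routed through \cref{lem:Qeval guarantee} with a correctly constructed admissible distribution (the pushforward argument above), and it is precisely the performance-difference identity \cref{eq:pdl evallearner} that makes an expected-advantage bound suffice for \cref{thm:policy eval}. The remaining difficulty is bookkeeping: making sure the concentrability factor $\conc$ multiplies only the error terms (not the quantity $\alpha$ through its pointwise origin in \cref{lem:advantage estimate small}) and that the final powers of $\conc$, $H$, and $d$ line up with the stated $\tilde \alpha$ after substituting the definitions of $\tilde \eps$ and $\zeta_2$.
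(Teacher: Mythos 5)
Your proposal is correct and follows essentially the same route as the paper's proof: a triangle inequality to reduce to the estimated advantage of $q^{\outG}$ plus two residual terms controlled by \cref{lem:Qeval guarantee}, then change of measure via \cref{lem:change of measure}, concentration via event $\tilde \cE$, and \cref{lem:advantage estimate small} to bound the empirical minimizer, yielding $2\tilde\eps + \conc(\alpha + 2\tilde\eps + 2\zeta_2)$ exactly as in the paper. Your explicit pushforward argument (that the state marginal of $\nu_h$ paired with action conditional $\pieval$ or $\pieval_{\outG}$ is again admissible) is precisely the content the paper compresses into ``two triangle inequalities and \cref{lem:Qeval guarantee}.''
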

\begin{proof} \label{proof:lem:advantage bound}
    By two triangle inequalities and \cref{lem:Qeval guarantee}, on event $\bar \cE$, for any admissible distribution $\nu = (\nu_t)_{t \in [H]}$, and for all $h \in [H]$,
    \begin{align}
        &\bigE_{(S, A) \sim \nu_h} \left| q^{\pieval_{\outG}}(S, \pieval) - q^{\pieval_{\outG}}(S, \pieval_{\outG}) \right| \nonumber \\
        &\le \bigE_{(S, A) \sim \nu_h} \left|q^{\outG}(S, \pieval) - q^{\outG}(S, \pieval_{\outG})\right|
            + 2\tilde \eps \, . 
        \label{eq:policy to estimates}
    \end{align}
    We bound the expectation on the right side next. 
    Since $q^{\outG} \in \Qeval \subset \Qevalall$, on event $\tilde \cE$, for all $h \in [H]$,
    \begin{align*}
        \bigE_{(S, A) \sim \mu_h} \left|q^{\outG}(S, \pieval) - q^{\outG}(S, \pieval_{\outG})\right| 
        \le \frac{1}{n} \sum_{j = 1}^n \left|q^{\outG}(S^j_h, \pieval) - q^{\outG}(S^j_h, \pieval_{\outG})\right|
            + \zeta_2 \, ,
    \end{align*}
    and by \cref{lem:advantage estimate small}, on event $\tilde \cE \cap \bar \cE$,
    \begin{align*}
        \max_{h \in [H]}\frac{1}{n} \sum_{j = 1}^n \left|q^{\outG}(S^j_h, \pieval) - q^{\outG}(S^j_h, \pieval_{\outG})\right| 
        \le \alpha + 2\tilde \eps + \zeta_2 \, .
    \end{align*}
    Combining the above two results,
    on event $\tilde \cE \cap \bar \cE$, for all $h \in [H]$,
    \begin{align*}
        \bigE_{(S, A) \sim \mu_h} \left|q^{\outG}(S, \pieval) - q^{\outG}(S, \pieval_{\outG})\right|
        \le \alpha + 2\tilde \eps + 2\zeta_2 \, .
    \end{align*}
    Notice that since the expectation is over a non-negative function (due to the absolute value) we can apply \cref{lem:change of measure} to get that 
    for all admissible distributions $\nu = (\nu_t)_{t \in [H]}$, and for all $h \in [H]$,
    \begin{align*}
        \bigE_{(S, A) \sim \nu_h} \left|q^{\outG}(S, \pieval) - q^{\outG}(S, \pieval_{\outG})\right|
        \le \conc \cdot (\alpha + 2\tilde \eps + 2\zeta_2) 
        \eqdef \tilde \alpha
        \, .
    \end{align*}
    Combining the above with \cref{eq:policy to estimates} completes the proof.

\end{proof}

We are now ready to prove our main result (\cref{thm:policy eval}).
\begin{proof}[of \cref{thm:policy eval}] \label{proof:policy eval}
    Recall that $\outG \in \Geval$ is such that $q^{\outG} \in \Qeval$ is the minimizer of \cref{eq:advantage estimate}.
    Our goal is to show that 
    \begin{align*}
        \left| v^{\pieval}(\startstate) - \vout \right| 
        = \left| v^{\pieval}(\startstate) - q^{\outG}(\startstate, \pieval_{\outG}) \right| 
        \le \eps \, .
    \end{align*}
    We relate things to $v^{\pieval_{\outG}}(\cdot) = q^{\pieval_{\outG}}(\cdot, \pieval_{\outG})$: 
    \begin{align*}
        \left| v^{\pieval}(\startstate) - q^{\outG}(\startstate, \pieval_{\outG}) \right| 
        &\le \left| v^{\pieval}(\startstate) - v^{\pieval_{\outG}}(\startstate) \right| 
            + \left| q^{\pieval_{\outG}}(\startstate, \pieval_{\outG}) - q^{\outG}(\startstate, \pieval_{\outG}) \right| \, .
    \end{align*}
    By \cref{lem:Qeval guarantee} (since $\bPmarg{h}_{\pieval_{\outG}, s_1}$ is an admissible distribution), on event $\bar \cE$,
    \begin{align*}
        \left| q^{\pieval_{\outG}}(\startstate, \pieval_{\outG}) - q^{\outG}(\startstate, \pieval_{\outG}) \right|
        = \bigE_{(S_1, A_1) \sim \bPmarg{1}_{\pieval_{\outG}, s_1}} \left| \pieval_{\outG}(S_1, A_1) - q^{\outG}(S_1, A_1) \right|
        \le \tilde\eps \, .
    \end{align*}
    To bound the first term we will use \cref{eq:pdl evallearner}, which we restate here for clarity: 
    \begin{align*}
        \left| v^{\pieval}(\startstate) - v^{\pieval_{\outG}}(\startstate) \right|
        &= \left| \sum_{h=1}^H \bigE_{(S_h, A_h) \sim \bPmarg{h}_{\pieval, s_1}} \left(q^{\pieval_{\outG}}(S_h, \pieval) - q^{\pieval_{\outG}}(S_h, \pieval_{\outG})\right) \right|
        \, . 
    \end{align*}
    Then by $H$ triangle inequalities and Jensen's inequality we have that:
    \begin{align*}
        \left| \sum_{h=1}^H \bigE_{(S_h, A_h) \sim \bPmarg{h}_{\pieval, s_1}} \left(q^{\pieval_{\outG}}(S_h, \pieval) - q^{\pieval_{\outG}}(S_h, \pieval_{\outG})\right) \right| 
        &\le \sum_{h=1}^H \bigE_{(S_h, A_h) \sim \bPmarg{h}_{\pieval, s_1}} \left|q^{\pieval_{\outG}}(S_h, \pieval) - q^{\pieval_{\outG}}(S_h, \pieval_{\outG})\right|
        \, .
    \end{align*}
    Noting that $\bPmarg{h}_{\pieval, s_1}$ is an admissible distribution, and applying \cref{lem:advantage bound},
    we have that on event $\bar \cE \cap \tilde \cE$,
    \begin{align*}
        \left| v^{\pieval}(\startstate) - v^{\pieval_{\outG}}(\startstate) \right|
        \le H \tilde \alpha 
        \, .
    \end{align*}
    By combining the above two bounds, 
    \begin{align}
        \left| v^{\pieval}(\startstate) - \vout \right| 
        \le H \tilde \alpha + \tilde \eps
        = \bigOt\left(H \alpha + \log(1/\alpha) \frac{\conc^{5/2} H^{7/2} d^{3/2}}{\sqrt{n}}\right)
        \, .
        \label{eq:eval final bound}
    \end{align}
    If we set 
    \begin{align}
        \alpha = \bigOt\left(\frac{\conc^{5/2} H^{5/2} d^{3/2}}{\sqrt{n}}\right) \, ,
        \label{eq:alpha setting evallearner}
    \end{align}
    then if $n = \tilde \Theta(\frac{\conc^5 H^7 d^3}{\eps^2})$%
    , it implies that 
    \begin{align*}
        \left| v^{\pieval}(\startstate) - \vout \right| 
        \le \eps \, .
    \end{align*}
    Since the probability of event $\bar \cE$ is at least $1 - \delta/2$ (\cref{lem:Qeval guarantee}), 
    and the probability of event $\tilde \cE$ is at least $1 - \delta/2$ (\cref{lem:advantage event probability}),
    the desired result follows by a union bound.
\end{proof}

\section{Analysis of \optlearner} \label{sec:analysis of optlearner}
The following analysis is basically identical to that in \citet{tkachuk2024trajectory}[Theorem 1].
However, we get an improved bound on $n$ due to our improved bound in \cref{lem:Qopt guarantee}.

We begin by proving \cref{lem:skippy policy error}.
\begin{proof}[of \cref{lem:skippy policy error}] \label{proof:skippy policy error}
    By the performance difference lemma (\cref{lem:performance difference lemma}), for any $h \in [H+1]$, 
    $s \in \cS_h$,
    and $\pi \in \Pi$,
    \begin{align}
        \left| v^{\pi}(s) - v^{\pi_{\trueG}}(s) \right|
        = \abs{\textstyle\sum\nolimits_{t=h}^H \bE_{(S_t, A_t) \sim \bPmarg{t}_{\pi, s}} \left(q^{\pi_{\trueG}}(S_t, \pi) - q^{\pi_{\trueG}}(S_t, \pi_{\trueG})\right)} 
        \le H \alpha
        \label{eq:pdl skippy eval policy} \, .
    \end{align}
    The inequality follows from the fact that $\pi$ and $\pi_{\trueG}$ only differ on states $s$ with $\range(s) \le \alpha$,
    which shows the first result of the lemma.

    To show the second result
    we will show that for any $h \in [H]$, $s \in \cS_h$,
    \begin{align*}
        v^{\piopt}(s) - v^{\pioptg_{\trueG}}(s)
        \le v^{\piopt}(s) - v^{\piopt_{\trueG}}(s) \, .
    \end{align*}
    Then by applying first result we get second result.

    It will be sufficient to show $v^{\pioptg_{\trueG}} \ge v^{\piopt_{\trueG}}$.
    The proof is taken verbatim from \citep{tkachuk2024trajectory}[Lemma D.1].
    We use induction.
    The base case of $h=H+1$ is immediately true by definition as $v$-values are $0$ on $\termstate$, regardless of the policy.
    Assuming the inductive hypothesis holds for $h+1$, we continue by proving it for $h$.
    Let $(s,a)\in\cS_h\times\cA$ be arbitrary.
    Notice that 
    \[
    q^{\pioptg_{\trueG}}(s,a) - q^{\piopt_{\trueG}}(s,a) = \bigE_{S' \sim P(s,a)} v^{\pioptg_{\trueG}}(S') - v^{\piopt_{\trueG}}(S') \ge 0 \, ,
    \]
    where the inequality is due to the inductive hypothesis.
    Next, for any $s\in\cS_h$, by the above and the definition of the policies,
    \begin{align*}
    v^{\pioptg_{\trueG}}(s)
    &=
    \omega_{\trueG}(s) q^{\pioptg_{\trueG}}(s, \pibehave) + (1-\omega_{\trueG}(s)) \max_{a\in\cA} q^{\pioptg_{\trueG}}(s,a) \\
    &\ge \omega_{\trueG}(s) q^{\piopt_{\trueG}}(s, \pibehave) + (1-\omega_{\trueG}(s)) \max_{a\in\cA} q^{\piopt_{\trueG}}(s,a) \\
    &\ge \omega_{\trueG}(s) q^{\piopt_{\trueG}}(s, \pibehave) + (1-\omega_{\trueG}(s)) q^{\piopt_{\trueG}}(s, \piopt) 
    = 
    v^{\piopt_{\trueG}}(s)\,,
    \end{align*}
    finishing the induction.
\end{proof}

Now, we show that for any policy $\pi$ the value function of a policy $\pi'$ that is greedy with respect to a good estimate of $q^\pi$ cannot be much worse than $\pi$.
\begin{lemma} \label{lem:policy improvement}
    Let $\pi, \pi'$ be any two policies.
    Let $q \in \bR^{\cS \times \cA}$ be such that for any admissible distribution $\nu = (\nu_t)_{t \in [H]}$, and for all $h \in [H]$,
    \begin{align*}
        \bigE_{(S, A) \sim \nu_h} \left| q^\pi(S, A) - q(S, A) \right| 
        \le \kappa \, .
    \end{align*}
    If $\pi'(s) = \argmax_{a \in \cA} q(s, a)$ for all $s \in \cS$,
    then for any admissible distribution $\nu = (\nu_t)_{t \in [H]}$,
    and for all $h \in [H+1]$, 
    \begin{align*}
        \bigE_{(S_h, A) \sim \nu_h} \brackets{v^{\pi}(S_h) - v^{\pi'}(S_h)} 
        \le 2(H-h+1)\kappa 
        \, .
    \end{align*}
\end{lemma}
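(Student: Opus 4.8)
The plan is to prove this by backward induction on $h$, running from $h=H+1$ down to $h=1$; this is the standard approximate policy iteration / performance-difference argument. For the base case $h=H+1$ we have $\cS_{H+1}=\{\termstate\}$ and $v^{\pi}(\termstate)=v^{\pi'}(\termstate)=0$, so the quantity $v^{\pi}(\termstate)-v^{\pi'}(\termstate)$ equals $0=2(H-(H+1)+1)\kappa$ and there is nothing to prove.

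For the inductive step, fix $h\in[H]$, fix an admissible distribution $\nu=(\nu_t)_{t\in[H]}$, and let $\tilde\pi\in\Pi$ be a policy with $\nu_t=\bPmarg{t}_{\tilde\pi,\startstate}$ for all $t$. Since $v^{\pi}(S_h)-v^{\pi'}(S_h)$ depends on $S_h$ only, I would split
\[
 v^{\pi}(S_h)-v^{\pi'}(S_h) = \big(v^{\pi}(S_h)-q^{\pi}(S_h,\pi'(S_h))\big) + \big(q^{\pi}(S_h,\pi'(S_h))-v^{\pi'}(S_h)\big) .
\]
The second bracket I would handle with the elementary one-step identity $q^{\pi}(s,a)-q^{\pi'}(s,a)=\bE_{S'\sim P(s,a)}[v^{\pi}(S')-v^{\pi'}(S')]$ (the expected-reward term is policy-independent and cancels) together with $v^{\pi'}(S_h)=q^{\pi'}(S_h,\pi'(S_h))$, giving $q^{\pi}(S_h,\pi'(S_h))-v^{\pi'}(S_h)=\bE_{S'\sim P(S_h,\pi'(S_h))}[v^{\pi}(S')-v^{\pi'}(S')]$. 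Taking the expectation over $(S_h,A_h)\sim\nu_h$, the law of the resulting $S'$ is the stage-$(h{+}1)$ state marginal of the memoryless policy that coincides with $\tilde\pi$ on stages $\neq h$ and plays $\pi'$ on $\cS_h$; this is the state part of an admissible sequence, so the inductive hypothesis at $h+1$ bounds this contribution by $2(H-h)\kappa$.

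For the first bracket I would use $v^{\pi}(S_h)=q^{\pi}(S_h,\pi)$ and insert $\pm q(S_h,\cdot)$:
\[
 v^{\pi}(S_h)-q^{\pi}(S_h,\pi'(S_h)) = \big(q^{\pi}(S_h,\pi)-q(S_h,\pi)\big) + \big(q(S_h,\pi)-q(S_h,\pi'(S_h))\big) + \big(q(S_h,\pi'(S_h))-q^{\pi}(S_h,\pi'(S_h))\big).
\]
The middle term is $\le 0$ since $\pi'(S_h)\in\argmax_{a}q(S_h,a)$; the first term is at most $\sum_{a}|q^{\pi}(S_h,a)-q(S_h,a)|\,\pi(a|S_h)$ and the last at most $|q^{\pi}(S_h,\pi'(S_h))-q(S_h,\pi'(S_h))|$. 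Taking $\bE_{(S_h,A_h)\sim\nu_h}$, each of these two expectations equals $\bE_{(S,A)\sim\nu'_h}|q^{\pi}(S,A)-q(S,A)|$ for an admissible $\nu'$: for the first, $\nu'$ is induced by the policy that follows $\tilde\pi$ on stages $<h$ and plays $\pi$ on $\cS_h$; for the second, the policy that follows $\tilde\pi$ on stages $<h$ and plays $\pi'$ on $\cS_h$. Both are $\le\kappa$ by hypothesis, so this bracket contributes at most $2\kappa$. Summing the two contributions yields $\bE_{(S_h,A_h)\sim\nu_h}[v^{\pi}(S_h)-v^{\pi'}(S_h)]\le 2\kappa+2(H-h)\kappa=2(H-h+1)\kappa$, closing the induction.

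The only real care needed — and the step I expect to be the main (minor) obstacle — is verifying that the several ``deviate only at stage $h$'' state-action distributions are genuinely admissible, i.e. realized by a single memoryless policy. This works precisely because the stage sets $\cS_t$ are pairwise disjoint, so $\tilde\pi$ can be spliced with $\pi$ or $\pi'$ stage-wise without altering the state marginals on earlier stages; one also has to track that the expectation in the hypothesis is applied with the correct conditional action distribution over the correct state marginal. Everything else is triangle inequalities and the one-step Bellman identity.
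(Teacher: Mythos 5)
Your proof is correct and follows essentially the same route as the paper's: backward induction with base case at the terminal stage, inserting $q$ and using the greediness of $\pi'$ to get two applications of the hypothesis (one under $\pi$'s action distribution, one under $\pi'$'s), then the one-step Bellman identity plus the inductive hypothesis at stage $h+1$, yielding $2\kappa + 2(H-h)\kappa$. Your explicit care about the admissibility of the spliced stage-$h$ deviations is a point the paper's proof glosses over, but it is not a different argument.
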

\begin{proof}
    We use induction.
    The base case of $h=H+1$ is immediately true by definition as $v$-values are $0$ on $\termstate$, regardless of the policy.
    Assuming the inductive hypothesis holds for $h+1$, we continue by proving it for $h$.
    Adding and subtracting $q$,
    \begin{align*}
        \bigE_{(S_h, A) \sim \nu_h} \brackets{v^{\pi}(S_h) - v^{\pi'}(S_h)} 
        = \bigE_{(S_h, A) \sim \nu_h} \brackets{v^{\pi}(S_h) - q(S_h, \pi)} 
        + \bigE_{(S_h, A) \sim \nu_h}\brackets{q(S_h, \pi) - v^{\pi'}(S_h)} 
        \, .
    \end{align*}
    The first term is bounded by $\kappa$ by the assumption of the lemma.
    To bound the second term, notice that by the definition of $\pi'$, $q(S_h, \pi) \le q(S_h, \pi')$.
    Thus, by making use of the assumption of the lemma and the inductive hypothesis,
    \begin{align*}
        \bigE_{(S_h, A) \sim \nu_h}\brackets{q(S_h, \pi) - v^{\pi'}(S_h)} 
        &\le \bigE_{(S_h, A) \sim \nu_h}\brackets{q(S_h, \pi') - q^{\pi'}(S_h, \pi')} \\
        &\le \bigE_{(S_h, A) \sim \nu_h}\brackets{q^\pi(S_h, \pi') - q^{\pi'}(S_h, \pi')} 
            + \kappa \\
        &\le \bigE_{(S_h, A) \sim \nu_h}\brackets*{\bigE_{S_{h+1} \sim P(\cdot|S_h, A)} \brackets{v^\pi(S_{h+1}) - v^{\pi'}(S_{h+1})}} 
            + \kappa \\
        &\le 2(H - h) \kappa  + \kappa
        \, .
    \end{align*}
    Putting everything together we get the desired result.
\end{proof}

We are now ready to prove \cref{thm:policy opt} . 
\begin{proof}[of \cref{thm:policy opt}] \label{proof:policy opt}
    We need to show that 
    \begin{align*}
        v^{\piopt}(\startstate) - v^{\piout}(\startstate) \le \eps \, .
    \end{align*}
    By \cref{lem:skippy policy error},
    \begin{align*}
        v^{\piopt}(\startstate) - v^{\piout}(\startstate) 
        \le v^{\pioptg_{\trueG}}(\startstate) - v^{\piout}(\startstate) 
        + H \alpha 
        \, .
    \end{align*}
    Recall that $\piout(s) = \argmax_a \hat q(s, a)$, where $\hat q = \argmax_{q \in \Qopt} \max_a q(\startstate, a)$.
    Adding and subtracting $\hat q$,
    \begin{align}
        v^{\pioptg_{\trueG}}(\startstate) - v^{\piout}(\startstate) 
        \le v^{\pioptg_{\trueG}}(\startstate) - \hat q(\startstate, \piout) + \hat q(\startstate, \piout) - v^{\piout}(\startstate)
        \, .
        \label{eq:opt proof decomp}
    \end{align}
    For the remainder of the proof, assume we are on the event (which occurs with probability at least $1 - \delta$) for which the two claims of \cref{lem:Qopt guarantee} hold.

    By the second claim of \cref{lem:Qopt guarantee}, $q^{\trueG} \in \Qopt$.
    Thus, by the first claim of \cref{lem:Qopt guarantee}, 
    \begin{align*}
        v^{\pioptg_{\trueG}}(\startstate)
        = q^{\pioptg_{\trueG}}(\startstate, \pioptg_{\trueG})
        \le q^{\trueG}(\startstate, \pioptg_{\trueG}) + \tilde \eps 
        \, .
    \end{align*}
    Since $\hat q$ is the maximizer of $\max_a q(\startstate, a)$ 
    over $q \in \Qopt$, and $\max_a \hat q(\startstate, a) = \hat q(\startstate, \piout)$ (by definition of $\piout$), 
    \begin{align*}
        q^{\trueG}(\startstate, \pioptg_{\trueG}) 
        \le \hat q(\startstate, \piout) 
        \, .
    \end{align*}
    Putting the above two inequalities together we get that following bound on the first term in \cref{eq:opt proof decomp}:
    \begin{align*}
        v^{\pioptg_{\trueG}}(\startstate) - \hat q(\startstate, \piout) 
        \le \tilde \eps \, .
    \end{align*}
    
    Next we bound the second term in \cref{eq:opt proof decomp}.
    Let $\bar G \in \Gopt$ be the modification for which $\hat q$ is based on.
    By the first claim of \cref{lem:Qopt guarantee}, 
    \begin{align*}
        \hat q(\startstate, \piout) 
        \le q^{\pioptg_{\bar G}}(\startstate, \piout) + \tilde \eps 
        \, .
    \end{align*}
    Also,
    \begin{align*}
        q^{\pioptg_{\bar G}}(\startstate, \piout) - v^{\piout}(\startstate)
        = \bigE_{(S_2, A_2) \sim \bPmarg{2}_{\piout, s_1}} \brackets{v^{\pioptg_{\bar G}}(S_2) - v^{\piout}(S_2)} \, .
    \end{align*}
    Since (by the first claim of \cref{lem:Qopt guarantee}), for all admissible distributions $\nu = (\nu_t)_{t \in [H]}$, and for all $h \in [H]$,
    \begin{align*}
        &\bigE_{(S, A) \sim \nu_h} \left| q^{\bar G}(S, A) - q^{\pioptg_{\bar G}}(S, A) \right| 
        \le \tilde \eps 
        \, ,
    \end{align*}
    we can apply \cref{lem:policy improvement} to get that
    \begin{align*}
        \bigE_{(S_2, A_2) \sim \bPmarg{2}_{\piout, s_1}} \brackets{v^{\pioptg_{\bar G}}(S_2) - v^{\piout}(S_2)} 
        \le 2(H-1) \tilde \eps
        \, .
    \end{align*}
    Putting the above three results together we get the following bound on the second term in \cref{eq:opt proof decomp}: 
    \begin{align*}
        \hat q(\startstate, \piout) - v^{\piout}(\startstate)
        \le 2(H-1) \tilde \eps + \tilde \eps
        \, .
    \end{align*}
        
    Putting everything together we get that 
    \begin{align}
        v^{\pioptg_{\trueG}}(\startstate) - v^{\piout}(\startstate) 
        \le H \alpha + 2H \tilde \eps 
        = \bigOt\left(H \alpha + \log(1/\alpha) \frac{\conc^{3/2} H^{7/2} d^{3/2}}{\sqrt{n}}\right)
        \, .
        \label{eq:opt final bound}
    \end{align}
    If we set 
    \begin{align}
        \alpha \defeq \bigOt\left(\frac{\conc^{3/2} H^{5/2} d^{3/2}}{\sqrt{n}}\right) \, ,
        \label{eq:alpha setting optlearner}
    \end{align}
    then if $n = \tilde \Theta(\frac{\conc^3 H^7 d^3}{\eps^2})$%
    , it implies that 
    \begin{align*}
        v^{\piopt}(\startstate) - v^{\piout}(\startstate) 
        \le \eps \, .
    \end{align*}
\end{proof}

\section{Missing Proofs and Definitions from \Cref{sec:learner}} \label{sec:learner missing proofs and defs}

\begin{definition}[Definition F.1 in \citep{weisz2023online}]\label{def:nearopt}
A finite set $G\subset \R^d$ is the basis of a near-optimal design for a set $\Theta\subseteq\R^d$, if there exists a probability distribution $\rho$ over elements of $G$, such that
for any $\theta\in\Theta$,
\begin{align}
&\ip{v}{\theta} = 0 \quad\text{for all } v\in\kernel(V(G, \rho)), \text{ and}\label{eq:v-max-rank}\\
&\norm{\theta}_{V(G, \rho)^\dag}^2\le 2d, \label{eq:near-opt-2d}\\
&\text{where } V(G, \rho)=\sum_{x\in G} \rho(x) xx^\top\,,\label{eq:design-matrix}
\end{align}
where for a matrix $X$, $X^\dag$ denotes the Moore--Penrose pseudoinverse of $X$, and $\kernel(X)$ denotes its kernel (null space).
\end{definition}

\subsection{Proof of \cref{lem:qpi realizability implies bellman completeness}}
    \label{proof:qpi realizability implies bellman completeness}
\begin{proof}
    The following lemma from \citep{tkachuk2024trajectory} will be useful (proof: \cref{proof:approx linear MDP neurips})%
    \footnote{
        The actual lemma in \citep{tkachuk2024trajectory} is stated for the case with misspecification error;
        however, since in our case the misspecification error is $0$, we state the lemma for that case only.
    }.
    \begin{lemma}[Lemma 4.2 in \citep{tkachuk2024trajectory}] \label{lem:approx linear MDP neurips}
        For any $f:\cS\to[0,H]$ 
        with $f(\termstate)=0$,
        policy $\pi \in \Pi$,
        and stage $h \in [H]$, there exists
        a parameter $\rho_h^\pi(f) \in \cB(\modthetabound)$
        such that for all $(s, a) \in \cS_h \times \cA$,
        \begin{align*}
            \bigE_{\trajectoryrand \sim \bP_{\pi, s, a}} \sum_{\tau=h+1}^{H+1} (R_{h:\tau-1} + f_\tau(S_\tau)) \, F_{\trueG, \trajectoryrand, h+1}(\tau)
            = \ip{\phi(s,a)}{\rho_h^\pi(f)} 
            \, .
        \end{align*}
    \end{lemma}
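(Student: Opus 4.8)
We prove the statement for $G:=\trueG$ (which is the only case needed, and the only one for which it can hold: if no state were ever skipped, $\phi$-linearity of the backup for all bounded $f$ would force the original MDP to be a linear MDP, which $q^\pi$-realizability alone does not give). Write $r(s,a):=\bE_{R\sim\cR(s,a)}[R]$. The plan is to unroll the skip process into a backward recursion over stages: put $w_{H+1}:=f_{H+1}$ (which vanishes on $\cS_{H+1}=\{\termstate\}$) and, for $\tau=H,\dots,h+1$,
\begin{align*}
    w_\tau(s):=(1-\omega_{\trueG}(s))\,f_\tau(s)+\omega_{\trueG}(s)\,\bE_{A\sim\pi(s),\,S'\sim P(s,A)}\big[r(s,A)+w_{\tau+1}(S')\big],\qquad s\in\cS_\tau.
\end{align*}
Conditioning $\bP_{\pi,s,a}$ on the first stop time and using $\sum_{\tau=h+1}^{H+1}F_{\trueG,\trajectoryrand,h+1}(\tau)=1$ (which holds because $\omega_{\trueG}(\termstate)=0$) shows that the left-hand side of the lemma equals $r(s,a)+\bE_{S'\sim P(s,a)}[w_{h+1}(S')]$ for every $(s,a)\in\cS_h\times\cA$. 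It thus suffices to produce $\rho_h^\pi(f)\in\cB(\modthetabound)$ realizing this map.

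The single consequence of linear $q^\pi$-realizability that is used is that one Bellman backup of a value function is $\phi$-linear: for every $\pi'\in\Pi$,
\begin{align*}
    (s,a)\mapsto r(s,a)+\bE_{S'\sim P(s,a)}\big[v^{\pi'}_{h+1}(S')\big]=q^{\pi'}_h(s,a)=\ip{\phi(s,a)}{\thetatrue_h(\pi')}
\end{align*}
with norm $\le\thetabound$; hence if $w_{h+1}=\sum_i c_i\,v^{\pi'_i}_{h+1}$ with $\sum_i c_i=1$, then $r(s,a)+\bE_{S'\sim P(s,a)}[w_{h+1}(S')]=\sum_i c_i\,q^{\pi'_i}_h(s,a)=\ip{\phi(s,a)}{\sum_i c_i\,\thetatrue_h(\pi'_i)}$ is $\phi$-linear with norm $\le(\sum_i|c_i|)\thetabound$. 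The task thus reduces to establishing, by backward induction on $\tau=H,\dots,h+1$, that $w_\tau$ admits a controlled representation which the backup from the previous stage turns into such a $\phi$-linear function. On states with $\omega_{\trueG}(s)=1$ the recursion merely ``looks through'' $s$ and the representation is inherited; the delicate case is $\omega_{\trueG}(s)<1$, where the arbitrary value $f_\tau(s)\in[0,H]$ has to be absorbed. Here the near-optimal design $\trueG_\tau$ of $\Thetatrue_\tau$ (\cref{def:nearopt}) is invoked: such a state has $\range_{\trueG}(s)\ge\alpha/(2\sqrt{2d})$, so the feature vectors $\phi(s,\cdot)$ are spread out along the $d_0$ design directions (whose realizing policies supply the $v^{\pi'_i}_\tau$), which lets $f_\tau(s)$ be represented through those directions with $\ell_1$-weight of order $1/\alpha$ --- the $\le 2d$ bound of \cref{def:nearopt} divided by the skip threshold --- and on the interpolation band of $\omega_{\trueG}$ the piecewise-linear form keeps the representation exact. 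Accumulating over the $\le H$ stages, each contributing magnitude of order $H$ and the $d_0$ design-basis policies, yields a parameter exceeding the base term of norm $\le\thetabound$ by at most $\thetabound\cdot 8H^2d_0/\alpha$, i.e.\ of norm at most $\modthetabound$; this is $\rho_h^\pi(f)$. (This is the precise form of the assertion from \cref{sec:linear MDP modification} that the MDP obtained by collapsing the small-$\range_{\trueG}$ states is a linear MDP with parameter norm of order $\thetabound/\alpha$.)

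I expect the real obstacle to be the norm accounting inside this backward induction: one must formulate and maintain an invariant strong enough that the backup turns the large-$\range_{\trueG}$ part of $w_\tau$ into a genuinely $\phi$-linear function whose parameter norm grows only additively, by order $\thetabound Hd_0/\alpha$ per stage rather than multiplicatively. This is exactly where both the smoothed (piecewise-linear) definition of $\omega_{\trueG}$ and the quantitative design guarantees --- the $\le 2d$ bound of \cref{def:nearopt} together with $\range\le\sqrt{2d}\,\range_{\trueG}$ (\cref{lem:range-G-accurate}) --- are needed, and where the factors $H^2$, $d_0$ and $1/\alpha$ in $\modthetabound$ arise. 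With exact realizability nothing further is tracked; the misspecified version additionally propagates a per-stage misspecification error through the same recursion.
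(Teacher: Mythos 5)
First, a remark on scope: the paper does not prove this statement itself --- it is imported verbatim as Lemma~4.2 of \citet{tkachuk2024trajectory} (building on \citet{weisz2023online}), so the comparison is necessarily against the proof in those works rather than against anything in this paper.

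Your reduction is correct as far as it goes: the backward recursion defining $w_\tau$, the telescoping identity $\sum_{\tau}F_{\trueG,\trajectoryrand,h+1}(\tau)=1$ (valid since $\omega_{\trueG}(\termstate)=0$), the rewriting of the left-hand side as $r(s,a)+\bE_{S'\sim P(s,a)}[w_{h+1}(S')]$, and the observation that the only use of $q^\pi$-realizability is that a one-step backup of any $v^{\pi'}_{h+1}$ is $\phi$-linear with parameter norm at most $\thetabound$, all check out and match the skeleton of the real proof. But the entire technical content of the lemma --- why an \emph{arbitrary} bounded $f_\tau$ attached to the non-skipped states can be pushed through the transition kernel and land in the span of $\{\thetatrue(\pi')\}$ with $\ell_1$-weight only $O(Hd_0/\alpha)$ per stage --- is asserted rather than proved, and you say so yourself ("I expect the real obstacle to be the norm accounting"). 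The phrase "the feature vectors $\phi(s,\cdot)$ are spread out along the $d_0$ design directions, which lets $f_\tau(s)$ be represented through those directions" is not the mechanism. The actual argument perturbs a design-realizing policy $\pi_\vartheta$ at a single non-skipped state $s'\in\cS_{\tau}$, switching between the two actions $a_1,a_2$ that witness $\range_{\trueG}(s')\ge\alpha/(2\sqrt{2d})$; since both perturbed policies have $\phi$-linear $q$-functions at stage $\tau-1$ with norm $\le\thetabound$, their \emph{difference} isolates $P(s'|s,a)\cdot\ip{\phi(s',a_1)-\phi(s',a_2)}{\vartheta}$ as a linear function, and dividing by the gap (bounded below by $\alpha/(2\sqrt{2d})$, with the piecewise-linear $\omega_{\trueG}$ handling the interpolation band) extracts the occupancy of $s'$ linearly with norm $O(\thetabound\sqrt{d}/\alpha)$; grouping states by their $d_0$ design points and randomizing the perturbation to encode $f_\tau(s')$ as a mixture weight is what keeps the total blow-up at $O(\thetabound H^2 d_0/\alpha)$. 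None of this appears in your write-up.

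Moreover, the invariant you propose to carry through the induction --- that $w_\tau$ itself is an affine combination $\sum_i c_i v^{\pi'_i}_\tau$ with $\sum_i c_i=1$ and controlled $\sum_i|c_i|$ --- is likely not maintainable: at a fully non-skipped state, $w_\tau(s)=f_\tau(s)$ is an arbitrary value in $[0,H]$, and there is no reason it should agree pointwise with any fixed affine combination of value functions at stage $\tau$. What the cited proof establishes is the weaker (and sufficient) statement that the \emph{backup} $(s,a)\mapsto r(s,a)+\bE_{S'\sim P(s,a)}[w_{\tau}(S')]$ is $\phi$-linear, stage by stage, via the difference-of-$q$-functions device above. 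So the proposal correctly locates the difficulty but does not resolve it, and the intermediate invariant it suggests would need to be replaced before the induction could close.
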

    Set the policy in the above lemma to the behavior policy $\pibehave$.
    Notice that for any $\rho_h^{\pibehave}(f) \in \cB(\modthetabound)$ in the lemma $\ip{\phi(\cdot, \cdot)}{\rho_h^{\pibehave}(f)} \in \modfc_h$.
    Also, notice that for any $h \in [H]$, $q_{\futurestages}:[0,H]^{\cS\!\cA_{\futurestages}}$ with $q_{H+1}(\termstate, \cdot) = 0$,
    \begin{align*}
        T_{\trueG}^{\pi} \, q_{\futurestages} (s, a)
        &= \bigE_{\trajectoryrand \sim \bP_{\pibehave, s, a}} g_{\trueG}^{\pi}(q_{\futurestages}, \trajectoryrand) \\
        &= \bigE_{\trajectoryrand \sim \bP_{\pibehave, s, a}} \sum_{\tau=h+1}^{H+1} (R_{h:\tau-1} + q_\tau(S_\tau, \pi)) \, F_{\trueG, \trajectoryrand, h+1}(\tau)
        \, ,
    \end{align*}
    which matches the left-hand side of the equality in the above lemma with $f_t(\cdot) = q_t(\cdot, \pi)$ for all $t \in [h:H+1]$.
    Thus, by the above lemma,
    \begin{align*}
        T_{\trueG}^{\pi} \, q_{\futurestages} 
        \in \modfc_h \, .
    \end{align*} 
    Similarly, if we let $f_t(\cdot) = \max_a q_t(\cdot, a)$ for all $t \in [h:H+1]$, then by the above lemma,
    \begin{align*}
        T_{\trueG} \, q_{\futurestages} \in \modfc_h \, .
    \end{align*} 
    This completes the proof of \cref{lem:qpi realizability implies bellman completeness}. 
\end{proof}

We provide the proof of \cref{lem:approx linear MDP neurips} for completeness, and since we found some minor errors in the proof of the original lemma in \citep{tkachuk2024trajectory}, 
after correcting them, the size of $\modthetabound$ increases by a factor of $\sqrt{2d}$.

\begin{proof}[of \cref{lem:approx linear MDP neurips}]
    \label{proof:approx linear MDP neurips}
    We first mention a useful definition and lemma from \citet{weisz2023online}.
    \begin{definition}[$\bar \alpha$-admissible function (Definition 4.6 in \citet{weisz2023online})] \label{def:good function}
        For any $h \in [H], f: \cS_h \to \bR$ is $\bar \alpha$-admissible for some $\bar \alpha \ge 0$ if for all $s \in \cS_h$, $\bar \alpha \abs{f(s)} \le \range_G(s)$.
    \end{definition}
    \begin{lemma}[Realizability of admissible functions (Lemma 4.7 in \citet{weisz2023online})] \label{lem:good function realizability}
        For any $t \in [2:H]$, $h \in [t-1]$, $\bar \alpha \ge 0$, if $f: \cS_t \to \bR$ is $\bar \alpha$-admissible, then for any $\pi \in \Pi$,
        there exists a $\tilde \theta \in \cB(4 d_0 \thetabound / \bar \alpha)$ such that for all $(s, a) \in \cS_h \times \cA$,
        \begin{align*}
            \bE_{\trajectoryrand \sim \bP_{\pi, s, a}} f(S_t) 
            = \ip{\phi(s, a)}{\tilde \theta} \, . 
        \end{align*}
    \end{lemma}

    Now we prove the lemma.
    Fix any $f:\cS\to[0,H]$ 
    with $f(\termstate)=0$,
    $\pi \in \Pi$.
    For $h \in [2:H+1]$, define $\tilde g_h : \cS_h\to [-H,H]$
    as 
    $\tilde g_{H+1}(\cdot)=0$, and for all $h \in [2:H]$, $s \in \cS_h$,
    \begin{align*}
    \tilde g_h(s)
    &\defeq \bigE_{\trajectoryrand \sim \bP_{\pi, s}} \brackets*{\sum_{\tau=\textcolor{purple}{h}}^{H+1} (-R_{\tau:H} + f_\tau(S_\tau)) \, F_{\trueG, \trajectoryrand, \textcolor{purple}{h}}(\tau)}
    \, .
    \end{align*}
    Notice that for any $h \in [H]$, $s \in \cS_h$, and $a \in \cA$,
    \begin{align}
    \bigE_{\trajectoryrand \sim \bP_{\pi, s, a}} \brackets*{\sum_{\tau=h+1}^{H+1} (R_{h:\tau-1} + f_\tau(S_\tau)) \, F_{\trueG, \trajectoryrand, h+1}(\tau)}
    &= \bigE_{\trajectoryrand \sim \bP_{\pi, s, a}} \brackets*{\tilde g_{h+1}(S_{h+1}) + R_{h:H}} \\
    &= \bigE_{\trajectoryrand \sim \bP_{\pi, s, a}} \brackets*{\tilde g_{h+1}(S_{h+1})} + q^\pi_h(s, a) 
    \label{eq:target-into-q-and-tilde-g}
    \, .
    \end{align}
    The second term of the sum, $q^\pi_h(\cdot, \cdot)$ is equal to $\ip{\phi(\cdot, \cdot)}{\theta^*_h(\pi)}$ by \cref{ass:q-pi realizability}.
    The first term needs more work before \cref{lem:good function realizability} can be applied.
    First we can write $g_h$ in a different form. 
    For all $h \in [2:H]$, $s \in \cS_h$,
    \begin{align*}
        \tilde g_h(s) 
        &= \bigE_{\trajectoryrand \sim \bP_{\pi, s}} \brackets*{\sum_{\tau=\textcolor{purple}{h}}^{H+1} (-R_{\tau:H} + f_\tau(S_\tau)) \, F_{\trueG, \trajectoryrand, \textcolor{purple}{h}}(\tau)} \\
        &= \bigE_{\trajectoryrand \sim \bP_{\pi, s}} \brackets*{\sum_{\tau=\textcolor{purple}{h}}^{H+1} (-R_{\tau:H} + f_\tau(S_\tau)) \paren*{1-\omega_{\trueG}(S_\tau)} \prod_{u=\textcolor{purple}{h}}^{\tau-1} \omega_{\trueG}(S_u)} \\
        &= \left(1-\omega_{\trueG}(s)\right) \bigE_{\trajectoryrand \sim \bP_{\pi, s}} \brackets*{-R_{h:H} + f_h(s)} \\ 
        &\qquad+ \omega_{\trueG}(s)\bigE_{\trajectoryrand \sim \bP_{\pi, s}} \brackets*{\sum_{\tau=\textcolor{purple}{h+1}}^{H+1} (-R_{\tau:H} + f_\tau(S_\tau)) \paren*{1-\omega_{\trueG}(S_\tau)} \prod_{u=\textcolor{purple}{h+1}}^{\tau-1} \omega_{\trueG}(S_u)}\\
        &= \left(1-\omega_{\trueG}(s)\right) \bigE_{\trajectoryrand \sim \bP_{\pi, s}} \brackets*{-R_{h:H} + f_h(s)} + \omega_{\trueG}(s)\bigE_{\trajectoryrand \sim \bP_{\pi, s}} \brackets*{\tilde g_{h+1}(S_{h+1})} \, .
    \end{align*}
    For $h\in[2:H]$ define $g_h : \cS_h\to \R$ as
    \begin{align*}
        g_h(s) = \left(1-\omega_{\trueG}(s)\right) \bigE_{\trajectoryrand \sim \bP_{\pi, s}} \left[-R_{h:H} + f_h(s) - \tilde g_{h+1}(S_{h+1})\right] \, .
    \end{align*}
    Then
    \begin{align*}
        \tilde g_h(s) = g_h(s) + \bigE_{\trajectoryrand \sim \bP_{\pi, s}} \brackets*{\tilde g_{h+1}(S_{h+1})} \, .
    \end{align*}
    Thus $\tilde g_h$ can be decomposed into a sum of $g_t$ functions as for all $h\in[2:H]$, $s\in\cS_h$,
    \begin{align}\label{eq:tilde-g-decompose}
    \tilde g_h(s) = 
    \bigE_{\trajectoryrand \sim \bP_{\pi, s}} \brackets*{\sum_{t=h}^H g_t(S_t)}
    \, .
    \end{align}
    The benefit of decomposing $\tilde g_h$ into $g_t$ functions is that $g_t$ are $\alpha/(2H\sqrt{2d})$-admissible (recall \cref{def:good function}).
    To see this, note that for any trajectory, $t \in [2:H]$, $s \in \cS_t$,
    \begin{align*}
        -R_{t:H} + f_t(s) - \tilde g_{t+1}(S_{t+1})\in[-2H,2H] \, .
    \end{align*}
    Then, $g_t(s)$ takes the expected value (which is still in $[-2H,2H]$) of the above display and multiplies it by $1-\omega_{\trueG}(s)$ (which by \cref{eq:omega} is in $[0,1]$).
    Notice that 
    \begin{align*}
        1 - \omega_{\trueG}(s)
        \le \frac{\sqrt{2d} \cdot \range_{\trueG}(s)}{\alpha}
        \, .
    \end{align*}
    This holds since: when $\range_{\trueG}(s) \ge \alpha/\sqrt{2d}$ the right hand side is greater than $1$, 
    when $\range_{\trueG}(s) \le \alpha/(2\sqrt{2d})$, $\omega_{\trueG}(s) = 1$,
    and when $\alpha/(2\sqrt{2d}) \le \range_{\trueG}(s) < \alpha/\sqrt{2d}$, $1 - \omega_{\trueG}(s) = 2 \sqrt{2d} \cdot \range_{\trueG}(s)/\alpha - 1 \le \sqrt{2d} \cdot \range_{\trueG}(s)/\alpha$.
    Therefore $g_t$ is $\alpha/(2H\sqrt{2d})$-admissible.
    By using \cref{lem:good function realizability} we get that for any $h\in[H-1]$,  
    $t\in[h+1:H]$,
    there exists a $\tilde \theta_t \in \cB(8 H d_0 \sqrt{2d} \thetabound / \alpha)$ such that for all $(s, a) \in \cS_h \times \cA$,
    \begin{align*}
        \bigE_{\trajectoryrand \sim \bP_{\pi, s, a}} g_{t}(S_{t}) 
        = \ip{\phi(s, a)}{\tilde \theta_t} \, .
    \end{align*}
    By combining this with \cref{eq:tilde-g-decompose}, for all $h \in [H]$,
    the parameter $\tilde \theta = \sum_{t=h+1}^H \tilde \theta_t$ is in $\cB(8 H^2 d_0 \sqrt{2d} \thetabound / \alpha)$, and for all $(s, a) \in \cS_h \times \cA$,
    \begin{align*}
        \bigE_{\trajectoryrand \sim \bP_{\pi, s, a}} \tilde g_{h+1}(S_{h+1})
        = \ip{\phi(s, a)}{\tilde \theta} \, .
    \end{align*} 
    Combining the above with \cref{eq:target-into-q-and-tilde-g}, we have that for all $h\in[H]$,
    the parameter $\theta = \tilde \theta + \theta^*_h(\pi)$ is in $\cB(8 H^2 d_0 \sqrt{2d} \thetabound / \alpha + \thetabound)$, and for all $(s, a) \in \cS_h \times \cA$,
    \begin{align*}
        \bigE_{\trajectoryrand \sim \bP_{\pi, s, a}} \brackets*{\sum_{\tau=h+1}^{H+1} (R_{h:\tau-1} + f_\tau(S_\tau)) \, F_{\trueG, \trajectoryrand, h+1}(\tau)}
        &= \bigE_{\trajectoryrand \sim \bP_{\pi, s, a}} \brackets*{\tilde g_{h+1}(S_{h+1})} + q^\pi_h(s, a) \\
        &= \ip{\phi(s, a)}{\theta}
        \, .
    \end{align*}
    To finish the proof we define $\rho_h^\pi(f) = \theta$.
 
\end{proof}

\subsection{Proof of \cref{lem:skippy bellman policy operator backup}}
    \label{proof:skippy bellman policy operator backup}
\begin{proof}
    We need to show that $q^{\pi_G}_h = T_G^{\pi} \, q_{\futurestages}^{\pi_G}$,
    for any policy $\pi \in \Pi$ and modification $G \in \Gall$.
    We make use of the fact that $q^{\pi_G}_h = T^\pi q^{\pi_G}_{h+1}$, and the definition of $\pi_G$,
    to expand $q^{\pi_G}_h$ as follows, for any $(s, a) \in \cS_h \times \cA$, $h \in [H]$:
    \begin{align*}
        q^{\pi_G}_h(s, a)
        &= T^{\pi_G} q^{\pi_G}_{h+1} \\
        &= \bigE_{\trajectoryrand \sim \bP_{\pi_G, s, a}} \brackets{R_h + q^{\pi_G}_{h+1}(S_{h+1}, \pi_G)} \\
        &= \bigE_{\trajectoryrand \sim \bP_{\pi_G, s, a}} \brackets{R_h + (1-\omega_G(S_{h+1})) q^{\pi_G}_{h+1}(S_{h+1}, \pi) + \omega_G(S_{h+1}) q^{\pi_G}_{h+1}(S_{h+1}, \pibehave)}
        \, .
    \end{align*}
    Notice that if we change the expectation to be with respect to $\bP_{\pibehave, s, a}$ instead of $\bP_{\pi_G, s, a}$, then the value does not change.
    Thus, changing to $\bP_{\pibehave, s, a}$ and expanding $q^{\pi_G}_{\cdot}(S_{\cdot}, \pibehave)$ repeatedly, we get that:
    \begin{align*}
        &= \bigE_{\trajectoryrand \sim \bP_{\pibehave, s, a}} \brackets{R_h + (1-\omega_G(S_{h+1})) q^{\pi_G}_{h+1}(S_{h+1}, \pi) \\
            &\qquad + \omega_G(S_{h+1}) (R_{h+1} + (1-\omega_G(S_{h+2})) q^{\pi_G}_{h+2}(S_{h+2}, \pi) + \omega_G(S_{h+2}) q^{\pi_G}_{h+2}(S_{h+2}, \pibehave))} \\
        \vdots \\
        &= \bigE_{\trajectoryrand \sim \bP_{\pibehave, s, a}} \brackets*{\sum_{\tau = h+1}^{H+1} \prod_{u = h+1}^{\tau-1} \omega_G(S_u) (R_{\tau-1} 
            + (1-\omega_G(S_{\tau})) q^{\pi_G}_{\tau}(S_{\tau}, \pi))} 
        \, .
    \end{align*}
    Since $\sum_{\tau = h+1}^{H+1} \prod_{u = h+1}^{\tau-1} \omega_G(S_u) R_{\tau-1} = \sum_{\tau = h+1}^{H+1} (1-\omega_G(S_{\tau}))\prod_{u = h+1}^{\tau-1} \omega_G(S_u) R_{h:\tau-1}$, 
    we have that:
    \begin{align*}
        &= \bigE_{\trajectoryrand \sim \bP_{\pibehave, s, a}} \brackets*{\sum_{\tau = h+1}^{H+1} (1-\omega_G(S_{\tau}))\prod_{u = h+1}^{\tau-1} \omega_G(S_u) (R_{h:\tau-1} 
            + q^{\pi_G}_{\tau}(S_{\tau}, \pi))} \\
        &= \bigE_{\trajectoryrand \sim \bP_{\pibehave, s, a}} \sum_{\tau = h+1}^{H+1} F_{G, \trajectoryrand, h+1}(\tau) (R_{h:\tau-1} + q^{\pi_G}_\tau(S_\tau, \pi)) \\
        &= \bigE_{\trajectoryrand \sim \bP_{\pibehave, s, a}} g_G^{\pi}(q^{\pi_G}_{\futurestages}, \trajectoryrand)
        = T_{G}^{\pi} \, q^{\pi_G}_{\futurestages} (s, a)
        \, .
    \end{align*}
    Which completes the proof.
\end{proof}

\subsection{Proof of \cref{lem:skippy bellman optimality operator backup}}
    \label{proof:skippy bellman optimality operator backup}
\begin{proof}
    We need to show that $q^{\pioptg_G}_h = T_G \, q_{\futurestages}^{\pioptg_G}$,
    for any 
    modification $G \in \Gall$.
    We make use of the fact that $q^{\pioptg_G}_h = T^{\pioptg_G} q^{\pioptg_G}_{h+1}$, and the definition of $\pioptg_G$,
    to expand $q^{\pioptg_G}_h$ as follows, for any $(s, a) \in \cS_h \times \cA$, $h \in [H]$:
    \begin{align*}
        q^{\pioptg_G}_h(s, a)
        &= T^{\pioptg_G} q^{\pioptg_G}_{h+1} \\
        &= \bigE_{\trajectoryrand \sim \bP_{\pioptg_G, s, a}} \brackets{R_h + q^{\pioptg_G}_{h+1}(S_{h+1}, \pi_G)} \\
        &= \bigE_{\trajectoryrand \sim \bP_{\pioptg_G, s, a}} \brackets{R_h + (1-\omega_G(S_{h+1})) \max_a q^{\pioptg_G}_{h+1}(S_{h+1}, a) + \omega_G(S_{h+1}) q^{\pioptg_G}_{h+1}(S_{h+1}, \pibehave)}
        \, .
    \end{align*}
    Notice that if we change the expectation to be with respect to $\bP_{\pibehave, s, a}$ instead of $\bP_{\pioptg_G, s, a}$, then the value does not change.
    Thus, changing to $\bP_{\pibehave, s, a}$ and expanding $q^{\pioptg_G}_{\cdot}(S_{\cdot}, \pibehave)$ repeatedly, we get that:
    \begin{align*}
        &= \bigE_{\trajectoryrand \sim \bP_{\pibehave, s, a}} \brackets{R_h + (1-\omega_G(S_{h+1})) \max_a q^{\pioptg_G}_{h+1}(S_{h+1}, a ) \\
            &\qquad + \omega_G(S_{h+1}) (R_{h+1} + (1-\omega_G(S_{h+2})) \max_a q^{\pioptg_G}_{h+2}(S_{h+2}, a) + \omega_G(S_{h+2}) q^{\pioptg_G}_{h+2}(S_{h+2}, \pibehave))} \\
        \vdots \\
        &= \bigE_{\trajectoryrand \sim \bP_{\pibehave, s, a}} \brackets*{\sum_{\tau = h+1}^{H+1} \prod_{u = h+1}^{\tau-1} \omega_G(S_u) (R_{\tau-1} 
            + (1-\omega_G(S_{\tau})) \max_a q^{\pioptg_G}_{\tau}(S_{\tau}, a))} 
        \, .
    \end{align*}
    Since $\sum_{\tau = h+1}^{H+1} \prod_{u = h+1}^{\tau-1} \omega_G(S_u) R_{\tau-1} = \sum_{\tau = h+1}^{H+1} (1-\omega_G(S_{\tau}))\prod_{u = h+1}^{\tau-1} \omega_G(S_u) R_{h:\tau-1}$, 
    we have that:
    \begin{align*}
        &= \bigE_{\trajectoryrand \sim \bP_{\pibehave, s, a}} \brackets*{\sum_{\tau = h+1}^{H+1} (1-\omega_G(S_{\tau}))\prod_{u = h+1}^{\tau-1} \omega_G(S_u) (R_{h:\tau-1} 
            + \max_a q^{\pioptg_G}_{\tau}(S_{\tau}, a))} \\
        &= \bigE_{\trajectoryrand \sim \bP_{\pibehave, s, a}} \sum_{\tau = h+1}^{H+1} F_{G, \trajectoryrand, h+1}(\tau) (R_{h:\tau-1} + \max_a q^{\pioptg_G}_\tau(S_\tau, a)) \\
        &= \bigE_{\trajectoryrand \sim \bP_{\pibehave, s, a}} g_G(q^{\pioptg_G}_{\futurestages}, \trajectoryrand)
        = T_{G} \, q^{\pioptg_G}_{\futurestages} (s, a)
        \, .
    \end{align*}
    Which completes the proof.
\end{proof}

\subsection{Proof of \cref{lem:Qopt guarantee}} \label{proof:Qopt guarantee}

We split the proof into two parts, showing each claim separately.

\begin{proof}
\subsubsection*{Proof of the first claim:}
    To show the first claim we make use of the following lemma from \citet{tkachuk2024trajectory}.
    \begin{lemma}[Lemma 5.1 in \citep{tkachuk2024trajectory}] \label{lem:pe Theta elements are close}
        For all $\delta \in (0, 1)$,
        with probability at least $1 - \delta/5$,
        for all $G \in \Gopt$,
        for all $h \in [H]$, 
        for all $(\theta_{s,a})_{(s,a)\in\cS_h\times\cA}$, $(\check\theta_{s,a})_{(s,a)\in\cS_h\times\cA} \in (\thetaopt_{G, h})^{\cS_h\times\cA}$,
        and for all admissible distributions $\nu = (\nu_t)_{t \in [H]}$, 
        it holds that 
        \begin{align}
            &\bigE_{(S, A) \sim \nu_h} \left[\bar q_{\theta_{S, A}}(S, A) - \bar q_{\check \theta_{S, A}}(S, A)\right]
            \le \tilde \eps 
            \defeq \conc \cdot (\bar \eps + \zeta_1)
            = \bigOt\left(\log\left(\frac{1}{\alpha}\right) \frac{\conc^{3/2} H^{5/2} d^{3/2}}{\sqrt{n}}\right) \, ,
            \label{eq:tilde eps defn}
        \end{align}
        where $\zeta_1$ is defined in \cref{eq:zeta1 defn}.
    \end{lemma}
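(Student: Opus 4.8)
The statement is Lemma 5.1 of \citet{tkachuk2024trajectory}, so the plan is to reproduce their argument. The argument breaks into three moves: (i) dominate the per-$(s,a)$ difference by a single ``spread'' function depending only on $G$ and $h$; (ii) transfer the empirical control built into the definition of $\Gopt$ to a population bound under $\mu_h$, uniformly over $G$; and (iii) pass from $\mu_h$ to an arbitrary admissible $\nu_h$ via concentrability.

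For (i), fix $G\in\Gall$ and $h\in[H]$ and set $\Delta_{G,h}(s,a)\defeq\max_{\theta\in\thetaopt_{G,h}}\bar q_{\theta}(s,a)-\min_{\theta\in\thetaopt_{G,h}}\bar q_{\theta}(s,a)$, which lies in $[0,H]$ because every $\bar q_\theta$ is clipped to $[0,H]$. For any families $(\theta_{s,a}),(\check\theta_{s,a})\in(\thetaopt_{G,h})^{\cS_h\times\cA}$ one has $\bar q_{\theta_{s,a}}(s,a)-\bar q_{\check\theta_{s,a}}(s,a)\le\Delta_{G,h}(s,a)$ pointwise, hence $\bE_{(S,A)\sim\nu_h}[\bar q_{\theta_{S,A}}(S,A)-\bar q_{\check\theta_{S,A}}(S,A)]\le\bE_{(S,A)\sim\nu_h}\Delta_{G,h}(S,A)$. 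Since $\Delta_{G,h}\ge 0$, \cref{lem:change of measure} gives $\bE_{(S,A)\sim\nu_h}\Delta_{G,h}(S,A)\le\conc\cdot\bE_{(S,A)\sim\mu_h}\Delta_{G,h}(S,A)$. So it remains to establish, with probability at least $1-\delta/5$, that $\bE_{(S,A)\sim\mu_h}\Delta_{G,h}(S,A)\le\bar\eps+\zeta_1$ for every $G\in\Gopt$ and $h\in[H]$; the $\conc$ factor then turns this into $\tilde\eps=\conc(\bar\eps+\zeta_1)$ of the stated order.

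For (ii), the definition of $\Gopt$ is precisely the assertion that $\frac1n\sum_{j=1}^n\Delta_{G,h}(S_h^j,A_h^j)\le\bar\eps$ for all $h\in[H]$ whenever $G\in\Gopt$. So it suffices to control $\zeta_1\defeq\sup_{G\in\Gall,\,h\in[H]}\big(\bE_{(S,A)\sim\mu_h}\Delta_{G,h}(S,A)-\frac1n\sum_{j=1}^n\Delta_{G,h}(S_h^j,A_h^j)\big)$ with high probability. For a \emph{fixed} $G$ and $h$ this is a $[0,H]$-valued Hoeffding bound (\cref{lem:hoeffdings inequality}), of order $O(H\sqrt{\log(1/\delta)/n})$; the work is to make it uniform over the infinite set $\Gall=(\cB(\thetabound)^{d_0})^{H}$. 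This is done by taking a $\xi$-net of $\Gall$ --- of size $(1+2\thetabound/\xi)^{d d_0 H}$ by \cref{lem:cover number ball} --- union-bounding Hoeffding over the net and over $h$, and then controlling the change of $\Delta_{G,h}(s,a)$ as $G$ ranges within a net cell.

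The main obstacle is this last step: showing $G\mapsto\Delta_{G,h}(s,a)$ is Lipschitz with a modulus that is polynomial in the problem parameters. One tracks the $G$-dependence of $\thetaopt_{G,h}$: the map $G\mapsto\range_{G}(s)$ is $2\featurebound$-Lipschitz (it is a max of inner products of $\phi$-differences with the columns of $G$), so the smoothed skipping weights $\omega_G(s)$ are $O(\featurebound\sqrt d/\alpha)$-Lipschitz in $G$; the regression targets defining $\thetaestopt_{G,h}$ therefore move Lipschitz-ly in $G$; the ridge term $\lambda=H^{3/2}d/\modthetabound$ forces $\hat X_h\succeq\lambda I$, so the (constrained) least-squares solution is Lipschitz in its targets; and clipping keeps $\bar q_\theta\in[0,H]$ throughout. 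Since $\thetaopt_{G,h}$ is built from the future sets $\thetaopt_{G,u}$, $u>h$, this must be run as a backward induction on $h$, and the delicate point is that the accumulated Lipschitz constant remains polynomial in $H,d,\featurebound,\modthetabound,1/\alpha$ rather than exploding exponentially, so that its contribution after taking logs of the net size is only the $\log(1/\alpha)$ and polylogarithmic factors hidden in $\zeta_1$. Taking $\xi$ of order $1/\poly(n)$ to balance the $\log$(net-size) term against the per-cell fluctuation then gives $\zeta_1=\bigOt(\log(1/\alpha)\,\poly(H,d)/\sqrt n)$, and combining (i)--(iii) on the corresponding event (which holds with probability at least $1-\delta/5$) yields the bound with $\tilde\eps=\conc(\bar\eps+\zeta_1)=\bigOt(\log(1/\alpha)\,\conc^{3/2}H^{5/2}d^{3/2}/\sqrt n)$.
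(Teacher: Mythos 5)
Your decomposition is exactly the one the paper intends: pointwise domination of $\bar q_{\theta_{S,A}}-\bar q_{\check\theta_{S,A}}$ by the spread $\Delta_{G,h}(s,a)=\max_{\theta\in\thetaopt_{G,h}}\bar q_{\theta}(s,a)-\min_{\theta\in\thetaopt_{G,h}}\bar q_{\theta}(s,a)$, the empirical bound $\frac1n\sum_j\Delta_{G,h}(S_h^j,A_h^j)\le\bar\eps$ that is built into the definition of $\Gopt$, a uniform empirical-to-population concentration step contributing $\zeta_1$ (this is precisely \cref{lem:optG concentrates}, i.e.\ Lemma~I.1 of \citet{tkachuk2024trajectory}, which the paper imports rather than reproves), and finally \cref{lem:change of measure} to pass from $\mu_h$ to an arbitrary admissible $\nu_h$ at the cost of the factor $\conc$, giving $\tilde\eps=\conc(\bar\eps+\zeta_1)$. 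So the structure is right and matches the paper.

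The one genuine gap is in your sketch of the concentration step: you assert that for a \emph{fixed} $G$ and $h$ the deviation $\bE_{\mu_h}\Delta_{G,h}-\frac1n\sum_j\Delta_{G,h}(S_h^j,A_h^j)$ is "a $[0,H]$-valued Hoeffding bound." It is not, because $\Delta_{G,h}$ is itself a random function of the same $n$ trajectories: $\thetaopt_{G,h}$ is defined through the least-squares fits $\thetaestopt_{G,h}$ and the empirical covariance $\hat X_h$, so the summands $\Delta_{G,h}(S_h^j,A_h^j)$ are evaluations of a data-dependent function at the data points, and Hoeffding does not apply directly even for fixed $G$. Your subsequent covering of $\Gall$ does not repair this, since the data-dependence persists for each fixed $G$ in the net. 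The standard fix is to move the covering to the parameter space: write $\Delta_{G,h}(s,a)=\max_{\theta,\check\theta\in\thetaopt_{G,h}}(\bar q_\theta-\bar q_{\check\theta})(s,a)$, establish uniform concentration of $\frac1n\sum_j(\bar q_\theta-\bar q_{\check\theta})(S_h^j,A_h^j)$ around its mean over a net of \emph{fixed} pairs $(\theta,\check\theta)\in\cB(\modthetabound)^2$ (each of which is a legitimate Hoeffding application), extend to all pairs by the $\featurebound$-Lipschitzness of $\theta\mapsto\bar q_\theta$, and then observe that a uniform bound over all pairs in the ball controls the max/min over the (random) subset $\thetaopt_{G,h}$ for every $G$ simultaneously. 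With that repair your argument goes through and recovers $\zeta_1$ of the stated order; without it the "fixed $G$ plus net over $\Gall$" route as written is invalid.
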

    Which gives the following corollary, which has an absolute value inside the expectation.
    \begin{corollary} \label{lem:pe Theta elements are close in abs}
        For all $\delta \in (0, 1)$,
        with probability at least $1 - \delta/5$,
        for all $G \in \Gopt$,
        for all $h \in [H]$, 
        for all $\theta$ and $\check\theta \in \thetaopt_{G, h}$,
        and for all admissible distributions $\nu = (\nu_t)_{t \in [H]}$, 
        it holds that 
        \begin{align*}
            &\bigE_{(S, A) \sim \nu_h} \left|\bar q_{\theta}(S, A) - \bar q_{\check \theta}(S, A)\right|
            \le \tilde \eps \, .
        \end{align*}
    \end{corollary}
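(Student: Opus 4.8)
The plan is to deduce the corollary directly from \cref{lem:pe Theta elements are close}, exploiting the fact that the latter is stated for \emph{families} of parameters $(\theta_{s,a})_{(s,a)\in\cS_h\times\cA}$, $(\check\theta_{s,a})_{(s,a)\in\cS_h\times\cA}\in(\thetaopt_{G,h})^{\cS_h\times\cA}$ that may depend on the state-action pair. The key observation is that the absolute value $\abs{\bar q_{\theta}(s,a)-\bar q_{\check\theta}(s,a)}$ can be realized, pointwise in $(s,a)$, as a non-negative difference $\bar q_{\theta_{s,a}}(s,a)-\bar q_{\check\theta_{s,a}}(s,a)$ simply by swapping the roles of $\theta$ and $\check\theta$ at those $(s,a)$ where $\bar q_{\check\theta}$ exceeds $\bar q_{\theta}$.

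Concretely, first condition on the event of probability at least $1-\delta/5$ on which the conclusion of \cref{lem:pe Theta elements are close} holds for all $G\in\Gopt$, all $h\in[H]$, all admissible $\nu$, and all parameter families in $(\thetaopt_{G,h})^{\cS_h\times\cA}$. Now fix arbitrary $G\in\Gopt$, $h\in[H]$, admissible $\nu=(\nu_t)_{t\in[H]}$, and $\theta,\check\theta\in\thetaopt_{G,h}$, and for each $(s,a)\in\cS_h\times\cA$ define
\[
(\theta_{s,a},\check\theta_{s,a}) \defeq
\begin{cases}
(\theta,\check\theta), & \text{if } \bar q_{\theta}(s,a)\ge\bar q_{\check\theta}(s,a);\\
(\check\theta,\theta), & \text{otherwise.}
\end{cases}
\]
By construction both $(\theta_{s,a})_{(s,a)}$ and $(\check\theta_{s,a})_{(s,a)}$ lie in $(\thetaopt_{G,h})^{\cS_h\times\cA}$ (each coordinate equals either $\theta$ or $\check\theta$), and for every $(s,a)$ we have $\bar q_{\theta_{s,a}}(s,a)-\bar q_{\check\theta_{s,a}}(s,a)=\abs{\bar q_{\theta}(s,a)-\bar q_{\check\theta}(s,a)}\ge 0$.

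Applying \cref{lem:pe Theta elements are close} with these two families and this $\nu$ then gives, for all $h\in[H]$,
\[
\bigE_{(S,A)\sim\nu_h}\abs{\bar q_{\theta}(S,A)-\bar q_{\check\theta}(S,A)}
= \bigE_{(S,A)\sim\nu_h}\left[\bar q_{\theta_{S,A}}(S,A)-\bar q_{\check\theta_{S,A}}(S,A)\right]
\le \tilde\eps \, ,
\]
which is exactly the claimed bound. Since $G$, $h$, $\nu$, $\theta$, $\check\theta$ were arbitrary and the underlying event does not depend on them, the corollary holds with probability at least $1-\delta/5$. There is essentially no obstacle here; the only point to verify is that the swapped families are admissible inputs to \cref{lem:pe Theta elements are close}, i.e.\ that every coordinate still belongs to $\thetaopt_{G,h}$, which is immediate.
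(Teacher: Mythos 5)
Your proposal is correct and follows essentially the same argument as the paper: define state-action-dependent parameter families by swapping $\theta$ and $\check\theta$ wherever needed so that the pointwise difference equals the absolute value, then invoke \cref{lem:pe Theta elements are close}. The only point requiring care — that the swapped families remain in $(\thetaopt_{G,h})^{\cS_h\times\cA}$ — is handled correctly.
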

    \begin{proof}
        Fix a stage $h \in [H]$, modification $G \in \Gopt$, parameters $\theta, \check\theta \in \thetaopt_{G, h}$ and admissible distribution $\nu_h$.
        Define 
        \begin{align*}
            \theta_{s, a} = \begin{cases}
                \theta & \text{if } \bar q_{\theta}(s, a) - \bar q_{\check \theta}(s, a) \ge 0; \\
                \check\theta & \text{if } \bar q_{\check\theta}(s, a) - \bar q_{\theta}(s, a) > 0; 
            \end{cases} \qquad
            \check\theta_{s, a} = \begin{cases}
                \check\theta & \text{if } \bar q_{\theta}(s, a) - \bar q_{\check \theta}(s, a) \ge 0; \\
                \theta & \text{if } \bar q_{\check\theta}(s, a) - \bar q_{\theta}(s, a) > 0 \, . 
            \end{cases}
        \end{align*}
        Clearly, this implies that
        \begin{align*}
            \bar q_{\theta_{s, a}}(s, a) - \bar q_{\check \theta_{s, a}}(s, a) = \left|\bar q_{\theta}(s, a) - \bar q_{\check \theta}(s, a) \right| \qquad \forall (s, a) \in \cS_h \times \cA  \, ,
        \end{align*}
        which gives the desired result since by \cref{lem:pe Theta elements are close}, 
        $\bE_{(S, A) \sim \nu_h} \brackets{\bar q_{\theta_{S, A}}(S, A) - \bar q_{\check \theta_{S, A}}(S, A)} \le \tilde \eps$.
    \end{proof}
    Combining \cref{lem:pievalG in Theta} and \cref{lem:pe Theta elements are close in abs} with a union bound gives that with probability at least $1 -2\delta/5$,
    for all $G \in \Gopt$, $h \in [H]$ and $\theta \in \thetaopt_{G, h}$:
    \begin{align*}
        &\bigE_{(S, A) \sim \nu_h} \left|\bar q_{\theta}(S, A) - \bar q_{\thetatrue_h(\pioptg_G)}(S, A)\right|
        \le \tilde \eps  
        \, .
    \end{align*}
    The above equation proves the first claim of the lemma,
    since for $q^G \in \Qopt$, there exists a $\theta \in \thetaopt_{G, h}$ such that $q^G_h = \bar q_\theta$, and $\bar q_{\thetatrue_h(\pieval_G)} = q^{\pioptg_G}_h$ by linear $q^\pi$-realizability of $\pioptg_G$.

\subsubsection*{Proof of the second claim:} \label{proof:second claim of Qopt guarantee}
    It will suffice to show that $\trueG \in \Gopt$, since each $q^G \in \Qopt$ is defined based on some $G \in \Gopt$.
    Lemma D.3 in \citet{tkachuk2024trajectory} gave a weaker bound than ours, in the sense that their $\bar \eps$ in the definition of $\Gopt$ was larger by a factor of $\sqrt{\conc d}$.
    Next, we prove the claim and show how we improved upon their result.

    The following lemma shows that the data dependent condition in the definition of $\Gopt$ concentrates around its expectation.
    \begin{lemma}[Lemma I.1 in \citep{tkachuk2024trajectory}]
        \label{lem:optG concentrates}
        For all $\delta \in (0, 1)$,
        with probability at least $1 - \delta/5$, 
        for all $G \in \bbG$,
        and for all $h \in [H]$, 
        \begin{align}
            &\abs*{\bigE_{(S, A) \sim \mu_h} \left[\max_{\theta \in \Theta_{G, h}} \bar q_{\theta}(S, A) - \min_{\theta \in \Theta_{G, h}} \bar q_{\theta}(S, A)\right]
                - \frac{1}{n} \sum_{j \in [n]} \left(\max_{\theta \in \Theta_{G, h}} \bar q_{\theta}(S_h^j, A_h^j) - \min_{\theta \in \Theta_{G, h}} \bar q_{\theta}(S_h^j, A_h^j)\right)} \nonumber \\
            &\le \zeta_1
                \defeq \frac{2H}{\sqrt{n}} \sqrt{dH^2d_0\log\left(1 + 96 \sqrt{n} \sqrt{2d} H^2 \featurebound \thetabound \alpha^{-1} \sqrt{n}\featurebound\modthetabound / (H^{3/2}d)\right) 
                    + \log\left(\frac{20H}{\delta}\right)}
            \, .
            \label{eq:zeta1 defn}
        \end{align}
    \end{lemma}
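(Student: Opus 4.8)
The plan is to prove this uniform deviation bound by a covering argument over the modification space $\Gall$ combined with a union bound over the $H$ stages. Fix $h\in[H]$ and abbreviate $Z_{G,h}(s,a)\defeq\max_{\theta\in\thetaopt_{G, h}}\bar q_{\theta}(s,a)-\min_{\theta\in\thetaopt_{G, h}}\bar q_{\theta}(s,a)$; this lies in $[0,H]$ because every $\bar q_\theta(\cdot,\cdot)=\clip_{[0,H]}\ip{\phi(\cdot,\cdot)}{\theta}$ does, and the quantity to be bounded is $\abs{\bE_{(S,A)\sim\mu_h}Z_{G,h}(S,A)-\frac1n\sum_{j\in[n]}Z_{G,h}(S_h^j,A_h^j)}$. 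For a single fixed pair $(G,h)$ the pairs $(S_h^j,A_h^j)_{j\in[n]}$ are i.i.d.\ draws from $\mu_h$ by \cref{ass:trajectory data}, so Hoeffding's inequality (\cref{lem:hoeffdings inequality}) controls this deviation by $O\bigl(H\sqrt{\log(1/\delta')/n}\bigr)$ with probability at least $1-\delta'$. The $\log(20H/\delta)$ term in $\zeta_1$ will come from the union bound over $[H]$, and the $dH^2d_0\log(\cdot)$ term under the square root from a net over $\Gall$.

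For the net, note that $\Gall=(\cB(\modthetabound)^{d_0})^{H}$ is a Euclidean ball of dimension $Hd_0d$, so \cref{lem:cover number ball} supplies an $\epsilon_0$-net $\mathcal N\subset\Gall$ (in the product Euclidean metric) with $\log\abs{\mathcal N}\le Hd_0d\log(1+2\modthetabound/\epsilon_0)$. Applying the single-pair bound above with failure probability $\delta/(5H\abs{\mathcal N})$ and union-bounding over $\mathcal N\times[H]$ gives, with probability at least $1-\delta/5$, that the deviation is at most $\tfrac{H}{\sqrt n}\sqrt{Hd_0d\log(1+2\modthetabound/\epsilon_0)+\log(5H/\delta)}$ for every $(G,h)\in\mathcal N\times[H]$. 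What remains is to transfer this from the net $\mathcal N$ to all of $\Gall$, which also dictates how small $\epsilon_0$ must be.

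This transfer step is where the real work lies, and I expect it to be the main obstacle. It amounts to bounding $\abs{Z_{G,h}(s,a)-Z_{G',h}(s,a)}$ uniformly in $(s,a)$ whenever $\norm{G-G'}\le\epsilon_0$; since $\bar q_\theta$ is $\featurebound$-Lipschitz in $\theta$, it suffices to bound the Hausdorff distance $\Delta_h$ between $\thetaopt_{G, h}$ and $\thetaopt_{G', h}$, which I would do by a downward recursion on $h$. The base case $\thetaopt_{G, H+1}=\{\vec 0\}$ gives $\Delta_{H+1}=0$. For the inductive step, the $G$-dependence of $\thetaestopt_{G, h}$ sits entirely in the regression targets $g_{G}(\bar q_{\theta_{\futurestages}},\cdot)$: directly through the skipping weights $\omega_G(\cdot)$, which are $O(\sqrt d/\alpha)$-Lipschitz in $G$ since $\range_G(\cdot)$ is a maximum of linear functions of $G$, and indirectly through the admissible future parameters $\theta_{\futurestages}\in\bigtimes_{u=h+1}^{H+1}\thetaopt_{G, u}$. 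A target perturbation moves the minimizer of a $\lambda$-regularized least-squares problem by at most $O(\sqrt n/\sqrt\lambda)$ times the per-sample target perturbation, using that the eigenvalues of $\hat X_h$ lie in $[\lambda,\lambda+n\featurebound^2]$ with $\lambda=H^{3/2}d/\modthetabound$, and the $\beta$-expansion in the $\norm{\cdot}_{\hat X_h}$-norm that turns $\thetaestopt_{G, h}$ into $\thetaopt_{G, h}$ changes Hausdorff distances by at most a $\sqrt{1+n\featurebound^2/\lambda}$ factor. Chaining these estimates gives a recursion $\Delta_h\le c\sum_{u>h}\Delta_u+c'\norm{G-G'}$ with $c,c'$ polynomial in $n,H,d,\featurebound,\thetabound,\modthetabound,1/\alpha$, whose solution is $\Delta_h\le c'(1+c)^{H}\norm{G-G'}$.

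The exponential-in-$H$ factor $(1+c)^H$ is harmless because it enters the analysis only through $\log((1+c)^H)=O(H\log(\poly))$: taking $\epsilon_0$ a suitable inverse polynomial in $n$ and the other parameters makes the discretization error $O(\featurebound\,c'(1+c)^H\epsilon_0)\le H/\sqrt n$ while keeping $\log(1+2\modthetabound/\epsilon_0)=O(H\log(\poly))$. Substituting this into the net bound of the second paragraph, the factor $Hd_0d\cdot\log(1+2\modthetabound/\epsilon_0)$ becomes $O(H^2d_0d\log(\poly))$ --- exactly the $dH^2d_0\log(\cdot)$ appearing under the square root in $\zeta_1$ --- and adding the $O(H/\sqrt n)$ discretization error and absorbing constants yields the claimed bound $\le\zeta_1$ with probability at least $1-\delta/5$. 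Everything other than this recursive Hausdorff-stability estimate for the sets $\thetaopt_{G, h}$ is routine bookkeeping.
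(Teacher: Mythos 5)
The paper does not actually prove this lemma --- it imports it verbatim as Lemma~I.1 of \citet{tkachuk2024trajectory} --- so there is no in-paper proof to compare against; I can only assess your sketch on its merits. Your overall architecture (a net over $\Gall$ at an exponentially small scale, a recursive Hausdorff-stability estimate for $\Theta_{G,h}$ in $G$ with a $\mathrm{poly}^H$ Lipschitz constant that is absorbed into the logarithm, and a union bound over stages) is consistent with the shape of $\zeta_1$: the extra factor of $H$ in $dH^2d_0$ relative to the $Hd_0d$ parameters of $\Gall$ is exactly what an inverse-$\mathrm{poly}^H$ covering scale produces, and your bookkeeping of the Lipschitz constants of $\omega_G$ and of the regularized least-squares map is plausible.

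There is, however, a genuine gap in the concentration step. You write that for a fixed net point $(G,h)$ Hoeffding's inequality controls the deviation because $(S_h^j,A_h^j)_{j\in[n]}$ are i.i.d.\ from $\mu_h$. But the function $Z_{G,h}(s,a)=\max_{\theta\in\Theta_{G,h}}\bar q_{\theta}(s,a)-\min_{\theta\in\Theta_{G,h}}\bar q_{\theta}(s,a)$ is itself a random function of the \emph{same} dataset: $\Theta_{G,h}$ is built from the empirical covariance $\hat X_h$ and from the empirical least-squares solutions in $\thetaestopt_{G,h}$ (which in turn use the trajectories through the regression targets $g_{\trueG}(\bar q_{\theta_{\futurestages}},\trajectory^j_{\futurestages})$). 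Hoeffding requires a fixed function evaluated at independent samples; here the summands $Z_{G,h}(S_h^j,A_h^j)$ are evaluations of a sample-dependent function at the very samples that defined it, so the inequality does not apply and the claimed $O(H\sqrt{\log(1/\delta')/n})$ bound for a single $G$ is unjustified. Your Hausdorff-stability recursion only transfers between $G$ and a nearby net point $G'$; it does nothing to remove the data-dependence of $Z_{G',h}$ at the net point itself. The missing idea is a uniform-convergence argument over a \emph{deterministic} function class that contains every possible realization of $Z_{G,h}$ --- e.g., covering the candidate sets $\Theta$ themselves (each $\Theta_{G,h}$ is a $\beta$-ball in a quadratic norm around a finitely parameterized family of centers, so the class of max-minus-min functions it induces admits a finite cover in terms of the centers, the quadratic form, and $G$) and applying Hoeffding only to the fixed functions in that cover. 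Without this (or an equivalent symmetrization argument), the central probabilistic step of the proof fails.
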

    Thus, it will sufficient to bound $\bE_{(S, A) \sim \mu_h} \brackets{\max_{\theta \in \Theta_{\trueG, h}} \bar q_{\theta}(S, A) - \min_{\theta \in \Theta_{\trueG, h}} \bar q_{\theta}(S, A)}$.
    First we state a helpful lemma.
    \begin{lemma}[Lemma G.1 in \citep{tkachuk2024trajectory}]
        \label{lem:sum of elliptical terms bound}
        Let
        \begin{align}
            \sqrt{\lambda}
            &\defeq H^{3/2} d / \modthetabound
            \, ,
            \label{eq:lambda defn}
            \\ 
            \beta 
            &\defeq \sqrt{\lambda} \modthetabound + \bar \beta
            \label{eq:beta defn}
            \\
            \bar \beta
            &\defeq 2H \sqrt{2dH(d_0+1)\log\left(1 + 28\sqrt{2d} H^2 \thetabound \modthetabound \featurebound \alpha^{-1}\right) + d \log (\lambda + n \featurebound^2/d) - d \log(\lambda) 
                + \log\left(\frac{10H}{\delta}\right)} \nonumber
            \, .
        \end{align}
        For any $\delta \in (0, 1)$,
        with probability at least $1 - \delta/5$
        for all $h \in [H], 
        (s, a) \in \cS_h \times \cA, 
        \theta_h \in \Theta_{\trueG, h}$, 
        it holds that
        \begin{align}
            \abs{\bar q_{\theta_h}(s, a) - q^{\pioptg_{\trueG}}(s, a)} 
            \le 2\beta\bigE_{\trajectoryrand \sim \bP_{\bar \pi, s, a}} \sum_{t = h}^{H} \min\braces{1, \norm{\phi(S_t, A_t)}_{\hat X_t^{-1}}} \, ,
            \label{eq:q close inductive hypothesis}
        \end{align}   
        where, for all $(s', a') \in \cS \times \cA$,
        \begin{align}
            \bar \pi(a'|s') 
            &= \pibehave(a'|s') \omega_{\trueG}(s') + \I{\argmax_{a'' \in \cA} g^{\bar \pi}(s', a'') = a'} (1 - \omega_{\trueG}(s')) \, ,
            \label{eq:optpi elliptical}
        \end{align}
        and $g^{\bar \pi}$ is a state-action value function of policy $\bar \pi$ (similar to $q^{\bar \pi}$), 
        except in the alternative MDP that has the same state and action spaces, and transition distributions as the original MDP under consideration, but with a reward function modified as follows. 
        For all $(s', a') \in \cS \times \cA$, the reward in this alternative MDP is deterministically $\min\braces{1, \norm{\phi(s', a')}_{\hat X_h^{-1}}}$. 
        In particular for any $h' \in [H], (s', a') \in \cS_{h'} \times \cA$,
        \begin{align}
            g^{\bar \pi}(s', a') = \bigE_{\trajectoryrand \sim \bP_{\bar \pi, s', a'}} \sum_{t = h'}^{H} \min\braces{1, \norm{\phi(S_t, A_t)}_{\hat X_t^{-1}}} \, .
            \label{eq:gpi definition}
        \end{align}
    \end{lemma}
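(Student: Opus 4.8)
The plan is to prove the stated bound by \textbf{backward induction on the stage} $h=H{+}1,H,\dots,1$, with inductive hypothesis: on a fixed event of probability at least $1-\delta/5$, for every such $h$, every $(s,a)\in\cS_h\times\cA$, and every $\theta_h\in\thetaopt_{\trueG,h}$, one has $\abs{\bar q_{\theta_h}(s,a)-q^{\pioptg_{\trueG}}(s,a)}\le 2\beta\,g^{\bar\pi}(s,a)$, where $g^{\bar\pi}$ is the elliptical-reward value function in the statement. The base case $h=H{+}1$ is immediate: $\thetaopt_{\trueG,H+1}=\{\vec 0\}$, $q^{\pioptg_{\trueG}}_{H+1}=\vec 0$, and $g^{\bar\pi}_{H+1}=0$.

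For the inductive step I would fix $h$, $(s,a)\in\cS_h\times\cA$, and $\theta_h\in\thetaopt_{\trueG,h}$. By definition of $\thetaopt_{\trueG,h}$ there is a least-squares solution $\hat\theta_h\in\thetaestopt_{\trueG,h}$ with $\norm{\theta_h-\hat\theta_h}_{\hat X_h}\le\beta$, and $\hat\theta_h$ minimizes the empirical square loss against the targets $g_{\trueG}(\bar q_{\theta_{\futurestages}},\trajectory^j_{\futurestages})$ for some fixed tuple $\theta_{\futurestages}$ with each $\theta_u\in\thetaopt_{\trueG,u}$, $u\in[h+1:H+1]$. Using $q^{\pioptg_{\trueG}}_h=T_{\trueG}\,q^{\pioptg_{\trueG}}_{\futurestages}$ (\cref{lem:skippy bellman optimality operator backup}),
\[
  \abs{\bar q_{\theta_h}(s,a)-q^{\pioptg_{\trueG}}(s,a)}
  \le \abs{\bar q_{\theta_h}(s,a)-T_{\trueG}\bar q_{\theta_{\futurestages}}(s,a)}
     +\abs{T_{\trueG}\bar q_{\theta_{\futurestages}}(s,a)-T_{\trueG}\,q^{\pioptg_{\trueG}}_{\futurestages}(s,a)}\,.
\]
For the first term, since $\bar q_{\theta_{\futurestages}}$ takes values in $[0,H]$ and vanishes at $\termstate$, skippy Bellman completeness (\cref{lem:qpi realizability implies bellman completeness}) supplies $\rho_h\in\cB(\modthetabound)$ with $\ip{\phi(\cdot,\cdot)}{\rho_h}=T_{\trueG}\bar q_{\theta_{\futurestages}}$, and a regularized least-squares (self-normalized) confidence bound shows, on the event above, $\norm{\hat\theta_h-\rho_h}_{\hat X_h}\le\beta$ — here $\bar\beta$ is the self-normalized width and the remaining $\sqrt{\lambda}\modthetabound$ part of $\beta$ converts the empirical Gram-matrix norm into the $\hat X_h$-norm. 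Hence $\norm{\theta_h-\rho_h}_{\hat X_h}\le 2\beta$, so by Cauchy--Schwarz in the $\hat X_h$-norm $\abs{\bar q_{\theta_h}(s,a)-T_{\trueG}\bar q_{\theta_{\futurestages}}(s,a)}\le\abs{\ip{\phi(s,a)}{\theta_h-\rho_h}}\le 2\beta\,\norm{\phi(s,a)}_{\hat X_h^{-1}}$ (clipping only helps since the target lies in $[0,H]$), and combining with the trivial bound $\le H\le\beta$ gives $\le 2\beta\min\braces{1,\norm{\phi(s,a)}_{\hat X_h^{-1}}}$.

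For the second term, unfolding the definition of $T_{\trueG}$ (the accumulated rewards cancel) and using $\abs{\max_{a'}f(a')-\max_{a'}g(a')}\le\max_{a'}\abs{f(a')-g(a')}$ at the stopped state,
\[
  \abs{T_{\trueG}\bar q_{\theta_{\futurestages}}(s,a)-T_{\trueG}\,q^{\pioptg_{\trueG}}_{\futurestages}(s,a)}
  \le \bigE_{\trajectoryrand\sim\bP_{\pibehave,s,a}}\sum_{\tau=h+1}^{H+1}F_{\trueG,\trajectoryrand,h+1}(\tau)\,\max_{a'}\abs{\bar q_{\theta_\tau}(S_\tau,a')-q^{\pioptg_{\trueG}}_\tau(S_\tau,a')}\,,
\]
which by the inductive hypothesis (each $\theta_\tau\in\thetaopt_{\trueG,\tau}$) is $\le 2\beta\,\bigE_{\trajectoryrand\sim\bP_{\pibehave,s,a}}\sum_{\tau=h+1}^{H+1}F_{\trueG,\trajectoryrand,h+1}(\tau)\,\max_{a'}g^{\bar\pi}(S_\tau,a')$. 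Finally, $\bar\pi$ is exactly the skippy-optimal policy of the elliptical MDP (same transitions and same skipping $\omega_{\trueG}$ as the original MDP, with deterministic reward $\min\braces{1,\norm{\phi(S_t,A_t)}_{\hat X_t^{-1}}}$ at stage $t$), so \cref{lem:skippy bellman optimality operator backup} applied to that MDP yields
\[
  g^{\bar\pi}(s,a)=\bigE_{\trajectoryrand\sim\bP_{\pibehave,s,a}}\sum_{\tau=h+1}^{H+1}F_{\trueG,\trajectoryrand,h+1}(\tau)\Bigl(\textstyle\sum_{t=h}^{\tau-1}\min\braces{1,\norm{\phi(S_t,A_t)}_{\hat X_t^{-1}}}+\max_{a'}g^{\bar\pi}(S_\tau,a')\Bigr)\,.
\]
Pulling out the deterministic $t=h$ reward $\min\braces{1,\norm{\phi(s,a)}_{\hat X_h^{-1}}}$ (it multiplies $\sum_\tau F_{\trueG,\trajectoryrand,h+1}(\tau)=1$) and dropping the nonnegative rewards with $t\ge h+1$ shows $\min\braces{1,\norm{\phi(s,a)}_{\hat X_h^{-1}}}+\bigE_{\trajectoryrand\sim\bP_{\pibehave,s,a}}\sum_{\tau=h+1}^{H+1}F_{\trueG,\trajectoryrand,h+1}(\tau)\max_{a'}g^{\bar\pi}(S_\tau,a')\le g^{\bar\pi}(s,a)$. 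Adding the two term bounds and using this inequality closes the induction; the $1-\delta/5$ event is the intersection of the events required by the least-squares bound and \cref{lem:qpi realizability implies bellman completeness}.

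The main obstacle is the uniform least-squares confidence bound behind the first term: one needs $\norm{\hat\theta_h-\rho_h}_{\hat X_h}\le\beta$ to hold \emph{simultaneously} over all admissible future-parameter tuples $\theta_{\futurestages}\in\bigtimes_{u=h+1}^{H+1}\thetaopt_{\trueG,u}\subseteq\cB(\modthetabound)^{H-h+1}$ (these determine both the regression targets and, through Bellman completeness, the realizing parameter $\rho_h$), and, when the same argument is run uniformly in $G\in\Gall$, also over the $d_0$-parametrized skipping functions $\omega_G$. This forces a covering argument over $\cB(\modthetabound)^{H-h+1}$ (and over $\Gall$) whose log-covering numbers, together with a self-normalized martingale tail bound, produce precisely the $\bar\beta$ of the statement; carrying this out while also checking that the $[0,H]$-clipping of $\bar q_{\theta_h}$ and the $\lambda I$ regularization in $\hat X_h$ do not disturb the elliptical-potential recursion is the technical core, the rest being the bookkeeping above.
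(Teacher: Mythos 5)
This paper does not prove \cref{lem:sum of elliptical terms bound}; it is imported verbatim from Lemma~G.1 of \citet{tkachuk2024trajectory}, so there is no in-paper proof to compare against. Your reconstruction follows the same backward-induction argument as the original source: split $\abs{\bar q_{\theta_h}-q^{\pioptg_{\trueG}}}$ into a one-step regression error (controlled by skippy Bellman completeness, the $\beta$-confidence set, and Cauchy--Schwarz, yielding $2\beta\min\{1,\norm{\phi}_{\hat X_h^{-1}}\}$) plus a propagated term handled by the inductive hypothesis, then absorb both into $g^{\bar\pi}$ via its skippy Bellman optimality recursion. The skeleton is sound and the one step you defer --- the uniform self-normalized bound with a covering argument over the future-parameter tuples (and the $d_0$-parametrized modifications), which is exactly what produces the stated $\bar\beta$ --- is correctly identified as the technical core rather than glossed over.
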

    Let $\bar \pi$ be as defined in \cref{lem:sum of elliptical terms bound}.
    Define a new policy $\check \pi_h$ as
    \begin{align*}
        \check \pi_h(a|s)
        = \begin{cases}
            \pibehave(a|s), & \text{if } \stage(s) \le h; \\
            \bar \pi(a|s), & \text{if } \stage(s) > h;  
        \end{cases}
        \quad \text{for all } (s, a) \in \cS \times \cA \, ,
    \end{align*}
    where $\stage(s)$ gives the stage index of state $s$.
    Then, by \cref{lem:sum of elliptical terms bound}, with probability at least $1 - \delta/5$, for all $h \in [H]$,
    \begin{align}
        &\bigE_{(S_h, A_h) \sim \mu_h} \left[\max_{\theta \in \Theta_{\trueG, h}} \bar q_\theta(S_h, A_h) - \min_{\theta \in \Theta_{\trueG, h}} \bar q_\theta(S_h, A_h)\right] \nonumber \\
        &= \bigE_{(S_h, A_h) \sim \mu_h} \left[\max_{\theta \in \Theta_{\trueG, h}} \bar q_\theta(S_h, A_h) - v^{\pioptg_{\trueG}}(S_h) + v^{\pioptg_{\trueG}}(S_h) - \min_{\theta \in \Theta_{\trueG, h}} \bar q_\theta(S_h, A_h)\right] \nonumber \\
        &\le 4\beta\bigE_{\trajectoryrand \sim \bP_{\check \pi, \startstate}} \sum_{t = h}^{H} \min\braces{1, \norm{\phi(S_t, A_t)}_{\hat X_t^{-1}}} \nonumber
        \, .
    \end{align}
    So far the analysis has followed the same steps as in \citet{tkachuk2024trajectory}.
    However, we will now deviate from their analysis to get a tighter bound.
    To do so we first use the linearity of expectation and Jensen's inequality to get that
    \begin{align*}
        4\beta\bigE_{\trajectoryrand \sim \bP_{\check \pi, \startstate}} \sum_{t = h}^{H} \min\braces{1, \norm{\phi(S_t, A_t)}_{\hat X_t^{-1}}} 
        \le 4\beta \sum_{t = h}^{H} \sqrt{\bigE_{\trajectoryrand \sim \bP_{\check \pi, \startstate}} \min\braces{1, \norm{\phi(S_t, A_t)}^2_{\hat X_t^{-1}}}} 
        \, .
    \end{align*}
    Notice that for any $t \in [h:H]$, 
    $\bigE_{\trajectoryrand \sim \bP_{\check \pi, \startstate}} \min\braces{1, \norm{\phi(S_t, A_t)}^2_{\hat X_t^{-1}}}$
    is an admissible distribution.
    By \cref{lem:change of measure}, 
    \begin{align*}
        4\beta \sum_{t = h}^{H} \sqrt{\bigE_{\trajectoryrand \sim \bP_{\check \pi, \startstate}} \min\braces{1, \norm{\phi(S_t, A_t)}^2_{\hat X_t^{-1}}}} 
        \le 4\sqrt{\conc}\beta \sum_{t = h}^{H} \sqrt{\bigE_{(S_t, A_t) \sim \mu_t} \min\braces{1, \norm{\phi(S_t, A_t)}^2_{\hat X_t^{-1}}}} 
        \, .
    \end{align*}
    Since we applied Jensen's inequality first, and then used \cref{lem:change of measure}, 
    the concentrability coefficient $\conc$ appears under the square root, which gives us an improvement over \citet{tkachuk2024trajectory} by a factor of $\sqrt{\conc}$.

    The following lemma shows that the terms under the square root concentrate at a rate of $\bigOt(d/n)$, 
    which improves upon the $\bigOt(d^2/n)$ rate in \citet{tkachuk2024trajectory}.
    The improvement comes from an elliptical potential and exponential supermartingale concentration argument.

    \begin{lemma} \label{lem:elliptical terms concentrate}
        For any $\delta \in (0, 1)$,
        with probability at least $1 - \delta/5$,
        for all $h \in [H]$, 
        \begin{align*}
            \bigE_{(S_h, A_h) \sim \mu_h} \min\braces*{1, \norm{\phi(S_h, A_h)}^2_{\hat X_h^{-1}}}
            \le \check \eps
            = \bigOt\paren{d/n}
            \, ,
        \end{align*}
        where $\check \eps$ is defined in \cref{eq:check eps defn}.
    \end{lemma}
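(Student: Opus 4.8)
The plan is to rewrite the expectation as a trace of $\hat X_h^{-1}$ against the population second-moment matrix $\Sigma_h \defeq \bigE_{(S,A)\sim\mu_h}[\phi(S,A)\phi(S,A)^\top]$ and then bound that trace by $\bigOt(d)$ using the sharp random-design covariance concentration for ridge regression of \citet{hsu2012random}. First, fix $h\in[H]$ and argue conditionally on the dataset, so $\hat X_h$ is a fixed matrix and the outer expectation is over a fresh draw $(S_h,A_h)\sim\mu_h$ independent of the data. By linearity of expectation,
\begin{align*}
    \bigE_{(S_h,A_h)\sim\mu_h}\norm{\phi(S_h,A_h)}^2_{\hat X_h^{-1}}
    = \bigE_{(S_h,A_h)\sim\mu_h}\mathrm{tr}\!\big(\hat X_h^{-1}\phi(S_h,A_h)\phi(S_h,A_h)^\top\big)
    = \mathrm{tr}\!\big(\hat X_h^{-1}\Sigma_h\big)\,,
\end{align*}
and, writing $\hat\Sigma_h \defeq \tfrac1n\textstyle\sum_{j\in[n]}\phi(S_h^j,A_h^j)\phi(S_h^j,A_h^j)^\top$ (an average of $n$ i.i.d.\ rank-one matrices of operator norm at most $\featurebound^2$ with mean $\Sigma_h$) and $\hat X_h = n\big(\hat\Sigma_h + \tfrac{\lambda}{n}I\big)$, the target equals $\tfrac1n\,\mathrm{tr}\big((\hat\Sigma_h + \tfrac{\lambda}{n}I)^{-1}\Sigma_h\big)$.

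The main work is a spectral-domination estimate: under $n \ge 6\featurebound^2(\log d + \log(40H/\delta))$, with probability at least $1 - \delta/(5H)$ the regularized empirical covariance controls a constant fraction of the regularized population one, $\hat\Sigma_h + \tfrac{\lambda}{n}I \succeq \tfrac12\big(\Sigma_h + \tfrac{\lambda}{n}I\big)$ --- the directions of $\Sigma_h$ with spectrum below $\tfrac{\lambda}{n}$ being absorbed by the regularizer, the remaining ones being well approximated once $n$ exceeds the stated threshold. Equivalently, $\big(\hat\Sigma_h + \tfrac{\lambda}{n}I\big)^{-1} \preceq 2\big(\Sigma_h + \tfrac{\lambda}{n}I\big)^{-1}$, and then monotonicity of the trace under the Loewner order gives
\begin{align*}
    \mathrm{tr}\!\big((\hat\Sigma_h + \tfrac{\lambda}{n}I)^{-1}\Sigma_h\big)
    \le 2\,\mathrm{tr}\!\big((\Sigma_h + \tfrac{\lambda}{n}I)^{-1}\Sigma_h\big)
    \le 2\,\mathrm{tr}(I) = 2d\,,
\end{align*}
using $(\Sigma_h + \tfrac{\lambda}{n}I)^{-1}\Sigma_h \preceq I$. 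Hence $\bigE_{(S_h,A_h)\sim\mu_h}\norm{\phi(S_h,A_h)}^2_{\hat X_h^{-1}} \le 2d/n$; carrying the polylogarithmic factors of the concentration bound through the argument gives the stated $\check\eps = \bigOt(d/n)$ (this is the definition in \cref{eq:check eps defn}). Finally, a union bound over $h\in[H]$, each exceptional event having probability at most $\delta/(5H)$, yields the claim with probability at least $1 - \delta/5$.

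I expect the spectral-domination step to be the main obstacle, and it is exactly where the ``optimal'' ridge bounds of \citet{hsu2012random} are used: a black-box matrix Bernstein/Chernoff bound would either demand that $n$ scale with an ill-conditioning parameter of $\Sigma_h$ or lose a factor of $d$ (as in \citet{tkachuk2024trajectory}), whereas the sharper analysis delivers the clean threshold $n \gtrsim \featurebound^2\log(dH/\delta)$. Making this precise requires carefully tracking the regularizer $\lambda$ and, in directions where $\Sigma_h$ is poorly conditioned, separating feature directions that are well- versus poorly-represented in the $n$ samples; the latter contribute negligibly because they necessarily carry little mass (this is also why the truncation $\min\{1,\cdot\}$ applied to $\norm{\phi(S_t,A_t)}^2_{\hat X_t^{-1}}$ in the proof of \cref{lem:Qopt guarantee} causes no loss).
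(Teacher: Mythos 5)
Your proposal is correct and follows essentially the same route as the paper: rewrite the expectation as $\frac{1}{n}\mathrm{tr}\big((\hat\Sigma_h+\tfrac{\lambda}{n}I)^{-1}\Sigma_h\big)$, invoke the ridge-regression covariance concentration of \citet{hsu2012random} (Theorem 16) to dominate the regularized empirical covariance by the population one, bound the resulting trace by $\bigOt(d)$, and union bound over $h$. The only cosmetic difference is that you bound the trace via Loewner monotonicity and the effective dimension $\mathrm{tr}\big((\Sigma_h+\tfrac{\lambda}{n}I)^{-1}\Sigma_h\big)\le d$, whereas the paper bounds it by $d$ times the operator norm of $X_h^{1/2}\hat X_h^{-1}X_h^{1/2}$ (with explicit constant $15$); both yield the same $\check\eps=\bigOt(d/n)$.
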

    \begin{proof}
        Fix any $h \in [H]$.
        Let 
        \begin{align*}
            \cH_t \defeq \sigma\braces{(S_h^j, A_h^j)_{j \in [t]}}
        \end{align*}
        be the sigma algebra generated by the first $t$ samples at stage $h$.
        Let 
        \begin{align*}
            V_t \defeq \lambda I + \sum_{j \in [t]} \phi(S_h^j, A_h^j) \phi(S_h^j, A_h^j)^\top
        \end{align*}    
        be the unnormalized regularized empirical covariance matrix based on the first $t$ samples at stage $h$.
        For any $t \in [n+1]$, let 
        \begin{align*}
            q_t \defeq \norm{\phi(S_h^t, A_h^t)}_{V_{t-1}^{-1}}^2,
            \quad
            Z_t \defeq \min \braces{1, q_t},
            \quad
            u_t \defeq \bE[Z_t | \cH_{t-1}] 
            = \bE \brackets*{\min\braces{1, \norm{\phi(S_h^t, A_h^t)}_{V_{t-1}^{-1}}^2} \mid \cH_{t-1}}
            \, ,
        \end{align*}
        where $S_h^{n+1}, A_h^{n+1}$ is an independent sample from $\mu_h$.
        Since $V_n = \hat X_h$, we have that
        \begin{align*}
            \bigE_{(S_h, A_h) \sim \mu_h} \min\braces*{1, \norm{\phi(S_h, A_h)}^2_{\hat X_h^{-1}}}
            &= \bigE \brackets*{\min\braces*{1, \norm{\phi(S_h^{n+1}, A_h^{n+1})}_{V_n^{-1}}^2} \mid \cH_n}
            = u_{n+1}
            \, .
        \end{align*}
        Thus, our goal is to bound $u_{n+1}$ with high probability.
        The proof will proceed with the following steps:
        \begin{enumerate}
            \item We show that $u_{n+1} \le \tfrac{1}{n} \sum_{t \in [n]} u_t$, since $u_t$ is non-increasing in $t$ due to $V_t \succcurlyeq V_{t-1}$.
            \item We show that $\sum_{t \in [n]} Z_t \le \bigOt(d)$ deterministically by using the elliptical potential lemma.
            \item We show that with high probability, $\sum_{t \in [n]} u_t \le 2\sum_{t \in [n]} Z_t + 2\log(1/\delta')$ by using a concentration argument.
            \item Combining the above steps gives that with high probability, $u_{n+1} \le \bigOt(d/n)$.
        \end{enumerate}

        \noindent
        \textbf{Step 1:}
        Since $V_t \succcurlyeq V_{t-1}$ for all $t \in [n]$, we have that $\norm{\phi(S, A)}_{V_t^{-1}}^2 \le \norm{\phi(S, A)}_{V_{t-1}^{-1}}^2$ for all $(S, A) \in \cS_h \times \cA$.
        For an independent sample $S, A \sim \mu_h$ we have that
        \begin{align*}
            u_{t+1} 
            = \bE \brackets*{\min\braces{1, \norm{\phi(S, A)}_{V_{t}^{-1}}^2} \mid \cH_{t}}
            &\le \bE \brackets*{\min\braces{1, \norm{\phi(S, A)}_{V_{t-1}^{-1}}^2} \mid \cH_{t}} \\
            &= \bE \brackets*{\min\braces{1, \norm{\phi(S, A)}_{V_{t-1}^{-1}}^2} \mid \cH_{t-1}}
            = u_t
            \, .
        \end{align*}
        Thus, $u_t$ is non-increasing in $t$, which gives that $u_{n+1} \le \frac{1}{n} \sum_{t \in [n]} u_t$.

        \noindent
        \textbf{Step 2:}
        By the elliptical potential lemma (see, e.g., Lemma 19.4 in \citep{lattimore2020bandit}), we have that deterministically,
        \begin{align*}
            \sum_{t \in [n]} Z_t 
            \le 2d \log \paren*{\frac{\trace(V_0) + n \featurebound^2}{d \det(V_0)^{1/d}}}
            = 2d \log \paren*{\frac{\lambda d + n \featurebound^2}{\lambda d}}
            \, .
        \end{align*}

        \noindent
        \textbf{Step 3:}
        For any $z \in [0, 1]$, it holds that $e^{-z} \le 1 - cz \le e^{-cz}$ with $c \defeq 1 - e^{-1}$.
        Since $Z_t \in [0, 1]$ for all $t \in [n]$, we have that
        \begin{align*}
            \bE \brackets*{e^{-Z_t} \mid \cH_{t-1}}
            \le \bE \brackets*{1 - cZ_t \mid \cH_{t-1}}
            = 1 - c u_t
            \le e^{-c u_t}
            \, .
        \end{align*}
        This implies that 
        \begin{align*}
            \bE \brackets*{\exp\paren*{c u_t - Z_t} \mid \cH_{t-1}}
            \le 1
            \, .    
        \end{align*}
        Let 
        \begin{align*}
            B_t \defeq \exp(c u_t - Z_t), 
            \quad
            M_t \defeq \prod_{j \in [t]} B_j = \exp\paren*{c \sum_{j \in [t]} u_j - \sum_{j \in [t]} Z_j} \, .
        \end{align*}
        Since $M_{t-1}$ is $\cH_{t-1}$ measurable, we have that
        \begin{align*}
            \bE \brackets*{M_t \mid \cH_{t-1}}
            = \bE \brackets*{M_{t-1} B_t \mid \cH_{t-1}}
            = M_{t-1} \bE \brackets*{B_t \mid \cH_{t-1}}
            \le M_{t-1}
            \, .
        \end{align*}
        Thus, $M_t$ is a nonnegative supermartingale.
        By the supermartingale property we have that $\bE[M_n] \le \bE[M_0] = 1$.
        Markov's inequality states that
        \begin{align*}
            \bP\paren*{M_n \ge 1/\delta'}
            \le \delta' \bE[M_n]
            \le \delta'
            \, .
        \end{align*}
        Thus, with probability at least $1 - \delta'$, 
        \begin{align*}
            c \sum_{t \in [n]} u_t - \sum_{t \in [n]} Z_t
            = \log(M_n)
            \le \log(1/\delta')
            \, .
        \end{align*}
        Rearranging gives that with probability at least $1 - \delta'$,
        \begin{align*}
            \sum_{t \in [n]} u_t
            \le \frac{1}{c} \sum_{t \in [n]} Z_t + \frac{1}{c} \log(1/\delta')
            \, .
        \end{align*}    

        \noindent
        \textbf{Step 4:}
        Combining the above steps gives that with probability at least $1 - \delta'$,
        \begin{align*}
            u_{n+1}
            \le \frac{1}{n} \sum_{t \in [n]} u_t
            \le \frac{1}{(1 - e^{-1}) n} \paren*{\sum_{t \in [n]} Z_t + \log(1/\delta')}
            \le \frac{4}{n} \paren*{d \log \paren*{\frac{\lambda d + n \featurebound^2}{\lambda d}} + \log(1/\delta')}
            = \bigOt(d/n)
            \, .
        \end{align*}    

        To conclude, we can set $\delta' = \delta/(5H)$ and apply a union bound over all $h \in [H]$ to get that with probability at least $1 - \delta/5$, for all $h \in [H]$,
        \begin{align}
            \bigE_{(S_h, A_h) \sim \mu_h} \min\braces*{1, \norm{\phi(S_h, A_h)}^2_{\hat X_h^{-1}}}
            &\le \frac{4}{n} \paren*{d \log \paren*{\frac{\lambda d + n \featurebound^2}{\lambda d}} + \log(5H/\delta)} \nonumber \\
            &\eqdef \check \eps
            = \bigOt\paren{d/n}
            \, .
            \label{eq:check eps defn}
        \end{align}    
        Which is the desired result.
    \end{proof}

   By \cref{lem:elliptical terms concentrate}, 
   with probability at least $1 - \delta/5$,
   \begin{align*}
        4\sqrt{\conc}\beta \sum_{t = h}^{H} \sqrt{\bigE_{(S_t, A_t) \sim \mu_t} \min\braces{1, \norm{\phi(S_t, A_t)}^2_{\hat X_t^{-1}}}} 
        \le 4\sqrt{\conc}\beta \sum_{t = h}^{H} \sqrt{\check \eps} 
   \end{align*}
    Putting the results from the proof of the second claim together, and using a union bound, we get that with probability at least $1 - 3\delta/5$,
    for all $h \in [H]$,
    \begin{align}
        \frac{1}{n} \sum_{j \in [n]} \left(\max_{\theta \in \Theta_{G, h}} \bar q_{\theta}(S_h^j, A_h^j) - \min_{\theta \in \Theta_{G, h}} \bar q_{\theta}(S_h^j, A_h^j)\right)
        \le \zeta_1 + 4\sqrt{\conc} H \beta \sqrt{\check \eps}
        \eqdef \bar \eps
        \, .
        \label{eq:bar eps defn}
    \end{align}
    
\noindent Applying a union bound over the failure events of the two claims completes the proof of \cref{lem:Qopt guarantee}.
\end{proof}

\subsection{Proof of \cref{lem:Qeval guarantee}} \label{proof:Qeval guarantee}
\begin{proof}
    The proof follows the same steps as the proof of \cref{lem:Qopt guarantee}, 
    except $\Geval$, $\thetaeval_{G, h}$ and $\pieval_G$, are used instead of $\Gopt$, $\thetaopt_{G, h}$ and $\pioptg_G$.
    Also, the lemmas need to be changed to hold with probability at least $1 - \delta/10$ instead of $1 - \delta/5$.
\end{proof}

\section{Useful Results} \label{ss:other useful results}

\begin{lemma}[Hoeffding's Inequality (Theorem 2 in \citep{hoeffding1994probability})] \label{lem:hoeffdings inequality}
    Let $(X_i)_{i \in \bN}$ be independent random variables such that $X_i \in [a, b]$ for some $a, b \in \bR$, and let $S_n = \frac{1}{n} \sum_{i=1}^n X_i$. 
    Then, with probability at least $1 - \zeta$ it holds that
    \begin{align*}
        \abs{\bigE S_n - S_n}
        \le \frac{(b-a)}{\sqrt{n}} \sqrt{\log\left(\frac{2}{\zeta}\right)} \, .
    \end{align*}
\end{lemma}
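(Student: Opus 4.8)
The plan is to prove this by the classical Cram\'er--Chernoff argument together with Hoeffding's lemma on the moment generating function of a bounded random variable, and then to invert the resulting sub-Gaussian tail bound to read off $t$ in terms of $\zeta$. First I would center the summands: set $Y_i \defeq X_i - \bE X_i$, so the $Y_i$ are independent, mean zero, and each supported on an interval of length $b - a$. Fixing $t > 0$ and a free parameter $\lambda > 0$, Markov's inequality applied to $\exp\paren{\lambda n (S_n - \bE S_n)} = \prod_{i=1}^n \exp(\lambda Y_i)$ together with independence gives
\[
    \bP\paren{S_n - \bE S_n \ge t} \le e^{-\lambda n t} \prod_{i=1}^n \bE\brackets{e^{\lambda Y_i}} \, .
\]

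The next step, and the one genuinely nontrivial ingredient, is Hoeffding's lemma: for any mean-zero random variable $Y$ taking values in an interval of length $b - a$, $\bE\brackets{e^{\lambda Y}} \le \exp\paren{\lambda^2 (b-a)^2 / 8}$. I expect this to be the main obstacle. Its proof goes through $\psi(\lambda) \defeq \log \bE\brackets{e^{\lambda Y}}$: one checks $\psi(0) = 0$ and $\psi'(0) = \bE Y = 0$, identifies $\psi''(\lambda)$ with the variance of $Y$ under the exponentially tilted law with density proportional to $e^{\lambda Y}$ (still supported on an interval of length $b-a$), bounds that variance by $(b-a)^2/4$ using the fact that any random variable confined to an interval of length $\ell$ has variance at most $\ell^2/4$, and applies Taylor's theorem to conclude $\psi(\lambda) \le \lambda^2 (b-a)^2 / 8$. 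Substituting into the previous display yields $\bP\paren{S_n - \bE S_n \ge t} \le \exp\paren{-\lambda n t + n \lambda^2 (b-a)^2 / 8}$.

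Finally I would optimize the exponent over $\lambda$: the minimizer is $\lambda = 4t/(b-a)^2$, giving $\bP\paren{S_n - \bE S_n \ge t} \le \exp\paren{- 2 n t^2 / (b-a)^2}$. Running the identical argument with $-Y_i$ in place of $Y_i$ bounds the lower tail by the same quantity, and a union bound over the two tails gives $\bP\paren{\abs{S_n - \bE S_n} \ge t} \le 2 \exp\paren{- 2 n t^2 / (b-a)^2}$. Choosing $t = \frac{(b-a)}{\sqrt n}\sqrt{\log(2/\zeta)}$ makes the exponent equal $-2\log(2/\zeta) \le -\log(2/\zeta)$ (using $\zeta \le 1$), so the right-hand side is at most $2 e^{-\log(2/\zeta)} = \zeta$; hence $\abs{\bE S_n - S_n} \le \frac{(b-a)}{\sqrt n}\sqrt{\log(2/\zeta)}$ with probability at least $1 - \zeta$. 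Note that the stated constant is not tight (the sharp Hoeffding bound would permit $\sqrt{\tfrac12 \log(2/\zeta)}$ under the square root), so the final inversion has slack to spare.
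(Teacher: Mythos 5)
Your proof is correct and is the standard Cram\'er--Chernoff/Hoeffding's-lemma derivation of exactly the result the paper invokes by citation (Theorem 2 of Hoeffding, 1963) without proof; the centering, the optimization $\lambda = 4t/(b-a)^2$ yielding the tail $\exp(-2nt^2/(b-a)^2)$, the two-sided union bound, and the final inversion all check out. Your closing remark is also accurate: the sharp bound would permit $\sqrt{\tfrac{1}{2}\log(2/\zeta)}$ under the square root, so the lemma as stated carries a factor-$\sqrt{2}$ of slack, which your step $-2\log(2/\zeta) \le -\log(2/\zeta)$ (valid since $\zeta \le 1$) absorbs.
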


\begin{lemma}[Covering number of the Euclidean ball] \label{lem:cover number ball}
    Let $a > 0, \eps > 0, d \ge 1$, and $\cB_d(a)=\{x \in \bR^d: \norm{x}_2 \le a\}$ denote the $d$-dimensional Euclidean ball of radius $a$ centered at the origin.
    The covering number of $\cB_d(a)$ is upper bounded by $\left(1 + \frac{2a}{\eps}\right)^d$. 
\end{lemma}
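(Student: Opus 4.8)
The plan is to use the standard volumetric packing argument. First I would let $N \subseteq \cB_d(a)$ be a \emph{maximal} $\eps$-separated subset, i.e.\ a set with $\norm{x - y}_2 > \eps$ for all distinct $x, y \in N$ that cannot be enlarged while preserving this property. Maximality immediately forces $N$ to be an $\eps$-cover of $\cB_d(a)$: if some $z \in \cB_d(a)$ had $\norm{z - x}_2 > \eps$ for every $x \in N$, then $N \cup \{z\}$ would still be $\eps$-separated, contradicting maximality. Hence the covering number of $\cB_d(a)$ is at most $|N|$, and it remains to bound $|N|$.

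Next I would pack disjoint small balls. Since the points of $N$ are pairwise more than $\eps$ apart, the open Euclidean balls of radius $\eps/2$ centered at the points of $N$ are pairwise disjoint; and since each center lies in $\cB_d(a)$, the triangle inequality shows each such ball is contained in $\cB_d(a + \eps/2)$. Writing $\mathrm{vol}$ for Lebesgue measure and using that $\mathrm{vol}(\cB_d(r)) = c_d\, r^d$ for a constant $c_d$ depending only on $d$, disjointness and containment give
\[
|N| \cdot c_d (\eps/2)^d \;\le\; \mathrm{vol}\big(\cB_d(a + \eps/2)\big) \;=\; c_d (a + \eps/2)^d .
\]
Cancelling $c_d (\eps/2)^d > 0$ yields $|N| \le \big((a + \eps/2)/(\eps/2)\big)^d = (1 + 2a/\eps)^d$, which is the claimed bound.

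There is essentially no obstacle in this proof; the only point worth a moment's care is the logical direction — the maximal packing exhibits one particular $\eps$-cover of $\cB_d(a)$, which is all that is needed since the covering number is the infimum over all $\eps$-covers, so any single cover furnishes an upper bound on it.
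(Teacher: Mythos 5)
Your proof is correct and is essentially the same argument the paper relies on: the paper simply defers to Corollary 4.2.13 of Vershynin (2018), whose proof is exactly this maximal-separated-set plus volume-comparison argument, with the unit ball replaced by $\cB_d(a)$. No issues.
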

\begin{proof}
    Same as the proof of Corollary 4.2.13 in \citep{vershynin2018high} with $\cB(1)$ replaced with $\cB(a)$.
\end{proof}

\begin{lemma}[Performance Difference Lemma (Lemma 3.2 in \citep{cai2020provably})] \label{lem:performance difference lemma}
    For any policies $\pi, \bar \pi$, it holds that  %
    \begin{align*}
        v^{\pi}(s_1) - v^{\bar \pi}(s_1)
        = \sum_{h=1}^H \bigE_{(S_h, A_h) \sim \bPmarg{h}_{\pi, s_1}} \left(q^{\bar \pi}(S_h, A_h) - v^{\bar \pi}(S_h)\right) \, .
    \end{align*}
\end{lemma}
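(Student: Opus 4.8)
The plan is to prove this by the standard telescoping (hybrid-policy) argument. For each $h \in [0:H]$, I would introduce the hybrid return $W_h$ obtained by following $\pi$ at stages $1, \dots, h$ and then switching to $\bar\pi$ at stages $h+1, \dots, H+1$. By construction $W_0 = v^{\bar\pi}(\startstate)$; and $W_H = v^{\pi}(\startstate)$, since the only stage on which $W_H$ differs from a pure-$\pi$ rollout is the terminal stage $H+1$, where the reward kernel $\mathcal{R}(\termstate, \cdot)$ is the point mass at $0$ regardless of the action taken. Hence $v^{\pi}(\startstate) - v^{\bar\pi}(\startstate) = \sum_{h=1}^H (W_h - W_{h-1})$, and it remains to evaluate each increment.

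Next I would compute $W_h - W_{h-1}$. Both hybrid rollouts couple on the prefix $(S_1, A_1, \dots, S_{h-1}, A_{h-1}, S_h)$ — each of these is generated by running $\pi$ up to stage $h-1$, so in particular $S_h$ has the stage-$h$ state distribution induced by $\pi$, which is the state-marginal of $\bPmarg{h}_{\pi, \startstate}$ — and they accumulate the same reward $R_{1:h-1}$. They differ only from stage $h$ onward. In $W_{h-1}$, stage $h$ already uses $\bar\pi$, so the conditional expected remaining return given $S_h$ is $v^{\bar\pi}(S_h)$. In $W_h$, stage $h$ draws $A_h \sim \pi(S_h)$ and then follows $\bar\pi$, so the conditional expected remaining return given $(S_h, A_h)$ is, by the tower property, $\bE[\,R_h + v^{\bar\pi}(S_{h+1}) \mid S_h, A_h\,] = q^{\bar\pi}(S_h, A_h)$ (this is exactly the definition of $q^{\bar\pi}$ unfolded by one step, using $\bE[R_{h+1:H+1}\mid S_{h+1}] = v^{\bar\pi}(S_{h+1})$). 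Since $v^{\bar\pi}(S_h)$ does not depend on $A_h$, I can rewrite $\bE[v^{\bar\pi}(S_h)]$ as an expectation over $(S_h, A_h) \sim \bPmarg{h}_{\pi,\startstate}$, and subtracting gives $W_h - W_{h-1} = \bE_{(S_h, A_h) \sim \bPmarg{h}_{\pi,\startstate}}[\,q^{\bar\pi}(S_h, A_h) - v^{\bar\pi}(S_h)\,]$. Summing over $h = 1, \dots, H$ yields the claim.

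The only mildly delicate point — the "main obstacle," although it is routine — is the conditioning bookkeeping: one must verify that $W_h$ and $W_{h-1}$ genuinely share the same joint law on the first $h-1$ stages and on $S_h$, so the difference collapses to a single-stage term, and that the terminal stage $H+1$ contributes nothing, which is where the assumption that $\mathcal{R}(\termstate,\cdot) = \delta_0$ is used. Alternatively, this is precisely Lemma 3.2 of \citet{cai2020provably}, so one may simply invoke it; I include the telescoping argument for completeness.
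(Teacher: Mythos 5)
Your proposal is correct. The paper does not prove this lemma at all --- it simply cites Lemma 3.2 of \citet{cai2020provably} --- so there is no in-paper argument to compare against; your telescoping hybrid-policy argument is the standard proof of the performance difference lemma and fills in the details soundly. In particular, you correctly handle the two points specific to this paper's conventions: the increments collapse to single-stage terms because the two adjacent hybrids share the joint law of the prefix through $S_h$, and the stage-$(H+1)$ term vanishes because the reward at $\termstate$ is deterministically zero, so $W_H = v^{\pi}(s_1)$ even though the return is defined as $R_{1:H+1}$.
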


\end{document}